\documentclass[runningheads]{llncs}

\usepackage{misc}
\usepackage[T1]{fontenc}
\usepackage{soul}
\usepackage{url}
\usepackage[hidelinks]{hyperref}
\usepackage[utf8]{inputenc}
\usepackage{graphicx}
\usepackage{amsmath}
\usepackage{booktabs}
\urlstyle{same}
\usepackage{xspace}

\usepackage{amssymb}
\usepackage{float}
\usepackage{stmaryrd}
\usepackage{enumitem}

\usepackage{amsfonts}
\usepackage{wasysym}
\usepackage[noend,linesnumbered]{algorithm2e}
\usepackage{cite}
\usepackage[small, labelfont = bf]{caption}
\usepackage{subcaption}
\usepackage{tabularx}
\usepackage{multirow, makecell}

\usepackage{tikz}
\usetikzlibrary{positioning,chains,arrows,arrows.meta,shadows,fadings,shapes,backgrounds,matrix,patterns,plotmarks,trees,mindmap,automata,fit,calc,decorations.text}
\usepackage{pgfplots, pgfplotstable}
\usepackage{mdframed}
\usepackage{thmtools}
\usepackage{thm-restate}

\usepackage{microtype}

\newcommand{\alc}{\mathcal{ALC}}

\newcommand{\oall}{\ensuremath{O_{\text{all}}}\xspace}

\allowdisplaybreaks

\newcommand*{\decproblem}[3]{
    \begin{center}
        \begin{tabularx}{0.9\linewidth}{rX}
            \multicolumn{2}{l}{\textbf{Problem:} \textsc{#1}}\\\hline
            \textsl{Given:}&#2\\
            \textsl{Question:}&#3
        \end{tabularx}
    \end{center}
}


\begin{document}

\title{SAT-Based Bounded Fitting for the Description~Logic \ALC}

\author{Maurice Funk\inst{1}\orcidID{0000-0003-1823-9370} \and
Jean Christoph Jung\inst{2}\orcidID{0000-0002-4159-2255} \and
Tom Voellmer\inst{2}\orcidID{0009-0006-3149-7066}}
\authorrunning{M. Funk et al.}
%
\institute{Leipzig University and ScaDS.AI, \email{mfunk@informatik.uni-leipzig.de} \and 
TU Dortmund University, \email{\{jean.jung, tom.voellmer\}@tu-dortmund.de}}

\maketitle

\begin{abstract}
Bounded fitting is a general paradigm for learning logical formulas from positive and negative data examples, that has received considerable interest recently. We investigate bounded fitting for the description logic $\ALC$ and its syntactic fragments. We show that the underlying size-restricted fitting problem is $\NPclass$-complete for all studied fragments, even in the special case of a single positive and a single negative example. By design, bounded fitting comes with probabilistic guarantees in Valiant's PAC learning framework. In contrast, we show that other classes of algorithms for learning $\ALC$ concepts do not provide such guarantees. Finally, we present an implementation of bounded fitting in \ALC and its fragments based on a SAT solver. We discuss optimizations and compare our implementation to other concept learning tools.

\keywords{Class Expression Learning \and Description Logic \and SAT solving}
\end{abstract}

\section{Introduction}
Learning \emph{class expressions} from given data examples is an important task when working with large knowledge bases~\cite{DBLP:series/ssw/Lehmann10,polconcept_learning}.
For the purpose of this paper, an \emph{example} is a pair $(\Imc,a)$ where $\Imc$ is a data instance, e.g., a database or a knowledge graph, and $a$ is some element in the instance. Moreover, a class expression $C$ \emph{fits} a set $P$ of \emph{positive examples} and a set $N$ of \emph{negative examples} if $\Imc\models C(a)$ for all $(\Imc,a)\in P$ and $\Imc\not\models C(a)$ for all $(\Imc,a)\in N$. We mention three applications. First, the fitting class expression may be used as an \emph{explanation} of the separation between good and bad scenarios, described by $P$ and $N$, respectively. For example, $P$ and $N$ could be data describing users who visited (resp., did not visit) a certain web page, and a fitting $C$ would explain the users' behavior based on their data. Second, under the classical \emph{query-by-example} paradigm~\cite{DBLP:conf/afips/Zloof75,DBLP:journals/is/Martins19},
a human user may \emph{reverse-engineer} a class expression by manually selecting elements they want to have returned ($P$) or not returned ($N$), and a system produces an expression satisfying the demands. Here, the intended use of the fitting class expression is as a query. Finally, an ontology engineer may seek a definition of some class $A$ occurring in the data instance, so they may request a class expression $C$ separating all elements $P$ satisfying $A$ from those that do not ($N$), and add $A\equiv C$ to the ontology.

In this paper, we study the problem of learning class expressions formulated in the description logic (DL) \ALC. $\ALC$ can be viewed as the conceptual basis of the very expressive description logic $\SROIQ$ that underlies the web ontology language OWL 2 DL~\cite{DBLP:conf/kr/HorrocksKS06,OWL2DLweb} in that $\ALC$ provides the central logical constructors that are also available in $\SROIQ$. The importance of finding fitting class expressions / description logic concepts has resulted in both foundational work~\cite{DBLP:conf/ijcai/FunkJLPW19,DBLP:journals/ml/LehmannH10,FJL-IJCAI21} and implemented systems. While most systems are based on 
heuristic search and refinement operators~\cite{DBLP:conf/www/HeindorfBDWGDN22,DBLP:journals/jmlr/TranDGM17,DBLP:journals/fgcs/RizzoFd20,DBLP:conf/www/BuhmannLWB18,DBLP:journals/apin/IannonePF07} or, more recently, also on neural techniques~\cite{DBLP:conf/ijcai/DemirN23,DBLP:conf/esws/KouagouHDN23}, we approach the problem via bounded fitting. \emph{Bounded fitting} is a general paradigm for fitting logical formulas to
positive and negative examples that has been investigated recently for a range of logics like linear temporal logic LTL~\cite{DBLP:conf/fmcad/NeiderG18,DBLP:conf/aips/CamachoM19},
computational tree logic CTL~\cite{DBLP:conf/ijcar/PommelletSS24}, and the description
logic \EL~\cite{DBLP:conf/ijcai/CateFJL23}. Algorithm~\ref{alg:boundedfitting} provides an abstract description of bounded fitting for a given logic $\Lmc$.
It should be clear that, if any fitting formula exists, bounded fitting always returns a fitting formula of minimal size, which is often a desirable property. From a practical perspective, human users typically prefer shorter, that is, simpler formulas in the applications described above. For instance, when class expressions are used as explanations, a shorter class expression might provide a more transparent explanation. From a theoretical perspective, this property makes bounded fitting an \emph{Occam algorithm}, which implies that it comes with probabilistic generalization guarantees in Valiant's probably approximately correct (PAC) learning framework~\cite{DBLP:journals/cacm/Valiant84,DBLP:journals/jacm/BlumerEHW89}. Intuitively, this means that bounded fitting needs only few examples to be able to generalize to unseen examples. A further advantage of bounded fitting is that it is \emph{complete}: whenever there is a fitting concept bounded fitting will find one.

\begin{algorithm}[t]
\caption{Bounded Fitting for abstract logic \Lmc.}\label{alg:boundedfitting}
   \KwIn{Positive examples $P$, negative examples $N$}
   \For {$k:=1,2,\ldots$}{
        \If{there is $\varphi\in \Lmc$ of size $k$ that fits $P,N$}{\Return $\varphi$}
   }
\end{algorithm}

The basic DL \ALC provides the logical constructors conjunction~$\sqcap$, disjunction~$\sqcup$, negation $\neg$, existential restriction $\exists r$, and universal restriction $\forall r$ to build complex concepts. Motivated by the fact that, depending on the application, one may not need all concept constructors, fragments $\Lmc(O)$ of \ALC have been studied which allow only a subset $O\subseteq \{\sqcap,\sqcup,\neg,\exists,\forall\}$ of the available constructors. 
For instance, the mentioned DL \EL is defined by $\{\sqcap,\exists\}$ and the DL $\FLz$ is defined by $\{\sqcap,\forall\}$. Bounded fitting has been studied recently for \EL, and we extend this study here to all other syntactical fragments.

Our main contributions are as follows. First, we study the \emph{size-restricted fitting problem}: given positive examples $P$, negative examples $N$, and a size bound $k$, determine whether there is a concept of size at most $k$ that fits $P$ and $N$. Clearly, this is precisely the problem to be solved in Line~2 of Algorithm~\ref{alg:boundedfitting}. Our main result is that size-restricted fitting for $\Lmc(O)$ is \NPclass-complete for all $O$ that include at least $\exists$ or $\forall$. While the \NPclass upper bound is straightforward, the lower bound is more technical and rather strong. First, surprisingly, it already applies for inputs consisting of one positive and one negative example, and thus strengthens known hardness results for \EL~\cite{IJCAI23arxiv, DBLP:journals/ipl/CateFJL24}. Second, it uses only a fixed set of symbols in contrast to related hardness results for temporal logics~\cite{DBLP:conf/stacs/BordaisN025,DBLP:journals/corr/abs-2312-16336}.

Then, motivated by the ability of bounded fitting to generalize well from few examples, we discuss the generalization abilities of fitting algorithms for languages $\Lmc(O)$ in Valiant's PAC learning framework. We start with observing that 
under reasonable complexity theoretic assumptions, \emph{no} $\Lmc(O)$ admits an \emph{efficient} PAC learning algorithm. We then analyze the generalization ability of fitting algorithms that have favorable properties from a logical perspective in that they return fitting formulas that are most specific, most general, or of minimal quantifier depth among all fitting formulas. We show that, with one exception, all such algorithms are not \emph{sample-efficient}, and hence do not generalize well. Similar results have been obtained for $\EL$ concepts~\cite{DBLP:conf/ijcai/CateFJL23} and also for the related query language of conjunctive queries~\cite{DBLP:journals/sigmod/CateFJL23}.

Finally, we provide an implementation of bounded fitting for \ALC and its fragments, that relies on a SAT solver to solve size-restricted fitting in Line~2 of bounded fitting. We describe our implementation and present several improvements in the encoding which lead to significant speed-ups. We also present an \emph{approximation scheme}, which is an adaptation of the basic bounded fitting algorithm depicted in Algorithm~\ref{alg:boundedfitting} to be able to provide approximate solutions in case no perfect fitting was found.
We then compare our implementation to other concept learning tools from the literature, both in terms of achieved accuracy, size of the returned concepts, and generalization ability.

\section{Preliminaries}\label{sec:prelims}

We introduce syntax and semantics of the description
logic~\ALC~\cite{DL-Textbook}. Let \NC and \NR be mutually disjoint and countably infinite sets
of \emph{concept names} and \emph{role names}, respectively.
An \emph{\ALC concept} $C$ is defined according to the
syntax rule
\[
C, D ::= \bot\mid \top \mid A \mid \neg C \mid C \sqcap D \mid C\sqcup D\mid \forall r.C \mid \exists r.C
\]
where $A$ ranges over concept names and $r$ over role names. 
For any set $ O\subseteq\{\neg,\sqcap,\sqcup,\forall,\exists\} = \oall $, we define $\mathcal L(O)$ as the set of all $\alc$ concepts built only from connectives and quantifiers from $O$. We associate with each such set $O$ a \emph{dual} set $\overline{O}$ that contains precisely the symbols dual to those in $O$ where $\neg$ is dual to itself, $\sqcap$ is dual to $\sqcup$, and $\exists$ is dual to $\forall$. The size $\lVert C\rVert$ of an $\alc$ concept $C$ is defined as the number of nodes in its syntax tree, see below for an example. A \emph{signature} $\Sigma$ is a finite set of concept and role names. 
The signature \emph{of an $\ALC$ concept} $C$ is the set of concept and role names that occur in $C$.

The semantics of \ALC concepts is given as usual in terms of interpretations. An \emph{interpretation} is a pair $\Imc=(\Delta^\Imc,\cdot^\Imc)$ where $\Delta^\Imc$ is a non-empty set of elements, called \emph{domain}, and $\cdot^\Imc$ is an interpretation function that assigns a set $A^\Imc\subseteq \Delta^\Imc$ to every $A\in \NC$ and a binary relation $r^\Imc\subseteq \Delta^\Imc\times\Delta^\Imc$ to every $r\in \NR$. The \emph{extension} $C^\Imc$ of complex \ALC concepts $C$ is defined inductively by taking $\bot^\Imc=\emptyset$, $\top^\Imc=\Delta^\Imc$, $(\neg C)^\Imc=\Delta^\Imc\setminus C^\Imc$, $(C\sqcap D)^\Imc = C^\Imc\cap D^\Imc$, $(C\sqcup D)^\Imc = C^\Imc\cup D^\Imc$, and
\begin{align*}
    (\exists r.C)^\Imc & = \{d\in\Delta^\Imc\mid \text{there is $(d,e)\in r^\Imc$ with $e\in C^\Imc$}\} 
\end{align*}
and $(\forall r.C)^\Imc = (\neg \exists r.\neg C)^\Imc$.

We call an interpretation $\mathcal I$ \emph{finite}, if both $\Delta^{\mathcal I}$ and the set of all concept and role names that have a nonempty extension in $\mathcal I$ are finite. A finite interpretation $\mathcal I$ corresponds to a finite set of facts
\[\{A(a)\mid A\in\NC, a\in A^\Imc \}\cup\{r(a,b)\mid r\in\NR, (a,b)\in r^\Imc \},\]
and can hence be viewed as a database. Conversely, every database or finite set of facts corresponds to a finite interpretation. We define an example as a pair $(\mathcal I,a)$ consisting of a finite interpretation $\mathcal I$ and some domain element $a\in\Delta^{\mathcal I}$. Let $P$ and $N$ be sets of examples which we will refer to as \emph{positive} and \emph{negative} examples. We say that an \ALC concept $C$ fits $P,N$ if $a\in C^{\mathcal I}$ for each $(\mathcal I,a)\in C$ and $b\notin C^{\mathcal J}$ for each $(\mathcal J, b)\in N$. In the special case of $P=\{(\mathcal I,a_0)\}$ and $N=\{(\mathcal J, b_0)\}$ we just write that $C$ fits $(\mathcal I,a_0),(\mathcal J,b_0)$.

The \emph{size} of such an example $(\Imc,a)$, denoted $\lVert(\Imc,a)\rVert$, is $n+1$ where $n$ is the cardinality of $\Imc$, viewed as set of facts. As usual, we view interpretations as node and edge labeled graphs and use graph terminology to speak about them.

We investigate the following decision problem.
\decproblem{Size-RestrictedFitting for $\mathcal L$}{Sets $P,N$ of examples, integer $k$}{Is there an $\mathcal{L}$ concept $C$ that fits $P,N$ and has size $\lVert C\rVert\le k$?}
We study size-restricted fitting for syntactic fragments $\Lmc(O)$ of \ALC, 
assuming that $k$ is given in unary, which is a natural choice in the context of bounded fitting. 
If we drop the size restriction $k$ from the input, we obtain the \emph{fitting problem for \Lmc}.

\begin{example}
As an example for size-restricted fitting, consider the two positive examples $P=\{(\Imc,a_1),(\Imc,a_2)\}$ and one negative example $N=\{(\Jmc,b)\}$ depicted in Figure~\ref{fig:ex_size_restricted_fitting}. Then $P,N,k=4$ is a yes-instance of size-restricted fitting for \ALC with witnessing fitting concept $C=\forall r.(A\sqcup B)$ with $\lVert \forall r.(A\sqcup B)\rVert=4$, see the syntax tree depicted in the right of Figure~\ref{fig:ex_size_restricted_fitting}. Indeed, we have $a_1,a_2\in C^\Imc$ but $b\notin C^\Jmc$. It can also be verified, that it is the minimal $k$ for which there exists a fitting concept, and that there is no fitting $\Lmc(\{\exists,\sqcap\})$ concept at all, hence $P,N$ is a no-instance for the fitting problem for $\Lmc(\{\exists,\sqcap\})$.\hfill$\dashv$ 
\end{example}

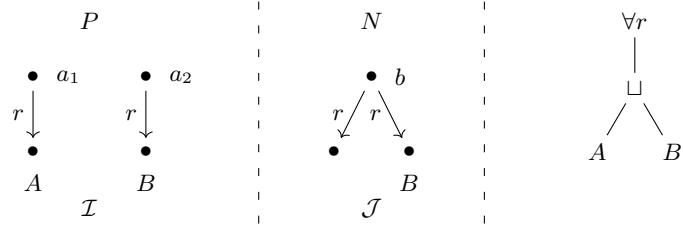
\begin{figure}[t]
    \centering
    \begin{tikzpicture}
            \node (P) at (0.75,0.75) {$P$};
            \node (N) at (4.5,0.75) {$N$};
            
            \node (P) at (0.75,-1.75) {$\Imc$};
            \node (N) at (4.5,-1.75) {$\Jmc$};
     
            \node[label={right:$a_1$}] (11) at (0,0) {$\bullet$};
            \node[label = {below:$A$}] (12) at (0,-1) {$\bullet$};
    
            \node[label={right:$a_2$}] (21) at (1.5,0) {$\bullet$};
            \node[label = {below:$B$}] (22) at (1.5,-1) {$\bullet$};
    
            \draw[loosely dashed] (3,1) -- (3,-2);
    
            \node[label={right:$b$}] (31) at (4.5,0) {$\bullet$};
            \node (32) at (4,-1) {$\bullet$};
            \node[label = {below:$B$}] (33) at (5,-1) {$\bullet$};

            \begin{scope}[every edge/.style={draw, ->}]
              \draw (11) edge node[left] {$r$} (12)
                    (21) edge node[left] {$r$} (22)
                    (31) edge node[left] {$r$} (32)
                    (31) edge node[left] {$r$} (33)
              ;
            \end{scope}
            
            \draw[loosely dashed] (6,1) -- (6,-2);
            
            \node (x1) at (8,0.75) {$\forall r$};
            \node (x2) at (8,-0.15) {$\sqcup$};
            \node (x3) at (7.5,-1) {$A$};
            \node (x4) at (8.5,-1) {$B$};
            
            \begin{scope}[every edge/.style={draw, --}]
              \draw (x1) -- (x2)
                    (x2) -- (x3)
                    (x2) -- (x4)
              ;
            \end{scope}
             
          \end{tikzpicture}        
    \caption{Example for size-restricted fitting}
    \label{fig:ex_size_restricted_fitting}
\end{figure}

To transfer results between dual fragments of \ALC, we rely on an auxiliary lemma that relates fitting in $\Lmc(O)$ to fitting in $\Lmc(\overline O)$. To formulate the lemma, we 
define a \emph{duality} mapping $\overline{\,\cdot\,}$ for \ALC concepts, interpretations and sets of examples. 
For an \ALC concept $C$, the dual concept $\overline C$ is obtained from $C$ by replacing $\top$ with $\bot$ and vice versa, replacing $\sqcap$ with $\sqcup$ and vice versa, and replacing $\exists$ with $\forall$ and vice versa. 
Moreover, for an interpretation $\Imc$ and a signature $\Sigma$, the interpretation $\overline{\Imc}_\Sigma$ is obtained by complementing the interpretation of the concept names in $\Sigma$, that is, $\Delta^{\overline \Imc_\Sigma}=\Delta^\Imc$, 
\[A^{\overline\Imc_\Sigma}=\begin{cases} \Delta^\Imc\setminus A^\Imc & \text{ if $A\in\NC\cap \Sigma$,}\\ A^{\Imc} & \text{ if $A\in\NC\setminus \Sigma$}, \end{cases}\]
and keeping the interpretation of the role names, that is, $r^{\overline \Imc_\Sigma}=r^\Imc$, for all $r\in \NR$.  One can then show inductively that the following duality is satisfied
for all signatures $\Sigma$, $\ALC$ concepts $C$ with signature contained in $\Sigma$, interpretations $\Imc$, and $a\in \Delta^\Imc$: 
\begin{equation}\label{eq:duality}
 a\in C^{\Imc}\quad \text{if and only if}\quad a\notin \overline C^{\overline \Imc_\Sigma}.
\end{equation}
For a set of examples $\mathit{Ex}$ and signature $\Sigma$, we define the set $\overline{\mathit{Ex}}_\Sigma = \{ (\overline \Imc_\Sigma,a)\mid (\Imc,a)\in \mathit{Ex}\}$. 
Duality~\eqref{eq:duality} is used to relate fitting in dual languages as follows:
\begin{restatable}{lemma}{fittingdual}
\label{lem:fitting_dual}
  For any signature $\Sigma$, all sets of examples $P$ and $N$ and $\alc$ concepts $C$ with signature contained in $\Sigma$, $C$ fits $P,N$ if and only if $ \overline{C}$ fits $\overline N_\Sigma,\overline P_\Sigma$.
\end{restatable}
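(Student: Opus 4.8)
The plan is to unwind the definition of \emph{fits} on both sides of the claimed equivalence and then apply the duality~\eqref{eq:duality} example by example. Recall that $C$ fits $P,N$ means (i) $a\in C^\Imc$ for every $(\Imc,a)\in P$, and (ii) $b\notin C^\Jmc$ for every $(\Jmc,b)\in N$. Unwinding the right-hand side and using $\overline N_\Sigma=\{(\overline\Jmc_\Sigma,b)\mid(\Jmc,b)\in N\}$ and $\overline P_\Sigma=\{(\overline\Imc_\Sigma,a)\mid(\Imc,a)\in P\}$, the statement ``$\overline C$ fits $\overline N_\Sigma,\overline P_\Sigma$'' means (i$'$) $b\in \overline C^{\,\overline\Jmc_\Sigma}$ for every $(\Jmc,b)\in N$, and (ii$'$) $a\notin \overline C^{\,\overline\Imc_\Sigma}$ for every $(\Imc,a)\in P$.

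Next I would note that $\overline C$ has the same signature as $C$, since the concept-duality map only swaps $\top/\bot$, $\sqcap/\sqcup$ and $\exists/\forall$ and leaves all concept and role names in place; in particular the signature of $C$ (equivalently, of $\overline C$) is contained in $\Sigma$, so~\eqref{eq:duality} applies. Instantiating~\eqref{eq:duality} at each negative example $(\Jmc,b)\in N$ yields $b\in\overline C^{\,\overline\Jmc_\Sigma}$ iff $b\notin C^\Jmc$, i.e.\ (i$'$) is equivalent to (ii); instantiating it at each positive example $(\Imc,a)\in P$ yields $a\in C^\Imc$ iff $a\notin\overline C^{\,\overline\Imc_\Sigma}$, i.e.\ (i) is equivalent to (ii$'$). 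Since~\eqref{eq:duality} is a biconditional, both directions of the lemma follow simultaneously: $C$ fits $P,N$ (i.e.\ (i) and (ii)) holds iff $\overline C$ fits $\overline N_\Sigma,\overline P_\Sigma$ (i.e.\ (ii$'$) and (i$'$)).

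There is essentially no real obstacle here: all the content sits in~\eqref{eq:duality}, whose own (omitted) proof is the routine structural induction on $C$ --- the base case $A\in\NC\cap\Sigma$ uses $A^{\overline\Imc_\Sigma}=\Delta^\Imc\setminus A^\Imc$, the base case $A\in\NC\setminus\Sigma$ is immediate, and the inductive cases for $\neg$, $\sqcap/\sqcup$ and $\exists/\forall$ are the standard De Morgan-style dualities, using $r^{\overline\Imc_\Sigma}=r^\Imc$ for the quantifier cases. Once~\eqref{eq:duality} is in hand, Lemma~\ref{lem:fitting_dual} is just the bookkeeping observation above that dualizing concepts swaps the roles of the positive and the negative example set.
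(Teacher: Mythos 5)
Your proof is correct and matches the paper's argument: both unwind the definition of fitting and apply the duality~\eqref{eq:duality} (Lemma~\ref{lem_dual} in the appendix) example by example, with the dualization swapping the roles of $P$ and $N$. The only small addition you make is the explicit (and correct) remark that $\overline C$ has the same signature as $C$, which the paper leaves implicit.
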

Since for given sets of examples $P,N$ there is a fitting \Lmc concept of size $k$ if and only if there is a fitting \Lmc concept of size $k$ which uses only concept and role names that occur in $P,N$, Lemma~\ref{lem:fitting_dual} implies the following:
\begin{corollary}\label{cor:reduction}
For every $O\subseteq \oall$, there is a polynomial time reduction of size-restricted fitting for $\Lmc(O)$ to size-restricted fitting for $\Lmc(\overline O)$.
\end{corollary}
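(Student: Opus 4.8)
The plan is to read off the reduction directly from the duality of Lemma~\ref{lem:fitting_dual}, applied with the signature of the given examples. Concretely, given an instance $(P,N,k)$ of size-restricted fitting for $\Lmc(O)$, I would let $\Sigma$ be the (finite, and obviously polynomially computable) set of all concept and role names that occur in $P\cup N$, and output the instance $(\overline N_\Sigma,\overline P_\Sigma,k)$ of size-restricted fitting for $\Lmc(\overline O)$. Here each dual interpretation $\overline\Imc_\Sigma$ is obtained from the corresponding $\Imc$ by complementing, within $\Delta^\Imc$, the extension $A^\Imc$ of each $A\in\Sigma$ (and keeping roles), which takes polynomial time; the bound $k$ is copied verbatim. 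So the map is a polynomial-time reduction, and only its correctness remains to be argued.

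For correctness I would first record three elementary facts about the duality mapping $\overline{\,\cdot\,}$ on concepts: (i) $\lVert\overline C\rVert=\lVert C\rVert$, since $\overline{\,\cdot\,}$ merely relabels the nodes of the syntax tree; (ii) $C\in\Lmc(O)$ if and only if $\overline C\in\Lmc(\overline O)$, since the relabeling replaces each connective/quantifier by its dual and leaves $\neg$ and concept names untouched, and $\overline{\overline O}=O$; and (iii) $\overline{\overline C}=C$. Together these let me move freely between $\Lmc(O)$ concepts of size at most $k$ and $\Lmc(\overline O)$ concepts of size at most $k$, and in particular the signature of $\overline C$ equals the signature of $C$.

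The two directions then follow from Lemma~\ref{lem:fitting_dual}. If $(P,N,k)$ is a yes-instance, pick a fitting $\Lmc(O)$ concept $C$ of size at most $k$; by the observation stated just before the corollary we may assume the signature of $C$ is contained in $\Sigma$, so Lemma~\ref{lem:fitting_dual} applies and yields that $\overline C$ fits $\overline N_\Sigma,\overline P_\Sigma$, while (i) and (ii) give $\overline C\in\Lmc(\overline O)$ and $\lVert\overline C\rVert\le k$. Conversely, if $(\overline N_\Sigma,\overline P_\Sigma,k)$ is a yes-instance, pick a fitting $\Lmc(\overline O)$ concept $D$ of size at most $k$; again by the same observation we may assume its signature is contained in $\Sigma$, hence so is the signature of $\overline C$ for $C:=\overline D$, and applying Lemma~\ref{lem:fitting_dual} to $\overline D$ together with (iii) gives that $\overline D$ fits $P,N$, with $\overline D\in\Lmc(O)$ and $\lVert\overline D\rVert\le k$ by (i) and (ii).

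The only point that needs care is the signature bookkeeping: $\Sigma$ must be chosen large enough that Lemma~\ref{lem:fitting_dual} is applicable to the concepts arising in both directions (which is exactly why the restriction-to-occurring-names observation is essential), and one should check that passing from $\Imc$ to $\overline\Imc_\Sigma$ introduces no name outside $\Sigma$, so that the very same $\Sigma$ serves for the converse direction. Everything else is a routine unfolding of definitions.
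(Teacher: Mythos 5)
Your proposal is correct and follows the same route the paper takes: the paper states the corollary as an immediate consequence of the observation that fitting concepts may be restricted to the signature of the examples together with Lemma~\ref{lem:fitting_dual}, and your argument simply unpacks this into the explicit map $(P,N,k)\mapsto(\overline N_\Sigma,\overline P_\Sigma,k)$ with $\Sigma$ the signature of $P\cup N$. Your careful attention to the signature bookkeeping in both directions (in particular that $\overline N_\Sigma,\overline P_\Sigma$ introduce no names outside $\Sigma$, so the WLOG observation is applicable in the converse direction as well) fills in exactly the details the paper elides.
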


\section{Complexity of Size-Restricted Fitting}\label{sec:complexity}

We determine the complexity of size-restricted fitting as the decision problem underlying bounded fitting in all syntactic fragments $\mathcal L(O)$ where $O\subseteq \oall$ and $O\cap\{\forall,\exists\}\neq\emptyset$. Our main result states that the problem is \NPclass-complete, for any such $O$. While the proof of the upper bound is straightforward, the \NPclass-hardness proof is rather involved. Notably, it holds already for single positive and negative examples, which came as a surprise to us.

\begin{theorem}\label{thm:main1}
  For every $O\subseteq \oall$ such that $O$ contains at least one of $\exists$ and $\forall$, size-restricted fitting for $\Lmc(O)$ is \NPclass-complete.   
\end{theorem}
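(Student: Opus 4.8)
The plan splits into the (straightforward) upper bound and the (involved) lower bound.

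For membership in \NPclass: since $k$ is given in unary, a concept of size at most $k$ has a syntax tree of polynomial size, and we may assume it uses only the polynomially many concept and role names occurring in $P$ and $N$. So a nondeterministic algorithm guesses such a syntax tree and then verifies in polynomial time that the resulting concept $C$ fits $P,N$, by computing the extension $C^{\Imc}$ bottom-up for each example $(\Imc,a)\in P\cup N$ and checking, for each example, the required (non-)membership.

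For hardness, I would first dispose of the fragments whose only quantifier is $\forall$: if $\forall\in O$ and $\exists\notin O$, then $\exists\in\overline O$, and Corollary~\ref{cor:reduction}, instantiated with $\overline O$ in place of $O$, yields a polynomial-time reduction of size-restricted fitting for $\Lmc(\overline O)$ to size-restricted fitting for $\Lmc(\overline{\overline O})=\Lmc(O)$. Hence it suffices to prove \NPclass-hardness for every $O$ with $\exists\in O$, and I would do so by a single reduction from \textsc{3-SAT} that is correct for all of them at once (in particular for the weakest, $\Lmc(\{\exists\})$, and the strongest, $\ALC$). Given $\varphi$ over variables $x_1,\dots,x_n$ with clauses $c_1,\dots,c_m$, the reduction builds one positive example $(\Imc,a)$, one negative example $(\Jmc,b)$ over a fixed signature with a constant number of role names, and a bound $k$ of order $n$. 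I would design $\Imc$ so that from $a$ \emph{every} word $w$ of length $n$ over the ``truth-value'' roles labels a path; then every concept $\exists r_{v_1}.\cdots\exists r_{v_n}.\top$ holds at $a$, no matter what $w$ is. I would design $\Jmc$ as the union of a ``prefix'' gadget that realises at $b$ exactly the words of length $<n$, together with one gadget $G_c$ per clause $c$ realising at $b$ exactly the length-$n$ words that \emph{falsify} $c$. Tuning $k$ to the size of an assignment path of length $n$, a concept of size at most $k$ should fit $(\Imc,a),(\Jmc,b)$ essentially iff it is such an assignment path whose word encodes a satisfying assignment of $\varphi$; as this concept lies in $\Lmc(\{\exists\})$, correctness follows uniformly for every $\Lmc(O)$ with $\exists\in O$, which also strengthens the known \NPclass-hardness results for $\el$ to a single positive and a single negative example over a fixed signature.

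The main obstacle is the ``only if'' direction for no-instances: when $\varphi$ is unsatisfiable, \emph{no} $\ALC$ concept of size at most $k$ may fit, i.e.\ the constructors $\sqcap$, $\sqcup$, $\neg$, $\forall$ must not provide a shortcut. I plan a two-step argument. First, I would arrange that $(\Imc,a)$ and $(\Jmc,b)$ satisfy exactly the same $\ALC$ concepts of quantifier depth $<n$ (an $(n-1)$-bisimilarity between $a$ and $b$); then every fitting concept has quantifier depth at least $n$, and since $k$ is of order $n$ the size budget leaves room only for a ``modal path'' $Q_1.\cdots Q_n.X$ of depth exactly $n$, with $Q_i\in\{\exists r,\forall r\}$ and $X\in\{\top,\bot\}$ -- no $\sqcap$, $\sqcup$ or $\neg$ can occur at all. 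Second, I would kill the non-existential modal paths: if every element reachable from $a$ within $n$ steps has a successor for every role in the signature, then a modal path containing a $\forall$ or ending in $\bot$ is false at $a$, so a fitting concept must be a purely existential assignment path, and the claim reduces to the automata-style statement that a length-$n$ word distinguishes $a$ in $\Imc$ from $b$ in $\Jmc$ iff it satisfies all clauses of $\varphi$. I expect essentially all the difficulty to live here: designing the clause and prefix gadgets so that the bisimilarity and totality properties hold precisely, preventing short concepts from ``detecting'' the thin parts of the clause gadgets (presumably via padding the gadgets behind shared stems and/or marker concept names so that no successors are missing), and making the size bound $k$ and the bookkeeping line up -- all while staying within a single positive and a single negative example over a fixed signature, which is exactly what makes the bound strong.
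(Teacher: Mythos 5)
Your upper bound argument (guess a polynomial-size syntax tree over the occurring symbols, evaluate bottom-up) matches the paper exactly. The duality reduction from the $\forall\in O$ case to the $\exists\in O$ case via Corollary~\ref{cor:reduction} is also exactly the paper's strategy. The divergence is in the single-example hardness reduction: the paper reduces from \textsc{Hitting Set}, you reduce from \textsc{3-SAT}. The difference is not cosmetic --- your plan has a genuine gap in the ``no-instance'' direction that the paper's construction is specifically engineered to avoid.

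The gap is the claim that, under totality, ``a modal path containing a $\forall$ or ending in $\bot$ is false at $a$.'' This is simply false: $\forall r.\top$ is true everywhere. More damagingly, consider the concrete concept
$\forall r_0.\underbrace{\exists r_0.\cdots.\exists r_0}_{n-1}.\top$, a modal path of depth $n$ and size $n+1$. If $a$ is made total (e.g., self-loops on all roles), it satisfies this concept. On the other hand, your $\Jmc$ must contain a branch from $b$ that realizes all words of length $<n$ but has some depth-$(n-1)$ dead-end (otherwise $b$ realizes every length-$n$ word and the ``yes-instance'' direction dies). Pick any $r_0$-successor $b'$ of $b$ in that branch; $b'$ has no $r_0$-path of length $n-1$, so $b$ fails the $\forall r_0.\ldots$ concept. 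Hence this concept fits $(\Imc,a),(\Jmc,b)$ \emph{regardless of whether $\varphi$ is satisfiable}, and the reduction breaks. The root cause is that totality alone does not neutralize $\forall$: the missing ingredient is \emph{functionality}. The paper arranges that below the roots every element has \emph{exactly one} $r$-successor and \emph{exactly one} $s$-successor (the sink $c$ soaks up spare edges), so $\exists$ and $\forall$ coincide on all subconcepts there; the root itself is handled separately by a subinterpretation argument (a fitting $\forall r.D$ at $a$ would force $b\in(\forall r.D)^\Jmc$ because $b$'s successors are a subset of $a$'s with identical substructure). Your construction cannot be made simultaneously total, functional, and realize from $b$ exactly the falsifying length-$n$ words with a single pair of examples --- encoding ``all falsifying words'' functionally below $b$ as a single tree needs exponentially many paths, while merging into a DAG or adding a separate prefix gadget reintroduces branching at some level and thus the $\forall$ leak. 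The paper's \textsc{Hitting Set} reduction circumvents this by not encoding assignments as length-$n$ words at all; instead the fitting concept is a single path of variable length $n+k+2$ whose ``detours'' encode the chosen hitting set, and the instance stays functional below the roots by design. I would encourage you to abandon the $(n-1)$-bisimilarity plus totality plan and instead look for a construction where functionality holds below the roots, or adopt the paper's variable-budget path encoding.
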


Containment in $\NPclass$ follows from a standard guess-and-check argument. Given $P,N$ and $k$, the algorithm non-deterministically picks an $\Lmc(O)$ concept $C$ of size at most $k$ and 
verifies (in polynomial time) that $a\in C^{\mathcal I}$ for all $(\mathcal I,a)\in P$ and $b\notin C^{\mathcal J}$ for all $(\mathcal J,b)\in N$. 
Showing hardness is more involved and proceeds in two steps. First, with the following proposition, we show \NPclass-hardness for any $O$ with $\exists \in O$.
Then, Corollary~\ref{cor:reduction} 
implies \NPclass-hardness for any $O$ with $\forall \in O$.
    
    \begin{restatable}{proposition}{proplower}
        \label{prop:technplower}
        Size-restricted fitting for $\Lmc(O)$ is \NPclass-hard for any $O\subseteq \oall$ with $\exists \in O$. This already holds if only a single positive and a single negative example are allowed, and over a signature consisting of two role names and one concept name.
    \end{restatable}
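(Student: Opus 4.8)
The plan is to reduce from $3$-SAT. Given a formula $\varphi$ over variables $x_1,\dots,x_n$ with clauses $c_1,\dots,c_m$, I would build a single positive example $(\Imc,a_0)$ and a single negative example $(\Jmc,b_0)$ over the signature $\{r_0,r_1,A\}$, together with the bound $k=n+1$, such that $\varphi$ is satisfiable iff there is an $\Lmc(O)$ concept of size at most $n+1$ that fits. The guiding idea is that $r_0,r_1$ encode truth values, so a word $w\in\{0,1\}^n$ denotes a candidate assignment and the purely existential chain $\exists r_{w_1}.\cdots.\exists r_{w_n}.A$ (a concept of $\Lmc(\{\exists\})$ of size exactly $n+1$) tests it.

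For the positive example I would take a ``spine'' $a_0=u_0,u_1,\dots,u_n$ with $u_n\in A^{\Imc}$ and with $u_{j-1}$ connected to $u_j$ by both $r_0$ and $r_1$, plus a sink $z\notin A^{\Imc}$ that carries $r_0$- and $r_1$-self-loops and is an $r_0$- and an $r_1$-successor of every node (including $u_n$ and $z$ itself). Then $a_0\in(\exists r_{w_1}.\cdots.\exists r_{w_n}.A)^{\Imc}$ for \emph{every} $w\in\{0,1\}^n$ (walk along the spine), while $z$ makes all transitions total and ``traps'' universal restrictions: any $\forall r_i.D$ at any node forces $D$ at $z$, from which $A$ is unreachable. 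For the negative example I would encode $\varphi$ as a small NFA over $\{r_0,r_1\}$: the union (sharing start state $b_0$) of per-clause gadgets $N_i$ that, on reading $w\in\{0,1\}^n$, reach an $A$-state exactly when $w$ falsifies $c_i$ --- this only pins down the three literal positions of $c_i$, so $N_i$ has $O(n)$ states --- completed to total transitions with one non-$A$ sink and with all $A$-states placed at distance exactly $n$ from $b_0$. Then $b_0\in(\exists r_{w_1}.\cdots.\exists r_{w_n}.A)^{\Jmc}$ iff $w$ falsifies some clause, i.e.\ iff $w$ does \emph{not} satisfy $\varphi$. The construction is clearly polynomial and $k$ is small, so this is a legitimate reduction even with $k$ in unary.

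Completeness is immediate: a satisfying assignment $w$ yields the fitting concept $\exists r_{w_1}.\cdots.\exists r_{w_n}.A\in\Lmc(\{\exists\})\subseteq\Lmc(O)$ of size $n+1$. For soundness I would argue in four steps. (i) In both pointed interpretations every node within distance $n-1$ of the root is non-$A$ and has an $r_0$- and an $r_1$-successor, so $(\Imc,a_0)$ and $(\Jmc,b_0)$ are $(n-1)$-bisimilar and hence agree on all $\ALC$ concepts of quantifier depth at most $n-1$; thus any fitting concept has quantifier depth at least $n$. (ii) A concept of quantifier depth $\ge n$ has $n$ quantifier nodes along a root-to-leaf path plus the leaf below them, so size $\ge n+1$; with size $\le n+1$ it must be exactly a chain $Q_1 r_{i_1}.\cdots.Q_n r_{i_n}.X$ with $Q_t\in\{\exists,\forall\}$, $i_t\in\{0,1\}$, $X\in\{\top,\bot,A\}$ --- the budget leaves no room for negation, conjunction, disjunction, or branching. (iii) By the analysis of $\Imc$ such a chain can fit only if all $Q_t=\exists$ and $X=A$: if $X=\top$ then $b_0$ satisfies it (transitions in $\Jmc$ are total), if $X=\bot$ then $a_0$ does not satisfy it (transitions in $\Imc$ are total), and if $X=A$ but some $Q_t=\forall$ then $a_0$ does not satisfy it (the outermost $\forall$ forces $A$ at $z$, which fails). (iv) For a surviving chain $\exists r_{w_1}.\cdots.\exists r_{w_n}.A$ with $w\in\{0,1\}^n$, fitting requires $b_0\notin(\cdot)^{\Jmc}$, which by construction of $\Jmc$ means exactly that $w$ satisfies $\varphi$. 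Since steps (i)--(iv) use only properties of $\Imc,\Jmc$ and the semantics of full $\ALC$, the reduction works uniformly for every $O$ with $\exists\in O$.

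The main obstacle is making the budget $k=n+1$ genuinely tight: the sink gadget in $\Imc$ must simultaneously neutralize every $\forall$ and every $\bot$-leaf, keep all transitions total, and not destroy the $(n-1)$-bisimilarity with $\Jmc$, all while $\Imc$ stays polynomially sized (which is why one uses a linear spine rather than a binary tree of assignments). Carefully verifying the $(n-1)$-bisimilarity, the ``chain-only'' consequence of the budget, and the fact that the per-clause NFA gadgets place their $A$-states at depth exactly $n$ are the steps that require the most attention.
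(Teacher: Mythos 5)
Your reduction is correct, but it takes a genuinely different route from the paper's. The paper reduces from \textsc{Hitting Set}: the negative example encodes the sets $S_1,\dots,S_m$ as $r$-paths with optional $s$-detours, the positive example additionally contains a ``universal'' path allowing every detour pattern, and a fitting concept of size $k+n+2$ corresponds to a hitting set of size $k$ via the positions where the concept takes an $\exists s.\exists s$ detour instead of $\exists r$. To show that a small fitting concept must actually have this shape, the paper carries out a multi-step normalization (first to $\exists r.D$ form, then eliminate $\forall$, then $\sqcup$, then $\sqcap$ and $\neg$, then handle non-NNF inputs), exploiting that $r,s$ are total functions away from the roots and that there is only one positive and one negative example.

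Your reduction is from 3-SAT: $\{0,1\}^n$-assignments are encoded as $\{r_0,r_1\}$-words, the positive example is a spine on which every word reaches $A$ at depth exactly $n$ together with a sink that absorbs $\forall$ and $\bot$, and the negative example is a union of per-clause DFA gadgets that reach $A$ at depth exactly $n$ precisely on falsifying words. Your bound $k=n+1$ is much tighter, and this is the key structural difference: instead of the paper's normalization chain, you get the ``pure $\exists$-chain'' shape essentially for free from a counting argument. The $(n-1)$-bisimilarity of $(\Imc,a_0)$ and $(\Jmc,b_0)$ forces quantifier depth $\ge n$, and with budget $n+1$ the only possible syntax tree is a quantifier chain over a single leaf; then the sink in $\Imc$ excludes $\forall$ and $\bot$, and totality in $\Jmc$ excludes $\top$. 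This is arguably cleaner because it bypasses the NNF case analysis and the distributivity arguments the paper needs (the paper's budget $k+n+2$ leaves slack that must be argued away). On the other hand, the paper's reduction gives a direct correspondence between the fitting concept's size overhead and the hitting set size, which makes the combinatorial content very explicit, while your reduction pushes the combinatorics into the NFA gadgets. Both yield the same signature bound (two role names, one concept name) and the single-positive/single-negative-example strengthening.

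One detail worth spelling out if you were to write this up: you should verify that the clause gadgets $N_i$, after completion to total transitions via the non-$A$ sink, really have no $A$-reachable node at depth $\ne n$, and that the shared start state $b_0$ branches nondeterministically into all gadgets without creating an unintended short $A$-path. These are routine but they are exactly where an off-by-one could break the $(n-1)$-bisimilarity claim that the whole argument hangs on.
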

    
\begin{proof}
  We give a reduction of \emph{Hitting Set} which is the problem to decide, given a  collection of sets $S=\{S_1,\dots,S_m\}$ and a size bound $k\in\mathbb N$, whether there exists a set $H$ such that $|H|\le k$ and $H\cap S_j\ne\emptyset$, for each $j\in\{1,\dots,m\}$.  The set $H$ is called a hitting set for $S$. 

Let $(S,k)$ be an instance of \emph{Hitting Set} with $S=\{S_1,\dots,S_m\}$ a collection of sets $S_i\subseteq\mathbb{N}$ and $k\in\mathbb N$. Assume without loss of generality that $\bigcup_{j=1}^m S_j = \{1,\dots,n\}$ for some $n\in\mathbb N$. We construct a positive example $(\mathcal I,a)$ and a negative example $(\mathcal J,b)$ in which we interpret only two role names $r,s$ and a concept name $A$ non-empty, and such that, for $P=\{(\Imc,a)\}$, $N=\{(\Jmc,b)\}$, and $k'=k+n+2$,
\begin{equation}
S\text{ has hitting set of size $k$}\quad\Leftrightarrow\quad\text{$P,N$ admit an $\Lmc(O)$ fitting of size $k'$}.\label{eq:correctness}
\end{equation}
We provide an informal description of $\Imc,\Jmc$; more details can be found in the appendix. The main components of both $\Imc$ and \Jmc are $r$-paths of length $n$, which end in an element satisfying $A$. To encode a subset $S'\subseteq \{1,\ldots,n\}$, such a path is extended with additional ``detours'' along role $s$. In more detail, suppose such a path consists of elements $b_0,\ldots,b_n$. We encode a subset $S'\subseteq \{1,\ldots,n\}$ as follows: 
\begin{itemize}

    \item For all $i\in\{1,\ldots,n\}\setminus S'$, there is an $s$-path of length $2$ from $b_i$ to $b_{i+1}$.

    \item For every $b_i$ and role $t\in\{r,s\}$: if $b_i$ does not have a $t$-successor, then make $c$ a $t$-successor of $b_i$ where $c$ is a \emph{sink}, which has both an $r$ and an $s$ self loop.
    
\end{itemize}
Now, $\Jmc$ consists of one path for each $S_j\in S$ and an additional root node $b$ which connects via $r$ to the initial nodes of all the paths. The interpretation \Imc is an extension of \Jmc with an additional component that encodes the empty set, and has root node $a$ which connects via $r$ to the initial nodes of all paths.  Figure~\ref{fig:np} depicts the constructed interpretations for the instance $(S,2)$ for $S=\{ \{1,3\}, \{2,4\}\}$. We omit the sink element $c$ in the figure for the sake of readability. For this input, $H=\{1,2\}$ is a hitting set of size $2$.
        
\begin{figure}[t]
            \begin{center}
            \begin{tikzpicture}
            \node[label = ${ }$] (a) at (-1,-2.0) {$a$};
            \node[label = ${ }$] (a0) at (0.0,-2.0) {$a_{0}$};
            \node[label = ${ }$] (a1) at (1.5,-2.0) {$a_{1}$};
            \node[label = ${ }$] (ap1) at (0.75,-1.0) {$a_{1}'$};
            \node[label = ${ }$] (a2) at (3.0,-2.0) {$a_{2}'$};
            \node[label = ${ }$] (ap2) at (2.25,-1.0) {$a_{2}'$};
            \node[label = ${ }$] (a3) at (4.5,-2.0) {$a_{3}$};
            \node[label = ${ }$] (ap3) at (3.75,-1.0) {$a_{3}'$};
            \node[label = ${A }$] (a4) at (6.0,-2.0) {$a_{4}$};
            \node[label = ${ }$] (ap4) at (5.25,-1.0) {$a_{4}'$};

            \draw[-Latex] (a0) edge node[above, sloped] {$ r$} (a1);
            \draw[-Latex] (a0) edge node[above, sloped] {$ s$} (ap1);
            \draw[-Latex] (a1) edge node[above, sloped] {$ r$} (a2);
            \draw[-Latex] (a1) edge node[above, sloped] {$ s$} (ap2);
            \draw[-Latex] (ap1) edge node[above, sloped] {$ s$} (a1); 
            \draw[-Latex] (a2) edge node[above, sloped] {$ r$} (a3);
            \draw[-Latex] (a2) edge node[above, sloped] {$ s$} (ap3);
            \draw[-Latex] (ap2) edge node[above, sloped] {$ s$} (a2);
            \draw[-Latex] (a3) edge node[above, sloped] {$ r$} (a4);
            \draw[-Latex] (a3) edge node[above, sloped] {$ s$} (ap4);
            \draw[-Latex] (ap3) edge node[above, sloped] {$ s$} (a3);
            \draw[-Latex] (ap4) edge node[above, sloped] {$ s$} (a4);

                    \node[label = ${ }$] (b) at (-2,-5.0) {$b$};    
                    \node[label = ${ }$] (b10) at (0,-4.0) {$b_{1,0}$};                    
                    \node[label = ${ }$] (b20) at (0,-6.0) {$b_{2,0}$};
  
                    \node[label = ${ }$] (bp21) at (0.75,-5.0) {$b_{2,1}'$};
                    \node[label = ${ }$] (b11) at (1.5,-4.0) {$b_{1,1}$};
                    \node[label = ${ }$] (b21) at (1.5,-6.0) {$b_{2,1}$};
                   
                    \node[label = ${ }$] (bp12) at (2.25,-3.0) {$b_{1,2}'$};
                    \node[label = ${ }$] (b12) at (3.0,-4.0) {$b_{1,2}$};
                    \node[label = ${ }$] (b22) at (3.0,-6.0) {$b_{2,2}$};
                    
                    \node[label = ${ }$] (bp23) at (3.75,-5.0) {$b_{2,3}'$};
                    \node[label = ${ }$] (b13) at (4.5,-4.0) {$b_{1,3}$};
                    \node[label = ${ }$] (b23) at (4.5,-6.0) {$b_{2,3}$};
                            
                    \node[label = ${ }$] (bp14) at (5.25,-3.0) {$b_{1,4}'$};
                    \node[label = ${A }$] (b14) at (6.0,-4.0) {$b_{1,4}$};
                    \node[label = ${A }$] (b24) at (6.0,-6.0) {$b_{2,4}$};
                                                
                    \draw[-Latex] (b) edge node[sloped, above left = 2pt] {$ r$} (b10); 
                    \draw[-Latex] (b) edge node[sloped, above left = 2pt] {$ r$} (b20); 
                    
                    \draw[-Latex] (b10) edge node[above] {$ r$} (b11); 
                  
                    \draw[-Latex] (bp21) edge node[above, sloped] {$ s$} (b21); 
                    \draw[-Latex] (b11) edge node[above, sloped] {$ r$} (b12); 
                    \draw[-Latex] (b11) edge node[above, sloped] {$ s$} (bp12); 
                    \draw[-Latex] (b20) edge node[above, sloped] {$ r$} (b21); 
                    \draw[-Latex] (b21) edge node[above, sloped] {$ r$} (b22); 
                                                        
                    \draw[-Latex] (bp12) edge node[above, sloped] {$ s$} (b12); 
                    \draw[-Latex] (b12) edge node[above, sloped] {$ r$} (b13); 
           
                    \draw[-Latex] (b22) edge node[above, sloped] {$ r$} (b23); 
                    \draw[-Latex] (b22) edge node[above, sloped] {$ s$} (bp23);              
                    \draw[-Latex] (bp23) edge node[above, sloped] {$ s$} (b23); 
                    \draw[-Latex] (b20) edge node[above, sloped] {$ s$} (bp21); 
                    \draw[-Latex] (b13) edge node[above, sloped] {$ r$} (b14); 
                    \draw[-Latex] (b13) edge node[above, sloped] {$ s$} (bp14); 
                    \draw[-Latex] (b23) edge node[above, sloped] {$ r$} (b24);                     
                   
                    \draw[-Latex] (bp14) edge node[above, sloped] {$ s$} (b14); 
                          
                    \draw[-Latex] (a) edge node[above, sloped] {$r$} (a0);
                    \draw[-Latex] (a) edge[bend right] node[above, sloped] {$r$} (b10);
                    \draw[-Latex] (a) edge[bend right] node[above, sloped] {$r$} (b20);

                    \node[right = of b14] (desc1) {$S_1=\{1,3\}$};
                    \node[right = of b24] (desc2) {$S_2=\{2,4\}$};
                    
                    \end{tikzpicture}
                    \end{center}
            \caption{Example of the interpretations $\Imc$ and $\Jmc$ used in the reduction.}
        \label{fig:np}
        \end{figure}

        We verify Equivalence~\eqref{eq:correctness} by showing that the following are equivalent: 

        \begin{enumerate}[label=(\roman*)]
        \item $S$ has a hitting set of size at most $k$; 
        \item there is an $\mathcal L(\{\exists\})$ concept fitting $P,N$ of size at most $k'$; 
        \item there is an $\ALC$ concept fitting $P,N$ of size at most $k'$.
        \end{enumerate}
        We start with (i)$\Rightarrow $(ii). 
        Let $H$ be a hitting set for $S=\{S_1,\dots,S_m\}$ with $|H| = k$. We inductively define concepts $C_i$, for $i=0,\ldots,n$, by setting $C_0=A$ and 
\begin{equation}\label{eq:paths}
       C_i = \begin{cases}
            \exists r.C_{i-1} & \text{if }n-i+1\notin H\\
            \exists s.\exists s.C_{i-1} & \text{otherwise},
        \end{cases}
\end{equation}
        for $1\leq i\le n$. Thus, $\lVert C_n \rVert = n + |H| +1$ and therefore $\lVert\exists r.C_n\rVert = n + k +2 = k'$. 
        We claim that $D=\exists r.C_n$ fits $P,N$. In the example in Figure~\ref{fig:np}, the concept constructed for the hitting set $H=\{1,2\}$ is $D=\exists r.\exists s.\exists s.\exists s.\exists s.\exists r.\exists r.A$, and it is easily verified that it fits. In general, any concept
        that can be obtained this way fits the positive example $(\Imc,a)$. Moreover, a concept $D$ constructed from a hitting set $H$ fits the negative example $(\Jmc,b)$ since the construction forces the path described by $D$ to ``leave'' the paths encoding the sets $S_j$ in $(\Jmc,b)$. This means that $D$ cannot be satisfied in $b$, since the sink $c$ does not satisfy $A$.

        The direction (ii)$\Rightarrow$(iii) is immediate. For
        (iii)$\Rightarrow$(i), we take advantage of the structure of
        $(\Imc,a)$ and $(\Jmc,b)$ to show that it is without loss of generality
        to assume that the smallest fitting is from $\Lmc(\{\exists\})$ and 
        has the shape from Equation~\eqref{eq:paths}. The main
        properties we exploit are that, except at the roots $a,b$, $r,s$ are interpreted as total functions (to replace $\forall r/s$ by $\exists r/s$) and that there are single positive and negative examples (to remove disjunction and conjunction, respectively).
        \qed
        
        \end{proof}

\section{Generalization}\label{sec:generalization}

In this section, we investigate the generalization ability of fitting algorithms with certain properties using the well-known PAC learning framework~\cite{DBLP:journals/cacm/Valiant84}, see also~\cite{anthony1997computational} for an introduction. Here, with a \emph{fitting algorithm for \Lmc}, we mean an algorithm $\mathcal A$ that receives as input $P,N$ and returns an $\Lmc$ concept $C$ that fits $P,N$ if such a concept exists. 
To define PAC learning, we need some  additional notation.

A \emph{probability distribution over examples} is a function $\mathbb P$ that maps examples $(\Imc,a)$ to a non-negative value $\mathbb P(\Imc,a)$ in a way such that 
$\sum_{(\Imc,a)}\mathbb P(\Imc,a)=1$. A \emph{signature} is a finite set $\Sigma$ of concept and role names. We say that $\mathbb P$ is \emph{over $\Sigma$} if in examples $(\Imc,a)$ 
with $\mathbb P(\Imc,a)>0$, the $\Imc$ contains only facts $A(a)$ and $r(a,b)$ with $A\in \Sigma$ and $r\in\Sigma$, respectively. A \emph{sample} of $\mathbb P$ is a set $\mathit{Ex}$ of examples drawn according to $\mathbb P$. \emph{Labeling $\mathit{Ex}$ according to a (target) concept $C_T$} means to split $\mathit{Ex}$ into $P,N$ by taking $P=\{(\Imc,a)\in Ex\mid a\in C_T^\Imc\}$ and $N=\mathit{Ex}\setminus P$. The \emph{error} of a (hypothesis) concept $C_H$ compared to $C_T$ is defined as the probability that $C_H$ and $C_T$ label a random example differently, that is: 
\[\mathrm{error}_\mathbb P(C_T,C_H)= \mathbb P(\{(\mathcal I,a)\mid a\in C_T^\Imc\mathop{\Delta} C_H^\Imc\})\]
where $\Delta$ denotes symmetric difference. We denote with $(0,1)$ the open interval between $0$ and $1$.

\begin{definition}
Let $O \subseteq \oall$.
    A fitting algorithm $\mathcal A$ for $\Lmc(O)$ is a PAC learning algorithm if there is a function $m \colon \mathbb{R}^2\times \mathbb N^3\rightarrow \mathbb N$, such that for all $\varepsilon,\delta\in(0,1)$, all signatures $\Sigma$, all $s,n\in\mathbb N$, all probability distributions $\mathbb P$ over examples $(\mathcal I,a)$ over $\Sigma$ with $\lVert(\mathcal I,a)\rVert\le s$, and all target concepts $C_T\in \Lmc(O)$ over $\Sigma$ with $\lVert C_T\rVert\le n$, algorithm $\mathcal A$ has the following property. If $\mathcal A$ receives $m(\frac 1\delta,\frac 1\varepsilon,|\Sigma|,s,n)$ examples $P,N$ sampled from $\mathbb P$ and labeled according to $C_T$, it returns with high probability at least $1-\delta$ a concept $C_H \in \Lmc(O)$ with small $\mathrm{error}_\mathbb P(C_T,C_H)\le\varepsilon$.

    A PAC learning algorithm is said to be \emph{sample-efficient} if $m$ is a polynomial, and $\emph{efficient}$ if it is sample-efficient and runs in polynomial time.
\end{definition}

Unfortunately, efficient PAC learnability is known to be elusive, and we confirm this to be the case here as well, under reasonable assumptions.
\begin{restatable}{theorem}{thmefficient}\label{thm:efficient} Let $O\subseteq \oall$. If there is an efficient PAC learning algorithm for $\Lmc(O)$, then:
\begin{enumerate}

    \item $\NPclass=\RPclass$, if $O$ contains at least one of $\exists/\forall$ and $\{\sqcap,\sqcup\}\not\subseteq O$;

    \item RSA encryption is polynomial time invertible, if $\{\sqcap,\sqcup\}\subseteq O$.
\end{enumerate}
\end{restatable}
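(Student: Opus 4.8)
The plan is to prove the two items by the two standard routes to hardness of learning: for item~1 a complexity-theoretic reduction routed through the size-restricted fitting problem, and for item~2 the cryptographic hardness of learning Boolean formulae.

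For item~1 I follow the Pitt--Valiant-style argument, as used for $\EL$ in~\cite{DBLP:conf/ijcai/CateFJL23}. First, an efficient PAC learning algorithm $\mathcal A$ for $\Lmc(O)$ yields a randomized polynomial-time procedure $\mathcal B$ that, given $P,N$ and a bound $k$ in unary, returns a fitting $\Lmc(O)$ concept with probability at least $\tfrac12$ whenever one of size at most $k$ exists: run $\mathcal A$ with the uniform distribution on the distinct examples of $P\cup N$, accuracy $\varepsilon<1/|P\cup N|$, confidence $\delta=\tfrac14$, and size parameter $n=k$, answering each of the polynomially many sample requests by a uniformly chosen example of $P\cup N$ together with its label (positive if in $P$, negative if in $N$), and finally check whether the returned concept actually fits $P,N$; a fit certifies a correct positive answer, and since $\mathcal A$ runs in polynomial time its output---hence any concept $\mathcal B$ could return---has size bounded by a fixed polynomial $q$ in the input. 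To derive $\NPclass=\RPclass$ it then suffices to give a polynomial-time reduction from an $\NPclass$-hard problem to size-restricted fitting for $\Lmc(O)$ with the extra property that \emph{no}-instances map to pairs $P,N$ admitting \emph{no} fitting $\Lmc(O)$ concept at all (in particular none of size $\le q$): running $\mathcal B$ on the reduced instance with $k$ the size bound from the reduction decides the $\NPclass$-hard problem in randomized polynomial time, accepting \emph{yes}-instances with probability $\ge\tfrac12$ and never accepting \emph{no}-instances.

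Constructing such a reduction is the main obstacle, since the reduction of Proposition~\ref{prop:technplower} (and the known $\EL$ hardness proofs) produce, on \emph{no}-instances, pairs that still admit some---merely oversized---fitting. I would reduce from $3$-SAT: from a formula $\varphi$ build a single positive and a single negative example over a small fixed signature, making every role total and functional by a universal sink element as in Proposition~\ref{prop:technplower}. Functionality lets $\sqcap$ and $\sqcup$ be pushed through the quantifier of $O$, and a single positive/single negative example then lets all top-level $\sqcap,\sqcup$ (and, by a symmetric treatment that pushes negations to the leaves, $\neg$) be discarded, so a smallest fitting may be assumed to be a ``path'' concept $\exists r_1.\cdots\exists r_m.L$ with $L$ a literal or $\top$; by Corollary~\ref{cor:reduction} the case $\forall\in O$ reduces to $\exists\in O$. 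It remains to design the two examples so that the positive one admits every truth assignment as such a path while the negative one admits exactly those assignments falsifying some clause of $\varphi$; then a fitting path concept exists iff $\varphi$ is satisfiable, if so one of size linear in $\varphi$ exists, and if $\varphi$ is unsatisfiable no fitting concept whatsoever exists. Verifying that the sink gadget really makes ``$\exists$-only path'' the canonical form and that the literal ending a path yields no extra separating power on unsatisfiable instances is where the technical work concentrates.

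For item~2, when $\{\sqcap,\sqcup\}\subseteq O$ the class $\Lmc(O)$ can simulate Boolean formulae, and I would invoke the cryptographic lower bounds of Kearns and Valiant: an efficient PAC learning algorithm for Boolean formulae makes RSA encryption polynomial-time invertible. Restricting to single-element examples---where roles are irrelevant and a concept computes a Boolean function of the truth values of the concept names at the element---$\Lmc(\{\sqcap,\sqcup\})$ already captures monotone Boolean formulae; and a polynomial-size Boolean formula $\psi$, brought to negation normal form, becomes a polynomial-size monotone $\Lmc(\{\sqcap,\sqcup\})$ concept over concept names $A_1,\dots,A_n,\bar A_1,\dots,\bar A_n$ that agrees with $\psi$ on every element satisfying ``$\bar A_i$ holds iff $A_i$ does not''. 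Since the hard target functions of Kearns and Valiant are derived from modular exponentiation, which lies in $\mathsf{TC}^0\subseteq\mathsf{NC}^1$ and hence is computed by polynomial-size formulae, composing their reduction with this simulation and restricting its sampling distributions to the ``diagonal'' assignments $\bar A_i=\neg A_i$ turns an efficient PAC learner for $\Lmc(O)$ into a polynomial-time RSA inverter. The only points to check here are that the diagonal restriction and the polynomial blow-up of the simulation preserve the quantitative sample-size and error guarantees the Kearns--Valiant argument needs, which is routine; if $\neg\in O$ the negation trick is unnecessary and the simulation is direct.
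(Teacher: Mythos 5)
For item~2 your argument is essentially the paper's: the paper simply notes that $\Lmc(O)$ contains all monotone Boolean formulas and invokes the Kearns--Valiant cryptographic non-learnability of these. Your extra explanation via the double-rail (diagonal) encoding is a correct way to spell out why monotone Boolean formulas suffice; it matches the intent of the paper.

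For item~1 you correctly set up the Pitt--Valiant route and you correctly isolate the crux: one needs an $\NPclass$-hardness reduction to the \emph{unrestricted} fitting problem (equivalently, one where no-instances yield instances with no fitting at all), and neither Proposition~\ref{prop:technplower} nor the known $\EL$ reductions have this property. But the reduction you then sketch is where the gap lies. You propose a reduction from $3$-SAT producing a \emph{single} positive and a \emph{single} negative example and relying on the functional-sink canonicalisation of Proposition~\ref{prop:technplower} to force fittings into ``$\exists$-only path'' form. This is left as a programme (``it remains to design the two examples \dots{}''), and it is not at all clear that it can be carried out: the negative root must offer, nondeterministically, one sub-structure per clause, so it is necessarily non-functional there, and the positive root must simultaneously ``offer'' every assignment path; balancing these against the canonical-form argument is precisely the nontrivial part of the construction, and you give no construction that makes it go through. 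The paper does something genuinely different: it reduces SAT to the (unrestricted) fitting problem for $\Lmc(\{\forall\})$, $\Lmc(\{\forall,\sqcap\})$, and $\Lmc(\{\forall,\exists,\sqcap\})$ by turning each clause into a DFA and hence one \emph{positive} example per clause, with one fixed self-looping negative example, and then obtains the remaining fragments of item~1 by the duality of Lemma~\ref{lem:fitting_dual}/Corollary~\ref{cor:reduction}. Using one example per clause is what makes the equivalence ``fitting exists iff $\varphi$ is satisfiable'' immediate, and it is exactly what your single-positive/single-negative plan struggles to reproduce. So as written, item~1 of your proof has a concrete missing ingredient: the hardness reduction for the unrestricted fitting problem.
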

Recall that $\RPclass$ is the class of problems for which probabilistic polynomial-time Turing machines exist that reject no-instances, and accept yes instances with probability $\frac{1}{2}$.
Both $\NPclass=\RPclass$ and polynomial time RSA decryption (without private key) are unlikely.  Point~1 was known for $\EL=\Lmc(\{\exists,\sqcap\})$~\cite{DBLP:journals/ipl/CateFJL24, DBLP:conf/ecml/Kietz93} and we show it to be the case for all other mentioned fragments. The proof relies on a well-known result due to Pitt and Valiant which roughly states that efficient PAC learnability implies (under suitable conditions) that the corresponding (not size restricted) fitting problem is in \RPclass~\cite{PittValiant88}.
To show Theorem~\ref{thm:efficient}, we thus consider the relevant fitting problems.
In the appendix, we show that the fitting problem for $\Lmc(\{\forall\})$, $\Lmc(\{\forall,\sqcap\})$, and $\Lmc(\{\forall,\exists,\sqcap\})$ is $\NPclass$-hard via reduction from SAT. Applying Lemma~\ref{lem:fitting_dual} then yields the same lower bounds for all other fragments from Point~1. The same strategy fails for $O$ as in Point~2 as the fitting problem for such $O$ is solvable in polynomial time. Instead, we observe that $\Lmc(O)$ contains all monotone Boolean formulas and exploit that these are not efficiently PAC learnable under cryptographic assumptions~\cite{DBLP:journals/jacm/KearnsV94}.

Given these negative results on \emph{efficient} PAC learning, we will analyze concrete fitting algorithms regarding their sample-efficiency. More precisely, we study algorithms that behave well from a logical perspective in that they return fitting concepts that are  most specific, most general, or of minimal quantifier depth among all fitting concepts. We denote with $C\sqsubseteq D$ the fact that $C^\Imc\subseteq D^\Imc$, for all interpretations~\Imc, and let $C\equiv D$ abbreviate $C\sqsubseteq D$ and $D\sqsubseteq C$. Recall that a concept $C$ is a \emph{most specific fitting} for $P,N$ if it fits $P,N$ and there is no concept $D$ with $D\sqsubseteq C$ and $D\not\equiv C$ that fits. Dually, a concept $C$ is a \emph{most general fitting} concept for $P,N$ if it fits $P,N$ and there exists no $D$ with $C\sqsubseteq D$ and $C\not\equiv D$ that fits as well. Finally, the \emph{quantifier depth} of a concept is the nesting depth of $\exists/\forall$. Unfortunately, in most cases, logically well-behaved algorithms are not sample-efficient. The following theorem summarizes the situation.
\begin{theorem}\label{thm:generalization}
Let $O\subseteq\oall$ be any set containing at least one of $\exists/\forall$ and at least one of $\sqcap/\sqcup$, and let $\Amc$ be a fitting algorithm for $\Lmc(O)$. Then \Amc is not a sample-efficient PAC learning algorithm, if:
\begin{enumerate}
    \item $O\neq\{\exists,\sqcup\}$ and \Amc always returns a most specific fitting if one exists; 
    
    \item $O\neq\{\forall,\sqcap\}$ and \Amc always returns a most general fitting if one exists;
    
    \item \Amc always returns a fitting of minimal quantifier depth if some fitting exists.
\end{enumerate}
\end{theorem}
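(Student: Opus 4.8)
The plan is to prove each of the three points by exhibiting, for every relevant $O$, an explicit family of learning instances (target concepts, sample distributions, signatures) on which the constrained algorithm is forced to output a concept whose size grows super-polynomially in the natural parameters, and then invoke the standard Occam-style lower bound: a class that is not of polynomial "effective dimension" cannot be PAC-learned with polynomially many samples. Concretely, for each point I would construct, for each $n$, a signature $\Sigma_n$ of size polynomial in $n$, a single target concept $C_T$ of size polynomial in $n$, and a distribution $\mathbb P_n$ supported on polynomially many small examples (each of size polynomial in $n$), such that the unique most specific fitting (resp.\ most general fitting, resp.\ minimal-quantifier-depth fitting) of the labeled sample is, with high probability, a concept of size exponential in $n$ — yet there is a different, small concept (namely $C_T$ itself, or something close to it) with error $0$. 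Since a sample-efficient PAC learner would, after polynomially many examples, have to return a hypothesis of error $\le\varepsilon$, and since in these instances every low-error hypothesis in $\Lmc(O)$ that satisfies the structural constraint must be large, one derives a contradiction with sample-efficiency via the known relationship between VC-dimension / consistency-size and sample complexity (see the cited \cite{DBLP:conf/ijcai/CateFJL23, DBLP:journals/sigmod/CateFJL23}).

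For Point~1 (most specific fitting, $O\neq\{\exists,\sqcup\}$): the idea is that when $\sqcap\in O$, a most specific fitting of a set of positive examples tends to be a "product/conjunction over the positives", which can blow up. A clean approach is to use the classical fact that the most specific \EL-concept fitting a set of tree-shaped positive examples can be of size exponential in the number of examples; I would lift a variant of this to $\Lmc(O)$ for every admissible $O$ containing $\sqcap$, and handle the case $O=\{\exists,\sqcap\}=\EL$ by citing the known result, then extend by adding the other constructors in a way that does not let the algorithm escape the blow-up (the extra constructors only enrich the language, so the most specific fitting can only become more, not less, constrained — though one must check it stays finite, which holds since $\Lmc(O)$-concepts over a finite signature bounded-size examples have only finitely many semantic equivalence classes up to the relevant quantifier depth). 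When $\sqcap\notin O$ but $\sqcup\in O$ and the constraint is still meaningful, i.e.\ $O\neq\{\exists,\sqcup\}$ so $O\supseteq\{\forall,\sqcup\}$ or $O$ contains $\neg$, one uses duality (Lemma~\ref{lem:fitting_dual}) or a direct $\forall$-based construction to get the same effect.

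For Point~2 (most general fitting, $O\neq\{\forall,\sqcap\}$): this is exactly dual to Point~1. Most general fittings behave with respect to negative examples the way most specific fittings behave with respect to positive examples, so applying the duality mapping $\overline{\,\cdot\,}$ and Lemma~\ref{lem:fitting_dual} transports every instance built for Point~1 with language $O$ into an instance for Point~2 with language $\overline O$; the excluded case $\{\exists,\sqcup\}$ for Point~1 maps to the excluded case $\overline{\{\exists,\sqcup\}}=\{\forall,\sqcap\}$ for Point~2, which is why the exceptions line up. So Point~2 follows from Point~1 with essentially no extra work beyond bookkeeping about which fragments contain which constructors.

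For Point~3 (minimal quantifier depth, no exceptions): here the obstruction is different — an algorithm that insists on minimal quantifier depth can be forced to "pay with width": minimizing nesting depth of $\exists/\forall$ can require an exponentially large Boolean combination at a fixed depth. I would design a target $C_T$ of, say, quantifier depth $1$ but whose only depth-$1$ fittings of the sample are huge (e.g.\ encode a function that at depth $1$ requires a DNF/CNF-type blowup over the successor-profiles appearing in the sample), while some higher-depth concept is small; the minimal-quantifier-depth algorithm is then trapped into the large depth-$1$ concept. One has to do this uniformly for all $O$ containing a quantifier and a binary Boolean connective, but since both a quantifier and (at least one of) $\sqcap/\sqcup$ are available, the construction has enough expressive power; duality again halves the cases.

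The main obstacle I expect is Point~3: unlike Points~1 and~2, it does not reduce to a known \EL / conjunctive-query result, so the hard work is inventing the right sample gadget that provably forces a width blow-up at minimal depth while admitting a small higher-depth fitting with zero error — and verifying the lower bound on the size of every minimal-depth consistent hypothesis, not just one. A secondary subtlety throughout is ensuring that the "most specific"/"most general" fitting actually exists (finiteness up to equivalence) in each fragment, since otherwise the hypothesis on the algorithm's behavior is vacuous; this is where one must be careful that enlarging $O$ does not, e.g., reintroduce $\top/\bot$ in a way that trivializes most general fittings.
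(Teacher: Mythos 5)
Your high-level plan does not establish what the theorem requires, because you argue about the \emph{size} of the hypothesis that the constrained algorithm returns, whereas the definition of (sample-efficient) PAC learning in the paper makes no reference to hypothesis size --- it only requires that the returned concept have error at most~$\varepsilon$ with probability at least $1-\delta$. There is no ``Occam-style lower bound'' in the direction you invoke: Occam's razor says that \emph{small} consistent hypotheses generalize well; it does not say that large consistent hypotheses generalize badly, and indeed a syntactically enormous concept can have error~$0$. Likewise, VC-dimension bounds constrain the sample complexity achievable by the \emph{best} algorithm over the hypothesis class, and so cannot be used to rule out PAC learning by one particular algorithm that happens to output large hypotheses. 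To prove the theorem you must directly exhibit, for each relevant $O$, a target $C_T$ and a distribution $\mathbb P$ such that with non-negligible probability the (unique or forced) most specific / most general / minimal-depth fitting of the sample misclassifies a constant fraction of the mass. This is what the paper's proofs actually do: for Point~3 the target has quantifier depth $n+1$, the sample can (w.h.p.) be fit at depth $n$, and every depth-$n$ concept then has error exactly $1/2$ because it cannot separate $\Imc_w$ from $\Jmc_w$; for Point~2 with $\{\exists,\sqcup\}$ the unique most general fitting is a large disjunction that labels all unseen negative examples $(\Imc_w,a_w)$ positively, giving error at least $1/6$. In both cases the operative fact is the error, not the size.

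A secondary, smaller issue: you treat Point~2 as following ``with essentially no extra work'' from Point~1 via duality. Duality (Lemma~\ref{lem:fitting_dual}) does halve the work, but since Point~1 for $\EL$-containing fragments is the only case inherited from prior work, the paper still has to prove Point~2 from scratch for $\{\exists,\sqcup\}$, $\{\exists,\sqcup,\sqcap\}$, and $\{\exists,\forall,\sqcap\}$ (the duals of these are precisely the $\EL$-free cases of Point~1), and the latter two require non-trivial new constructions using simulation duals and a carefully built positive example that ``switches off'' universal restrictions. Your sketch does not engage with the need to control where the constraint-satisfying fitting actually errs, which is the core difficulty in those cases.
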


It is interesting to note that Lemma~\ref{lem:fitting_dual} plays a central role here as well. 
Indeed, the dualization operation preserves quantifier depth and turns most specific fittings into most general ones, and vice versa. This means that it suffices to show only one of Point~1 or Point~2; the other one follows by dualization.
Theorem~\ref{thm:generalization} was known for $\Lmc(\{\sqcap,\exists\})=\EL$~\cite{DBLP:conf/ijcai/CateFJL23}. We begin with Point~3 and note that it suffices to show Lemma~\ref{lem:not_sample_efficient_depth} below and then apply Lemma~\ref{lem:fitting_dual}.
\begin{restatable}{lemma}
{lemnotsampleefficientdepth}\label{lem:not_sample_efficient_depth} For any $O$ with $\{\exists,\sqcup\} \subseteq O\subseteq \oall$, any fitting algorithm for $\mathcal L(O)$ that always returns a fitting of minimal quantifier depth if some fitting exists is not a sample-efficient PAC learning algorithm.
\end{restatable}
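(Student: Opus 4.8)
The standard route to showing that a class is not sample-efficiently PAC learnable is to exhibit, for each $n$, a target concept of size polynomial in $n$ together with a probability distribution $\mathbb{P}$ over small examples such that any $\Lmc(O)$ concept consistent with a polynomial-size sample drawn from $\mathbb{P}$ must already "essentially determine" an object of exponential size --- here, a minimal-quantifier-depth fitting forced to have depth exponential in $n$, so that its very description cannot be produced from polynomially many examples. Concretely, I would encode $n$-bit strings: let $\Sigma$ contain one role name $r$ and two concept names $A_0, A_1$ (marking a bit value at the relevant depth). For a string $w \in \{0,1\}^n$, let $\Imc_w$ be the $r$-path $d_0 \xrightarrow{r} d_1 \xrightarrow{r} \cdots \xrightarrow{r} d_n$ where $d_i \in A_{w_i}^{\Imc_w}$, pointed at $d_0$. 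The target concept $C_T^{(n)}$ should be the $\el$-style path concept $\exists r.(A_{w^\ast_1} \sqcap \exists r.(A_{w^\ast_2} \sqcap \cdots))$ of quantifier depth $n$ for one fixed "secret" string $w^\ast$; its size is linear in $n$, as required.

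**Choosing the distribution.** The distribution $\mathbb{P}$ should place all its weight on examples $(\Imc_w, d_0)$ for $w$ ranging over a suitable family, so that a polynomial sample reveals the labels of only polynomially many strings, but $\varepsilon$ is small only if the hypothesis agrees with $C_T^{(n)}$ on almost all of $\{0,1\}^n$ under $\mathbb{P}$. The key point is a structural lemma: any $\Lmc(O)$ concept of quantifier depth $d < n$ cannot distinguish two path-examples $\Imc_w, \Imc_{w'}$ that agree on their first $d$ coordinates (a straightforward bisimulation / Ehrenfeucht--Fraïssé argument, since depth-$d$ concepts only "see" $d$ steps down a path). Consequently, to correctly classify path-examples that differ only in late coordinates from $w^\ast$, a fitting concept must have quantifier depth close to $n$; and a minimal-quantifier-depth fitting, on a sample that only pins down a few coordinates of $w^\ast$, is forced either to have depth $n$ (so its output size is $\geq n$, and more importantly its behavior on the unpinned coordinates is arbitrary, giving error bounded away from $0$) --- this is exactly the mechanism used for $\el$ in~\cite{DBLP:conf/ijcai/CateFJL23}, which I would adapt. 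The use of $\sqcup \in O$ is what lets the adversary rule out the "cheap" hypotheses: with disjunction available, a learner might try $A_0 \sqcup A_1$-type shortcuts at shallow depth, and the distribution must be designed (e.g. by also including negative "trap" examples, or examples where a shallow concept is forced to commit) so that every low-depth consistent hypothesis has large error. I would follow the pattern of interleaving a few "coordinate-revealing" examples with a large mass of "ambiguous" examples whose correct label depends on a deep coordinate of $w^\ast$ that the sample does not reveal.

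**Putting it together.** The argument then runs: fix $n$, let $w^\ast$ be chosen uniformly at random from the ambiguous coordinates; for any fitting algorithm $\Amc$ returning a minimal-quantifier-depth fitting and any polynomial sample size $m(n)$, with constant probability the sample fails to reveal enough of $w^\ast$, so $\Amc$'s output --- being of minimal quantifier depth, hence "forced shallow" relative to the unrevealed part, or forced to guess --- has $\mathrm{error}_{\mathbb{P}}(C_T^{(n)}, \Amc(P,N)) > \varepsilon$ for a fixed constant $\varepsilon$, contradicting the PAC guarantee for $\delta$ small. All examples have size $O(n)$ and $|\Sigma| = 3$, so the sample complexity function $m$ would have to be non-polynomial in $n$ alone, which is the contradiction. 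Finally, once Lemma~\ref{lem:not_sample_efficient_depth} is established for $\{\exists,\sqcup\}\subseteq O$, Point~3 of Theorem~\ref{thm:generalization} for the dual fragments with $\{\forall,\sqcap\}$ follows immediately by Lemma~\ref{lem:fitting_dual}, since dualization preserves quantifier depth.

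**Main obstacle.** The delicate part is the interaction between disjunction and the minimal-quantifier-depth requirement: with $\sqcup$ in the language, shallow-depth concepts become much more expressive, so I must design the distribution so that \emph{every} shallow disjunctive hypothesis consistent with the sample has large error, not merely the "path-shaped" ones. This likely requires the negative examples in the support of $\mathbb{P}$ to be crafted so that any depth-$d$ concept ($d$ small) separating the revealed positives from revealed negatives is pinned down up to depth-$d$ bisimulation, and then showing such a concept misclassifies a constant fraction of the ambiguous mass. Getting this "shallow hypotheses are rigid" claim right --- essentially a quantitative version of the bisimulation-invariance of bounded-depth $\Lmc(O)$ concepts, robust to the presence of $\sqcup$ --- is where the real work lies; the rest is bookkeeping with the PAC definition.
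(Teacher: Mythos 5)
There is a genuine gap, and you put your finger on it yourself in your final paragraph without resolving it. Your construction tries to force the learner into \emph{high} quantifier depth (by hiding a secret string deep in a path) and then worries, correctly, that $\sqcup$ lets shallow hypotheses cover the revealed sample cheaply, so you would need to prove that every shallow disjunctive consistent hypothesis has large error. You sketch no mechanism for that, and with $\{\exists,\sqcup\}$ available this is genuinely hard: shallow disjunctions are very flexible. The paper's proof turns the role of $\sqcup$ around. It chooses a target $C_T=\exists t^{n+1}.\top$ of quantifier depth $n+1$ (an $\Lmc(\{\exists\})$ concept, hence in every $\Lmc(O)$ with $\exists\in O$) and paired examples $(\Imc_w,a_w)$ (negative) and $(\Jmc_w,a_w)$ (positive) for $w\in\{r,s\}^n$, where $\Jmc_w$ is $\Imc_w$ with a $t$-path of length $n+1$ glued onto the root. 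The distribution is uniform over these $2^{n+1}$ examples. If no $w$ has \emph{both} $\Imc_w$ and $\Jmc_w$ in a polynomial sample (which happens with probability at least $1-\delta$ for $n$ large by a birthday-paradox estimate), then the disjunction $C_0=\bigsqcup_{(\Jmc_w,a_w)\in P}C_w$ over the positive paths seen is a fitting concept of quantifier depth $n$. Hence any minimal-depth fitting algorithm returns depth at most $n$; but no depth-$\leq n$ concept can distinguish $(\Imc_w,a_w)$ from $(\Jmc_w,a_w)$ for any $w$, so the error is exactly $1/2$. In other words, disjunction is what makes the adversary's job easy, not what you must defend against: it is used to \emph{certify} that a shallow fitting exists, which forces the minimal-depth algorithm to be shallow, and shallow concepts are then uniformly wrong. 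Your sketch never finds this pivot.

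Two more concrete defects. First, your target concept $C_T^{(n)}=\exists r.(A_{w^\ast_1}\sqcap\exists r.(A_{w^\ast_2}\sqcap\cdots))$ uses $\sqcap$, which is not guaranteed to be in $O$; the lemma must cover $O=\{\exists,\sqcup\}$ exactly, so the target must be an $\Lmc(\{\exists,\sqcup\})$ concept, as the paper's $\exists t^{n+1}.\top$ is. Second, your conclusion step (``its behavior on the unpinned coordinates is arbitrary, giving error bounded away from $0$'') is not an argument: a minimal-depth fitting could conceivably be deep \emph{and} correct on the unrevealed mass, and ruling this out is precisely the work you defer. Your observation that the $\{\forall,\sqcap\}$ side follows by dualization is correct and matches the paper.
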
 

\begin{proof}
    Assume there is a sample-efficient PAC learning algorithm $\mathcal A$ with associated polynomial $m$ that always returns a fitting $\mathcal L(O)$ concept of minimal quantifier depth, if some fitting exists. We choose $\varepsilon=0.4$, $\delta=0.5$, $\Sigma=\{r,s,t\}$, and a target concept $C_T=\exists t^{n+1}.\top$, for some $n$ to be determined later. We consider the following interpretations $\Imc_w,\Jmc_w$, for $w\in \{r,s\}^n$: 
    \begin{itemize}
        \item for each $w\in \{r,s\}^n$, $\Imc_w$ is a path of length $n$ whose edges are labeled according to $w$ and whose start element is called $a_w$;

        \item for each $w\in \{r,s\}^n$, $\Jmc_w$ is obtained from $\Imc_w$ by adding a $t$-path of length $n+1$ starting from $a_w$.
    \end{itemize}
    Clearly, each $(\Jmc_w,a_w)$ is a positive example, while each $(\Imc_w,a_w)$ is a negative example. 

   Let $\mathbb P$ be the probability distribution that assigns probability $\frac 1{2^{n+1}}$ to each of the $2^{n+1}$ examples $(\Imc_w,a_w),(\Jmc_w, a_w)$, and $0$ to all others. We choose $n$ large enough so that the probability that a sample of size $m(\frac 1\delta,\frac 1\varepsilon,3,2n+1,n+2)$ does not contain both $(\Imc_w,a_w)$ and $(\Jmc_w,a_w)$ for some $w\in\{r,s\}^n$ is at least $1-\delta$.
    
    Let $P,N$ be a sample of size $m(\frac 1\delta,\frac 1\varepsilon,3,2n+1,n+2)$ and suppose that it satisfies the mentioned property: for each $w\in\{r,s\}^n$, not both $(\Imc_w,a_w)\in N$ and $(\Jmc_w,a_w)\in P$. Consider the $\mathcal L(\{\exists,\sqcup\})$ concept \[C_0=\bigsqcup\limits_{(\mathcal I_w,a_w)\in P} C_{w}\] where $C_{w}$ is the concept $\exists t_1\ldots\exists t_n.\top$ if $w=t_1\dots t_n$. Clearly, $C_0$ has quantifier depth $n$ and fits $P,N$. Hence, if $\mathcal A$ runs on $P,N$, it returns a fitting $\mathcal L(O)$ concept $C$ quantifier depth at most $n$. However, such $C$ cannot distinguish $(\Imc_w,a_w)$ and $(\Jmc_w,a_w)$, for any $w\in\{r,s\}^n$, hence $\text{error}_{\mathbb P}(C_T,C)=0.5>\varepsilon$.\qed
\end{proof}

We now turn our attention to Points~1 and~2 of Theorem~\ref{thm:generalization}. Inspecting the proof of Point~1 for \EL from~\cite[Theorem~6]{IJCAI23arxiv} reveals that it actually implies Point~1 for each fragment of \ALC that contains~\EL. Hence, it suffices to show Point~2 for $\{\exists,\sqcup\}$, $\{\exists,\sqcup,\sqcap\}$, and $\{\exists,\forall,\sqcap\}$ (note, that adding negation to any of these operator sets leads to full \ALC). We show the first here and the latter two in the appendix. Both are by non-trivial extensions of the proof for the same fact for $\EL$.
\begin{restatable}{lemma}{lemmostgeneralexistscup}
\label{lem:mostgeneral-exists-cup}
    Any fitting algorithm for $\mathcal L(\{\exists, \sqcup\})$ that always returns a most general fitting concept if one exists is not 
    sample-efficient.   
    \end{restatable}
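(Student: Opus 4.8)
The plan is to mimic the structure of the proof of Lemma~\ref{lem:not_sample_efficient_depth}: fix $\varepsilon,\delta$, design a family of examples parameterized by words $w\in\{r,s\}^n$ together with a target concept $C_T$, and argue that any most general fitting for a generic subset of these examples is, with noticeable probability, forced to misclassify an unseen example with probability above $\varepsilon$. The key difference from the quantifier-depth argument is that we can no longer rely on a depth bound; instead, we must exploit the specific behaviour of \emph{most general} fittings in $\Lmc(\{\exists,\sqcup\})$. The natural idea is to set things up so that the \emph{most general} fitting of a sample is a disjunction that is "too permissive": it covers not only the sampled positive examples but also many unsampled negative ones, because a most general fitting for $\{\exists,\sqcup\}$ cannot add conjuncts to rule out the negatives and cannot use $\forall$ or $\neg$ either.

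Concretely, I would take $\Sigma=\{r,s,A\}$ (or a small fixed signature) and, for each $w=w_1\cdots w_n\in\{r,s\}^n$, build a positive example $(\Imc_w,a_w)$ along a $w$-labelled path that ends in an $A$-element, and a negative example $(\Jmc_w,b_w)$ that looks like $\Imc_w$ along the path but is "broken" at the last step (e.g.\ the final edge is missing, or the $A$ label is absent), arranged so that the only concepts distinguishing $(\Imc_w,a_w)$ from $(\Jmc_w,b_w)$ are ones that essentially spell out the word $w$ down to the $A$-element. The target concept $C_T$ should be something like $\exists r.\exists r.\cdots\exists r.A$ or another single "canonical" word concept of quantifier depth $n$, so that $C_T$ separates exactly one designated pair and is $\Lmc(\{\exists,\sqcup\})$-definable. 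On a sample $P,N$ that (with probability $\ge 1-\delta$, by choosing $n$ large) misses at least one pair $(\Imc_{w^*},a_{w^*}),(\Jmc_{w^*},b_{w^*})$ entirely, I would show that \emph{every} most general fitting $C$ of $P,N$ must "accept" $b_{w^*}$ in $\Jmc_{w^*}$: intuitively, because $C$ is a most general fitting and the word $w^*$ was never seen, $C$ has no obligation — and no room, since generality forbids strengthening — to exclude the structure underlying $(\Jmc_{w^*},b_{w^*})$; any fitting that does exclude it can be generalized while still fitting $P,N$, contradicting maximality. Hence $C$ labels at least the unseen pair incorrectly, giving $\text{error}_\mathbb P(C_T,C)$ bounded below by the mass of that pair; by placing enough mass (a constant fraction) on unseen pairs via the distribution, the error exceeds $\varepsilon$.

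The main obstacle — and the part that needs genuine care rather than being a routine adaptation — is proving the claim that any most general $\Lmc(\{\exists,\sqcup\})$ fitting of the sample is forced to accept the unseen negative example $b_{w^*}$. This requires a structural characterization of most general fittings in $\Lmc(\{\exists,\sqcup\})$: one must show that if some fitting $C$ rejects $b_{w^*}$, then there is a strictly more general fitting $C'$ that still rejects all sampled negatives but no longer rejects $b_{w^*}$ (e.g.\ by weakening a disjunct, or by a homomorphism/simulation argument between the examples). This is where the design of $\Jmc_{w^*}$ matters: it should be built so that it simulates (in the relevant one-directional sense for $\Lmc(\{\exists,\sqcup\})$, where $\exists$ and $\sqcup$ are monotone) all the negative examples actually in the sample, yet $C_T$ still tells it apart from $\Imc_{w^*}$. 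I would expect the bulk of the work to be in verifying these simulation relations and the "generalization step", analogous to but more delicate than the corresponding step in the known $\EL$ proof of~\cite{DBLP:conf/ijcai/CateFJL23}, and then, as in Lemma~\ref{lem:not_sample_efficient_depth}, checking the arithmetic that makes the unseen mass exceed $\varepsilon$ with probability at least $1-\delta$ for a suitable choice of $n$.
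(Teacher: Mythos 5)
Your high-level strategy — engineer a situation where the most general $\Lmc(\{\exists,\sqcup\})$ fitting of a typical sample is a big disjunction that inadvertently covers many unseen negative examples, and make those examples carry constant total probability mass — is exactly the right spirit and matches the paper. However, the concrete instantiation you sketch has two real problems, and the paper takes a different route that avoids both.

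First, the choice of target concept is in tension with your example family. You want a positive example $(\Imc_w,a_w)$ for \emph{every} $w\in\{r,s\}^n$ and a depth-$n$ target $C_T$. A single path concept such as $\exists r.\cdots\exists r.A$ is satisfied at $a_w$ only for $w=r^n$, so it does not label all your intended positives positively; and any $\Lmc(\{\exists,\sqcup\})$ concept that \emph{does} accept every $a_w$ while rejecting every $b_w$ must contain a disjunct $\exists w.A$ for each $w$, hence has size $\Omega(2^n)$. That is fatal for the PAC argument: the sample bound $m(1/\delta,1/\varepsilon,|\Sigma|,s,\lVert C_T\rVert)$ is a polynomial in $\lVert C_T\rVert$, so an exponential-size target licenses an exponential sample that can simply enumerate all $2^{n+1}$ examples. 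The paper avoids this by taking the target to be the constant-size concept $C_T=A$; all $(\Imc_w,a_w)$ are then \emph{negative} examples and a separate small positive example supplies the positive label.

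Second, the step you flag as "needs genuine care" — showing the most general fitting must accept the unseen negatives — is not just a matter of care; the mechanism you propose is pointed the wrong way. You suggest building $\Jmc_{w^*}$ so that it \emph{simulates} the sampled negatives, but that would only tell you that concepts satisfied at sampled negatives are also satisfied at $\Jmc_{w^*}$, which says nothing about a fitting that \emph{rejects} all sampled negatives. What one actually needs is a disjunct $E$ that is satisfied at the unseen example yet at no sampled negative, so that $C\sqcup E$ strictly generalizes any fitting $C$ that rejects the unseen example. The paper sidesteps this delicate weakening argument entirely: it fixes one positive anchor $(\Imc,a_1)$ and one negative anchor $(\Jmc,b_1)$, each with constant probability $1/3$, whose combinatorics constrain the lengths of the $\exists$-paths in \emph{any} fitting to $0$ or $n$. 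Together with the remaining negatives $(\Imc_w,a_w)$ this pins down the unique most general fitting as $C_0=A\sqcup\bigsqcup_{(\Imc_w,a_w)\notin N}C_w$, and the unseen $(\Imc_w,a_w)$ — at least $2^{n-1}$ many, each with mass $\tfrac{1}{3\cdot 2^n}$ — are all misclassified, giving error $\geq 1/6$. So the paper's proof is an explicit syntax-tree characterization driven by the two anchors, not the simulation-based weakening you outline, and it keeps $\lVert C_T\rVert$ constant rather than depth~$n$.
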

\begin{proof}
Assume towards a contradiction that there is a fitting algorithm $\mathcal A$ for $\Lmc(\{\exists,\sqcup\})$ that always returns a most general fitting $\mathcal L(\{\exists, \sqcup\})$ concept and which moreover
is sample-efficient with associated polynomial $m$. We choose $\varepsilon =\frac 17, \delta= \frac 89$, $\Sigma=\{r,s,A\}$, $C_T=A$, and $n$ large enough such that $2^{n-1}>m(\frac 1\varepsilon,\frac 1\delta,3,2n,1)$. We use the following interpretations $\Imc,\Jmc$ and $\Imc_w$ for $w\in \{r,s\}^n$:
\begin{itemize}

    \item \Imc consists of elements $a_1,\ldots,a_n$ satisfying $a_1\in A^\Imc$, $a_n\in A^\Imc$, and $(a_i,a_{i+1})\in r^\Imc$ and $(a_i,a_{i+1})\in s^\Imc$, for all $i<n$.
    
    \item $\mathcal J$ consists of elements $b_1,\dots,b_{n},b_{n+1}$ satisfying $b_i\in A^\Jmc$ for all $i\in\{2,\ldots,n-1,n+1\}$ and 
    $(b_i,b_{i+1})\in r^\Imc$, $(b_i,b_{i+1})\in s^\Imc$, for all $i\leq n$, and $(b_{n+1},b_{n+1})\in r^\Jmc$ and $(b_{n+1},b_{n+1})\in s^\Jmc$. 
    
    \item For $w\in\{r,s\}^n$, $\Imc_w$ is a path of length $n$ whose edges are labeled according to $w$, whose end satisfies $A$, and whose start element is denoted $a_w$.
\end{itemize}
Let $\mathbb P$ be the  probability distribution that assigns 
$\mathbb P(\mathcal I,a_1)=\mathbb P(\mathcal J,b_1)=\frac 13$, $\mathbb P(\mathcal I_w,a_w)=\frac{1}{3\cdot2^n}$ for $w\in\{r,s\}^n$ and probability $0$ to all other examples. Note that $(\Imc,a_1)$ is a positive example, and $(\Jmc,b_1)$ and all $(\Imc_w,a_w)$ are negative examples for $C_T$. 

Let $P,N$ be a sample of size $m(\frac 1\varepsilon, \frac 1\delta, 3,2n,1)$ drawn according to $\mathbb P$. Then with probability at least $1-\delta$, both $(\mathcal I,a_1)\in P$ and $(\mathcal J,b_1)\in N$.
We argue in the appendix that $C_0=A\sqcup \bigsqcup_{(\mathcal I_w,a_w)\notin N} C_{w}$ is the unique most general $\Lmc(\{\exists,\sqcup\})$ concept fitting $P,N$, where $C_w$ is $\exists t_1.\ldots.\exists t_n.A$ in case $w=t_1\cdots t_n$.
Thus, $\mathcal A$ returns (a concept equivalent to) $C_0$ when run on $P,N$. However, all negative examples $(\Imc_w,a_w)\notin N$ are labeled incorrectly by $C_0$. The choice of $n$ implies that there are at least $2^{n-1}$ such examples; thus the error is at least $\frac 16>\varepsilon$. \qed
\end{proof}

It remains to discuss the exceptions mentioned in Points~1 and~2 of Theorem~\ref{thm:generalization}. Interestingly, bounded fitting is a counter-example for these cases: 
\begin{restatable}{lemma}{lemboundedfittingexistsor}\label{lem:boundedfittingexistsor}
Bounded fitting for $\mathcal L(\{\exists, \sqcup\})$ (resp., $\Lmc(\{\forall,\sqcap\})$ returns a most specific $\mathcal L(\{\exists,\sqcup\})$ (resp. most general $\Lmc(\{\forall,\sqcap\})$) concept, if any fitting exists.
\end{restatable}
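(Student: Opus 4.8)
By Lemma~\ref{lem:fitting_dual} (more precisely, by the fact that dualization preserves quantifier depth and turns most specific fittings into most general ones), it suffices to prove the claim for $\Lmc(\{\exists,\sqcup\})$; the statement for $\Lmc(\{\forall,\sqcap\})$ then follows by applying the duality to the sets of examples. So the plan is to fix $P,N$ admitting some $\Lmc(\{\exists,\sqcup\})$ fitting, let $C$ be the concept of minimal size returned by bounded fitting, and show that $C$ is most specific: there is no $\Lmc(\{\exists,\sqcup\})$ concept $D$ with $D\sqsubseteq C$, $D\not\equiv C$, that also fits $P,N$.

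The key structural observation is a characterization of subsumption between $\Lmc(\{\exists,\sqcup\})$ concepts in terms of homomorphisms / simulations on the syntax trees (treating $\sqcup$-nodes disjunctively and $\exists r$-nodes as role-labeled edges): $D\sqsubseteq C$ iff, informally, every "conjunctive branch" of $D$ (an $\Lmc(\{\exists\})$ disjunct obtained by resolving every $\sqcup$) is subsumed by some conjunctive branch of $C$, and $\Lmc(\{\exists\})$-subsumption is witnessed by a homomorphism from the tree of the smaller (more general) concept into the tree of the larger one. The first step is to make this precise and to observe that for a tree-shaped $\Lmc(\{\exists\})$ concept $E$ and interpretation $\Imc$, $a\in E^\Imc$ iff there is a homomorphism from (the tree of) $E$ into $\Imc$ mapping the root to $a$; this is standard. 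From this one derives: if $D\sqsubseteq C$ then in particular $D$ still fits every positive example that $C$ fits (automatic), and the only thing that can go wrong when passing from $C$ to a strictly smaller (more specific) $D$ is that $D$ might suddenly fit more — but more specific means it fits *fewer* positives potentially, never more negatives; so the real content is to rule out that some $D\sqsubsetneq C$ fitting $P,N$ has *smaller or equal size* would contradict minimality, and that a $D\sqsubsetneq C$ fitting $P,N$ of *larger* size nonetheless exists.

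Concretely, I would argue by contradiction: suppose $D\sqsubsetneq C$ fits $P,N$. Using the homomorphism characterization, every positive example $(\Imc,a)\in P$ satisfies $a\in D^\Imc$, hence (via $D\sqsubseteq C$) also $a\in C^\Imc$, which we already knew. The crucial point is to extract from $D$ and the positive examples a concept that is *at most as large as $C$*, still fits, and is $\sqsubseteq$-below $C$ — for instance, one "prunes" $C$ using the homomorphisms witnessing that the conjunctive branches of $D$ embed into the positive examples: from each positive example $(\Imc,a)$ and the fact that $a\in D^\Imc$, one can read off a sub-tree of $\Imc$ of size at most $\lVert D\rVert$; taking the product/intersection of these across all positive examples, and comparing with $C$, yields an $\Lmc(\{\exists,\sqcup\})$ concept $C'$ fitting $P,N$ with $C'\sqsubseteq C$, $C'\not\equiv C$, and $\lVert C'\rVert\le\lVert C\rVert$. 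Since bounded fitting enumerates by increasing size and $C$ is the *first* fitting it finds, and any fitting concept strictly below $C$ that is non-equivalent must — I will show — already be "visible" at size $\le\lVert C\rVert$ (because the product-with-$\Imc$ construction does not increase size), we contradict that $C$ was returned. Actually the cleanest route: show directly that if $C$ is a minimum-size fitting then $C$ is most specific, by showing any fitting $D$ with $D\sqsubseteq C$, $D\not\equiv C$ can be turned into a fitting $D'$ with $\lVert D'\rVert<\lVert C\rVert$ — either $D$ itself is smaller, or $D$ has a redundant part (relative to the finitely many positive examples it must be satisfied in) that can be cut, strictly decreasing size while staying below $C$.

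The main obstacle, and where I expect the real work to lie, is handling the interaction of $\sqcup$ with the "pruning" argument: a most-specific check has to cope with the fact that in $\Lmc(\{\exists,\sqcup\})$ a concept can be strictly more specific than $C$ by *dropping a disjunct* of $C$ (making it smaller — easy, immediately contradicts minimality) but also by *replacing a disjunct with a strictly-below refinement* that may be larger in one branch yet still fits; one must show such a refinement is never needed, i.e. that the minimum-size fitting already takes each of its disjuncts to be as specific as the positive examples force and no more. Formalizing "as specific as the positive examples force" requires the product construction $\Imc_1\times\cdots\times\Imc_{|P|}$ (the direct product of the positive interpretations, pointed at $(a_1,\dots,a_{|P|})$), observing that an $\Lmc(\{\exists\})$ concept fits all positives iff it maps into this product, and that the *canonical* most-specific $\Lmc(\{\exists\})$ concept for a single conjunctive branch is obtained by unravelling this product — which, crucially, has size bounded in terms of the branch it refines. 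Getting the size bookkeeping exactly right there, together with checking that the minimal-size returned concept is syntactically in $\Lmc(\{\exists,\sqcup\})$-"reduced form" (no redundant disjuncts, each disjunct $\exists$-only and homomorphically minimal), is the technical heart; the rest is the routine homomorphism/product-interpretation machinery for $\el$-like logics.
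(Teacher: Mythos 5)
Your reduction to the $\Lmc(\{\exists,\sqcup\})$ case via duality, and the starting observation that subsumption between $\Lmc(\{\exists,\sqcup\})$ concepts is governed by which ``branches'' (role-words ending in a leaf label) occur in the syntax trees, both match the paper. After that, however, you take a genuinely different and considerably more involved route, and the crucial step of it is left as a gap.

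The paper's argument is short and syntactic: it first normalizes to a canonical form in which no subconcept has the shape $\exists r.C_1\sqcup\exists r.C_2$ (since $\exists r.(C_1\sqcup C_2)$ is equivalent and strictly smaller, this is WLOG for both the minimal-size output $C$ and for any competitor $D$). Under this normal form, the path characterization of subsumption upgrades to the statement that $D\sqsubseteq C$ forces an \emph{injective label-preserving map} from the syntax tree of $D$ into that of $C$, hence $\lVert D\rVert\le\lVert C\rVert$, with equality only if $D\equiv C$. Consequently any fitting $D$ strictly more specific than $C$ is strictly smaller, contradicting that bounded fitting returned $C$ at the smallest size. That one normalization step is doing all the work, and it is precisely what your proposal is missing: you acknowledge that a fitting $D\sqsubsetneq C$ could a priori be larger than $C$, and you propose to ``prune'' it back down using a product-of-positive-examples / canonical-most-specific construction, but you never actually show that the pruned concept is below $C$, non-equivalent to $C$, and strictly smaller than $C$. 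You flag exactly this (``getting the size bookkeeping exactly right \ldots is the technical heart'') and leave it open, so the proposal does not close the argument.

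A second, more conceptual issue: the product/unravelling machinery you invoke, and the vocabulary of ``conjunctive branches'', is $\EL$-specific and relies on having conjunction in the language (products characterize most-specific $\EL$ concepts because $\EL$ concepts are closed under conjunction). $\Lmc(\{\exists,\sqcup\})$ has \emph{no} conjunction; a concept is just a disjunction of existential role-words, and the most-specific fitting is a disjunction picking out those words realized by all positives and by no negative. Transplanting the $\EL$ product/unravelling argument here would both be the wrong tool and would reintroduce the size-control problem you are trying to avoid (unravellings can blow up). The paper's normalization-plus-embedding argument sidesteps all of this. So: right starting point and right target, but the middle of the proof as proposed would not go through without essentially rediscovering the paper's normal-form lemma.
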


\section{SAT-Based Bounded Fitting}\label{sec:implementation}
\newcommand{\sigr}{\ensuremath{\Sigma_{\textsf{R}}}\xspace}
\newcommand{\sigc}{\ensuremath{\Sigma_{\textsf{C}}}\xspace}

Motivated by the \NPclass-completeness of size-restricted fitting established in
Theorem~\ref{thm:main1}, we implemented bounded fitting for any logic $\mathcal
L(O)$ for $O\subseteq\oall$ using a SAT solver to efficiently search for fitting
$\Lmc(O)$ concepts of a given size. In this section, we first give an overview
of the reduction to SAT and describe two optimizations that improve the
performance of the basic reduction. Then, we propose a variant of bounded
fitting tailored towards approximate fitting. Finally, we compare our
implementation with other concept learning implementations.

We describe the reduction for \ALC. Let $P, N$ be sets of examples and $k \geq 1$. We aim to produce a propositional formula $\varphi$ that is satisfiable if and only if there is an $\ALC$ concept $C$ that fits $P$, $N$ with $\lVert C \rVert \leq k$.
We assume w.l.o.g.\ that all examples in $P$ and $N$ share a single (finite) interpretation $\Imc$. 
Let $\sigc$ and $\sigr$ be the set of concept and role names, respectively, that occur in $\Imc$. 

The formula $\varphi$ is the conjunction of two formulas $\varphi_1\wedge\varphi_2$ whose purpose is to encode syntax trees of $\ALC$ concepts and to evaluate the encoded concepts over $\Imc$, respectively.
For this task, $\varphi_1$ uses 

variables $x_{i,v}$, $y_{1,i,j}$, and $y_{2, i, j}$ for $i, j \in \{1, \ldots, k\}$ and $v$ a node label from $\Vmc = \{\top,\bot,\neg,\sqcap,\sqcup\} \cup \sigc \cup \{\exists r, \forall r\mid r\in \sigr\}$
with the following intended meaning:
\begin{itemize}

\item $x_{i, v}$ is true if and only if node $i$ of the syntax tree is labeled with $v$, 

\item $y_{1,i,j}$ is true if and only if node $i$ has the single successor $j$, and

\item $y_{2, i, j}$ is true if and only if node $i$ has the two successors $j$ and $j + 1$.
\end{itemize}
We treat node $1$ as the root of the syntax tree.
The clauses in $\varphi_1$ enforce that a satisfying assignment to these variables represents a well-formed syntax tree of an $\ALC$ concept. For example, to ensure that nodes that are labeled by a concept name do not have successors and that nodes labeled with $\sqcap$ have two successors, respectively, the clauses 
\[
x_{i, A} \to (\neg y_{1, i, j} \land \neg y_{2, i, j}) \quad\quad\text{and}\quad\quad
x_{i, \sqcap} \to\textstyle \bigvee_{i < \ell < k} v_{2, i, \ell}
\]
are included in $\varphi_1$ for each $i, j \in \{1, \ldots, k\}$ with $i < j$ and $A \in \sigc$. Similar clauses encode the rules for nodes labeled with other labels from~\Vmc. In case we are interested in a fragment $\Lmc(O)$ of \ALC, we restrict $\mathcal V$ accordingly.

To evaluate the encoded \ALC concept, $\varphi_2$ uses variables $z_{i, a}$ for $i \in \{1, \ldots k\}$, $a \in \Delta^{\Imc}$ with the intended meaning that $z_{i, a}$ is true if and only if 
$a \in C^\Imc$ where $C$ is the concept represented by node $i$ in the syntax tree.
For example, to encode the semantics of concept names and of conjunction~$\sqcap$, $\varphi_2$ contains the clauses
\begin{align} 
   x_{i,A} \to z_{i,b},\quad  x_{i,A} \to \neg z_{i,c}, \quad
  x_{i, \sqcap} \land y_{2, i, j} \to \left(z_{i, a} \leftrightarrow (z_{j, a} \land z_{j + 1, a})\right) \label{eq:cn-clauses}
\end{align}
for all $i,j \in \{1, \ldots, k\}$, $A \in \sigc$, $b\in A^\Imc$, $c\in \Delta^\Imc\setminus A^\Imc$, and $a\in \Delta^\Imc$. 

Finally, the fitting condition is enforced by the following formula:
\begin{equation} \label{eq:fitting-clauses}
 \textstyle\bigwedge_{(\Imc,a) \in P}z_{1,a} \land \bigwedge_{(\Imc,b)\in N}\neg z_{1,b}
\end{equation}
Given the provided intuitions, it should be clear that we can read off a witness concept from a satisfying assignment of $\varphi$.
Overall, the constructed formula $\varphi$ uses $O(k^2 + k \lvert \Sigma \rvert + k n)$ variables, $O(nk^3\lvert \Sigma \rvert )$ clauses, and is of total size $O(nk^3 \lvert \Sigma\rvert )$ where $\Sigma = \sigc\cup\sigr$ and $n=\lVert \Imc\rVert$.

\subsection{Optimizations}

We discuss two optimizations that we implemented on top of the basic reduction described above. Similar optimizations have been implemented in~\cite{DBLP:conf/ijcai/CateFJL23, DBLP:conf/fdl/Riener19}.

\paragraph{Taking Advantage of Types} The encoding of the semantics of concept names given in \eqref{eq:cn-clauses} produces $k \times |\Delta^{\Imc}| \times |\sigc|$ clauses.
In realistic data sets, both $|\Delta^{\Imc}|$ and $|\sigc|$ tend to be large. For example, in the Carcinogenesis data set of the SML-Benchmark~\cite{DBLP:journals/semweb/WestphalBBJL19}, there are 19\,221 individuals and 133 concept names, resulting in 
more than 10 million clauses for $k = 4$, which is the majority of clauses in the encoding. To reduce the number of clauses, one can exploit that not all combinations of concept names actually occur in realistic data sets. In \emph{carcinogenesis}, only 105 different combinations occur. This allows us to encode the same semantics with fewer clauses, at the cost of introducing additional variables. 

The \emph{type} $\mathsf{type}(a)$ of an individual $a$ in an interpretation $\Imc$ is the set $\{ A \in \sigc \mid a\in A^\Imc \}$.
Let $T$ be the set of all types that occur in $\Imc$, that is, $T = \{ \mathsf{type}(a) \mid a \in \Delta^{\Imc}\}$.
For each $i \in \{1, \ldots, k\}$ and $t \in T$, we introduce an additional variable $x_{i, t}$
with the intended meaning that $x_{i,t}$ is true if and only if 
node $i$ is labeled with a concept name that is contained in $t$. We ensure this with the clauses
\[
x_{i, A} \to x_{i, t} \quad \text{and} \quad x_{i, B} \to \neg x_{i, t} 
\]
for each $t \in T$, $i \in \{1, \ldots, k\}$, $A \in t$ and $B \in \sigc \setminus t$.
We then replace the clauses that enforce the semantics of concept names with
\[
 x_{i, \mathsf{type}(a)} \to z_{i, a} \quad \text{and} \quad \neg x_{i, \mathsf{type}(a)} \to \neg z_{i, a} \lor \neg \ell_i
\]
for all $i \in \{1, \ldots, k\}$ and $a \in \Delta^{\Imc}$, 
where $\ell_i$ are additional variables that are true if and only if node $i$ is labeled with a concept name. In the carcinogenesis dataset, the modified encoding produces only $4 \cdot 105 \cdot 133 + 4 \cdot 19\,221 \cdot 2 = 209\,628$ clauses. 

\paragraph{Syntax Tree Templates} A major problem is that different models of $\varphi_1$ can represent (syntax trees of) equivalent concepts. For example, already the arguably simple concept $\exists r.A \sqcap \forall s.B$ is encoded in four different models. This phenomenon is well-known also from other domains and is called \emph{symmetry} in the search space~\cite{DBLP:series/faia/Sakallah21}. The problem it brings is that the SAT solver often does significant redundant work by trying symmetric solutions, which slows down the solving process, especially for unsatisfiable formulas.

A solution is to \emph{break} symmetries by introducing additional clauses to the encoding that restrict how the nodes of a syntax tree may be ordered. Similar symmetries have been observed in other SAT-based implementations of bounded fitting~\cite{DBLP:conf/ijcai/CateFJL23,DBLP:conf/ijcar/PommelletSS24}.

By the \emph{topology} of the syntax tree of an $\ALC$ concept, we mean the binary tree that results from removing all node labels. It is clear that isomorphic topologies can represent the same set of concepts (up to commutativity of $\sqcap/\sqcup$).
We thus extend $\varphi$ by clauses that enforce, for each topology isomorphism equivalence class, a canonical ordering of the nodes in the syntax tree.
This is achieved by enumerating all binary trees with $k$ nodes, and through a long disjunction, demanding that the encoded syntax tree must have one of these topologies. A similar approach has been taken in~\cite{DBLP:conf/fdl/Riener19}, but instead of including the large disjunction in the encoding, there is one call of the SAT solver per topology. 

Note that the number of binary trees with $k$ nodes grows exponentially with $k$. To prevent a negative impact on runtime,
we limit this process to topologies with at most $10$ nodes, after which the topologies are only considered as \emph{prefixes} of the topology of the encoded syntax tree. The used threshold of 10 was determined empirically; higher values resulted in a slowdown on the benchmarks.

It is worth noting that these clauses do not break all symmetries, especially
many related to commutativity remain. Essentially, there will always be a
tradeoff between the number of clauses needed to break symmetries and the gain
in avoiding symmetric solutions. In the extreme case, breaking \emph{all}
symmetries will require many clauses and thus likely not result in performance
improvements. In our experience, adding some extra clauses avoiding certain syntactical patterns proved beneficial, e.g., $\top$ or $\bot$ directly below $\sqcap$ or $\sqcup$, $\neg$ directly below $\neg$, $\exists$, $\forall$,
and two $\sqcap$-successors below a $\sqcap$-node, and the same for $\sqcup$.

\subsection{Approximation Scheme}\label{sec:approx}

In practice, there may not always be a concept that perfectly fits the provided examples. In such cases, it is more realistic to view the fitting problem as an optimization problem. We address this by changing the basic bounded fitting algorithm from Algorithm~\ref{alg:boundedfitting} into the approximation scheme described in Algorithm~\ref{alg:approximation}. There, an extra variable $m$ stores the best coverage of examples achieved so far. If an exact fitting concept is found (Lines~4/5), the algorithm returns it. Otherwise, $m$ is increased (Line~7). If for the current size-bound $k$, $m$ cannot be improved (test in Line~3), the algorithm moves on to the next $k$. Note that this approximation scheme is an \emph{anytime algorithm} in the sense that, if it is interrupted at some point, it may return the best solution found up to this point.

This change in the algorithm is reflected in the interface to the SAT solver as follows. 
We replace clauses~\eqref{eq:fitting-clauses}, that demand exact fitting, with \emph{cardinality constraints}, that demand that at least $m$ of the literals in~\eqref{eq:fitting-clauses} be true. 
We exploit the ability to solve \emph{incrementally} provided by some SAT solvers, since the increase of $m$ can be realized by adding clauses. However, whenever $k$ increases, we have to produce a new propositional formula as before.

\begin{algorithm}[t]
\caption{Approximation version of Bounded Fitting.}\label{alg:approximation} \KwIn{Positive examples $P$, negative examples $N$}
   $m \gets 1$\;
   \For {$k:=1,2,\ldots$}{
        \While{there is $\varphi\in \Lmc$ of size $k$ that fits $m$ examples from $P,N$}{
        \If(\hfill // all examples covered?) {$m=|P|+|N|$}{
         \Return $\varphi$
        } \Else(\hfill // try to cover more examples) { increase $m$\;}
        }
   }
\end{algorithm}

\subsection{Evaluation}

Our implementation uses the PySat library~\cite{imms-sat18,itk-sat24} to decide satisfiability of $\varphi$ with the Glucose SAT solver and, if satisfiable, read off a fitting concept from a model of $\varphi$.
In the following, we refer with ALC-SAT to the base implementation and with ALC-SAT$^+$ to the one with
the optimized encoding.

We compared our implementation to other concept learning tools CELOE~\cite{DBLP:conf/www/BuhmannLWB18}, SParCEL~\cite{DBLP:journals/jmlr/TranDGM17}, and EvoLearner~\cite{DBLP:conf/www/HeindorfBDWGDN22}.\footnote{We use the version of CELOE from DL-Learner 1.5.0 and the version of EvoLearner from ontolearn 0.8.1. The exact learning benchmarks were run on a MacBook Pro with M3 Pro chip and 36GB RAM. The generalization benchmarks were run on a MacBook Pro with M1 Pro Chip and 32 GB
RAM.} While these tools also support extensions of $\ALC$ such as number restrictions or data properties, we restricted all tools to $\ALC$ for our evaluation. We did not consider fragments in these first experiments since not all tools support them.
We also did not compare our implementation against concept learning tools such as NCES2~\cite{DBLP:conf/pkdd/KouagouHDN23} and DRILL~\cite{DBLP:conf/ijcai/DemirN23}, as these require a pretraining step for each data set, which makes direct performance comparison difficult. 
We evaluate performance in two setups: in the first, we test the ability to find exact fitting concepts on a new benchmark where fitting concepts are guaranteed to exist. In the second, we evaluate the approximation scheme from Section~\ref{sec:approx} 
on the standard SML benchmark~\cite{DBLP:journals/semweb/WestphalBBJL19}. 

For evaluating exact fitting we extracted fragments of the YAGO 4.5 knowledge base~\cite{DBLP:conf/sigir/SuchanekABCPS24} by taking all facts talking about a given signature. The \emph{family} fragment is based on signature $\{\textit{spouse, children, gender, Male, Female}\}$, and the \emph{language} fragment uses the role name $\textit{hasLanguage}$ and several concept names that represent languages. 
We handcrafted 70 \ALC concepts of varying complexity and used them to obtain positive and negative examples by querying the extracted fragments. The task was then to reverse-engineer the \ALC concept from (subsets of) the examples.

For the family fragment we used concepts with high quantifier depth, for example 
$\exists c.(\exists c.\exists g.F\sqcap\exists c.\exists g.M)$ asking for all individuals having a child that has both a daughter (gender female) and a son (male); we use also deeper concepts, referring to grand-children and so on. 

For the language fragment we use shallow concepts that contain Boolean combinations of concept names under existential or universal restrictions, for example, $\exists l.(F\sqcup I\sqcup R)\sqcap\forall l.\neg G$.

\begin{figure}[t]
\centering
\begin{subfigure}[t]{0.5\textwidth}
\centering
\scriptsize
    \begin{tikzpicture}
        \begin{axis}[
                xmin = 0,
                xmax = 5,
                ymin = 0,
                ymax = 5,
                width=5cm, height=5cm,
                grid = major,
                grid style={dashed, gray!30},
                ylabel=time ALC-SAT (seconds),
                xlabel=time ALC-SAT$^+$ (seconds),
                legend style={at={(0.1,-0.1)}, anchor=north}
             ]                     
            \addplot[only marks, mark =x] table [col sep=comma, y = t_alcsat, x=t_alcsat+] {data/family_alcsat_alcsat+_time_language.csv};
            \addplot [red, smooth, domain = 0:100] {x};
        \end{axis}
    \end{tikzpicture}
\end{subfigure}~%
\begin{subfigure}[t]{0.5\textwidth}
\centering
\scriptsize
    \begin{tikzpicture}
        \begin{axis}[
                xmin = 0,
                xmax = 240,
                ymin = 0,
                ymax = 50,
                width=5cm, height=5cm,
                grid = major,
                grid style={dashed, gray!30},
                ylabel=\# exact fittings found,
                xlabel=time (seconds),
                legend style={nodes={scale=0.6, transform shape},at={(1,0.2)}, anchor=east}
             ]                     
            \addplot[mark =x, draw = red] table [col sep=comma, y = n_sparcel, x=x] {data/data_graph.csv};
            \addplot[mark =x, draw = blue] table [col sep=comma, y = n_alcsat, x=x] {data/data_graph.csv};               
            \addplot[mark =x, draw = yellow] table [col sep=comma, y = n_celoe, x=x] {data/data_graph.csv}; 
            \addplot[mark =x, draw = green] table [col sep=comma, y = n_evo, x=x] {data/data_graph.csv};              
            \addlegendentry{SpaRCEL}
            \addlegendentry{ALC-SAT$^+$}
            \addlegendentry{CELOE}
            \addlegendentry{EvoLearner}
        \end{axis}
    \end{tikzpicture}    
\end{subfigure}
\caption{Exact learning runtime}
\label{fig:eval_exact_runtime}
\end{figure}
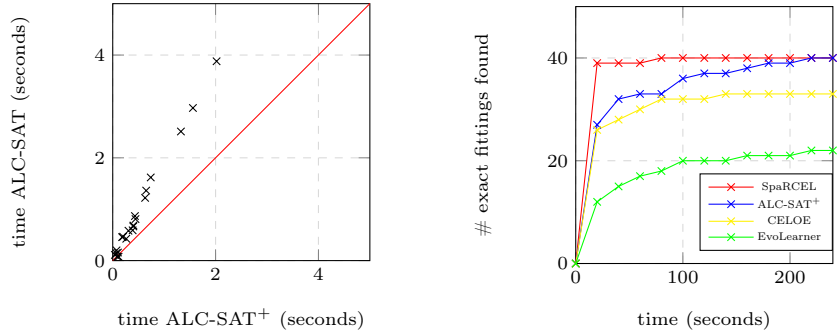

We highlight the impact of our optimizations on the encoding in Figure~\ref{fig:eval_exact_runtime} (left). The data points stem from experiments on the 25 benchmarks over the language fragment and compare the time ALC-SAT and ALC-SAT$^+$ needed until an exact fitting was found. 
The data shows that we find exact fitting concepts about twice as fast with optimizations enabled compared. We conjecture that this largely explained by the type encoding which is beneficial in the presence of many concept names.

We also compared the time of ALC-SAT$^+$ to find a fitting concept to that of EvoLearner, CELOE and SParCEL over the 45 benchmarks generated from the family fragment. The graph 
in Figure~\ref{fig:eval_exact_runtime} (right) displays the number of exact fittings found by each tool within a given runtime. It can be seen that ALC-SAT$^+$ finds exact fittings faster than both CELOE or EvoLearner. However, SParCEL achieves better results for smaller values for maximum runtime. Both SParCEL and ALC-SAT$^+$ find the same number of exact fittings within the timeframe displayed, however, for $5$ benchmarks SParCEL was unable to find an exact fitting concept whereas ALC-SAT$^+$ was able to find exact fittings for all benchmarks, at the cost of increased execution time. 

We evaluated approximate fitting of ALC-SAT$^+$ and the other tools by measuring accuracy and length of the returned concept on the SML-Benchmark using $10$-fold validation. Each tool was allowed to run for $5$ minutes. The results are given in Table~\ref{tab:sml_results} where for each tool and benchmark the first row shows accuracy while the second row shows concept length, both with standard deviation. The accuracy achieved by ALC-SAT$^+$ is comparable to that of EvoLearner, except for Mammographic. The values for accuracy of CELOE and SParCEL, in general, are lower. For SParCEL, a possible explanation is that the length of concepts is much larger than that of concepts returned by EvoLearner, CELOE and ALC-SAT$^+$. Small concepts are known to generalize well whereas large concepts are prone to overfitting. In comparison to SParCEL, EvoLearner, CELOE, and ALC-SAT$^+$ are able to find small fitting concepts. While this is `by design' for ALC-SAT$^+$, we conjecture that the heuristics in EvoLearner and CELOE prefer shorter concepts as well. Note, that when searching for approximate fitting concepts, ALC-SAT$^+$ returns the smallest concept with the highest accuracy found within the time limit and may thus report larger concepts than EvoLearner or CELOE. 

\begin{table}[t]
\caption{Generalization results on SML-Benchmarks}
\label{tab:sml_results}
\scriptsize
\begin{center}
\begin{tabular}{lcccccc}
\toprule
& \tiny Carcinogenesis & \tiny Hepatitis & \tiny Lymphography & \tiny Mammographic & \tiny Mutagenesis & \tiny Nctrer \\\midrule
\multirow{2}{*}{EvoLearner} &   $0.53\pm 0.18$  &   $0.58\pm 0.01$  &   $0.81\pm 0.12$   &   $0.46\pm 0.00$  &   $0.78\pm 0.18$  & $0.6\pm 0.04$  \\
& $7.2\pm 3.52$ &  $3.2,\pm 1.03$ & $19.6\pm 6.88$ & $1.7\pm 0.48$ & $3.5\pm 1.35$ & $3.3\pm 0.95$\\\midrule
\multirow{2}{*}{CELOE}&$0.54\pm 0.01$ &$0.41\pm 0.01$  & $0.82\pm 0.11$& $0.46\pm 0.0$& $0.56\pm 0.2$ & $0.6\pm 0.04$ \\
& $3.8\pm 0.42$ & $4.6\pm 1.26$&$10.8\pm 0.42$ &$1.7\pm 2.21$ & $6.8\pm 0.63$ &$3.9\pm 0.32$ \\\midrule
\multirow{2}{*}{SParCEL}& $0.54\pm 0.1$& n/a &$0.74\pm 0.13$ &$0.55\pm 0.02$ &$0.73\pm 0.23$ &$0.46\pm 0.08$ \\  
& $860.1\pm 66.54$ & n/a & $164.3\pm 49.48$ &$178.2\pm 21.6$ & $85.1\pm 9.35$ & $65.4\pm 18.84$\\\midrule
\multirow{2}{*}{ALC-SAT$^+$} & $0.55\pm 0.2$ & $0.58\pm 0.01$& $0.8\pm 0.09$& $0.77\pm 0.05$& $0.81\pm 0.26$ & $0.63\pm 0.07$\\
& $6.4\pm 0.7$ &$4\pm 1.7$ & $9.9\pm 0.32$ & $11.3\pm 2.06$  & $9.4\pm 0.84$ & $11.7\pm 0.95$\\
\bottomrule
\end{tabular}
\end{center}
\end{table}

\section{Conclusion and Future Work}\label{sec:conclusion}

We investigated the bounded fitting paradigm for learning class expressions formulated in the description logic \ALC and its fragments. We analyzed theoretical properties, and we
believe that many of our results can be extended to more expressive description logics. 
Additionally, we provided a preliminary implementation that uses the power of a SAT solver to find good fittings, and compared it with other concept learning tools with encouraging results. 
From a theoretical perspective, it would be interesting to study bounded fitting in an open world setting where an ontology provides additional background knowledge.

From a practical perspective, we see three relevant avenues for future work. First, OWL 2 DL supports several practically relevant features that are not currently supported by our implementation, such as inverse roles, number restrictions, and data properties. Hence, one natural next step is to
extend our implementation to be able to learn (ideally) $\SROIQ$ concepts. Second, we would like to understand the syntactic shape of fittings that occur in realistic knowledge bases. While currently the shape of the sought concepts is determined by the allowed logical constructors, it would also be possible to further restrict it by bounding the number of, say, disjunctions. 
Finally, we would like to improve the speed of our implementation either by providing better CNF encodings or by \emph{parallelization} relying on syntax tree templates similar to~\cite{DBLP:conf/fdl/Riener19}.

\paragraph*{Supplemental Material Statement:} 
The source code of our implementation ALC-SAT$^+$ and instructions on how to reproduce the benchmarks are available at \url{https://github.com/SAT-based-Concept-Learning/ALCSAT}.

\bibliographystyle{splncs04}
\bibliography{main}

\newpage
\appendix

\section{Proofs for Section~\ref{sec:prelims}}
\begin{lemma}\label{lem_dual}
For all signatures $\Sigma$, $\ALC$ concepts $C$ with signature contained in $\Sigma$, interpretations $\Imc$, and $a\in \Delta^\Imc$: 
    \[a\in C^{\mathcal I}\Leftrightarrow a\notin \overline{C}^{\overline\Imc_\Sigma}.\]
\end{lemma}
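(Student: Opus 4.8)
The plan is to prove the equivalence by a straightforward structural induction on $C$. The key preliminary observation is that if the signature of $C$ is contained in $\Sigma$, then so is the signature of every subconcept of $C$; this guarantees that the induction hypothesis remains applicable to the immediate subconcepts occurring in the inductive step.

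For the base cases: if $C=\top$, then $\overline C=\bot$, and $a\in\top^\Imc=\Delta^\Imc$ holds for every $a$ while $a\notin\bot^{\overline\Imc_\Sigma}=\emptyset$ holds for every $a$, so the equivalence is trivially true; the case $C=\bot$ is symmetric. If $C=A$ for a concept name $A$, then $A\in\Sigma$ since the signature of $C$ is contained in $\Sigma$, hence $\overline C=A$ and, by definition of $\overline\Imc_\Sigma$, $A^{\overline\Imc_\Sigma}=\Delta^\Imc\setminus A^\Imc$; thus $a\in A^\Imc$ if and only if $a\notin A^{\overline\Imc_\Sigma}$.

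For the inductive step I would check each of the five constructors. If $C=\neg D$, then $\overline C=\neg\overline D$, and $a\in(\neg D)^\Imc$ iff $a\notin D^\Imc$ iff (the contrapositive of the induction hypothesis for $D$) $a\in\overline D^{\overline\Imc_\Sigma}$ iff $a\notin(\neg\overline D)^{\overline\Imc_\Sigma}$. If $C=D\sqcap E$, then $\overline C=\overline D\sqcup\overline E$, and $a\in(D\sqcap E)^\Imc$ iff $a\in D^\Imc$ and $a\in E^\Imc$ iff (induction hypothesis for $D$ and for $E$) $a\notin\overline D^{\overline\Imc_\Sigma}$ and $a\notin\overline E^{\overline\Imc_\Sigma}$ iff $a\notin(\overline D\sqcup\overline E)^{\overline\Imc_\Sigma}$; the case $C=D\sqcup E$ is symmetric. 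If $C=\exists r.D$, then $\overline C=\forall r.\overline D$; using that $\overline\Imc_\Sigma$ leaves roles unchanged, i.e.\ $r^{\overline\Imc_\Sigma}=r^\Imc$, we obtain that $a\in(\exists r.D)^\Imc$ iff there is $(a,b)\in r^\Imc$ with $b\in D^\Imc$ iff there is $(a,b)\in r^{\overline\Imc_\Sigma}$ with $b\notin\overline D^{\overline\Imc_\Sigma}$ (induction hypothesis for $D$) iff $a\notin(\forall r.\overline D)^{\overline\Imc_\Sigma}$, where we used the definition $(\forall r.\overline D)^{\overline\Imc_\Sigma}=(\neg\exists r.\neg\overline D)^{\overline\Imc_\Sigma}$; the case $C=\forall r.D$ is symmetric.

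I do not expect a genuine obstacle: this is a routine induction. The only points requiring a little attention are recording that the signature condition is inherited by subconcepts so that the induction hypothesis stays applicable, noting that membership $A\in\Sigma$ in the concept-name case is exactly what makes $A^{\overline\Imc_\Sigma}$ the complement of $A^\Imc$, and, in the quantifier cases, using that $\overline\Imc_\Sigma$ fixes every role together with unfolding $\forall$ via its definition $\neg\exists\neg$.
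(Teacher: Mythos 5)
Your proof is correct and follows essentially the same route as the paper's: a structural induction over all concept constructors, using that $\overline{\Imc}_\Sigma$ complements concept names in $\Sigma$, fixes roles, and unfolding $\forall$ via $\neg\exists\neg$. Your explicit observation that the signature condition is inherited by subconcepts is a minor but welcome bit of bookkeeping that the paper leaves implicit.
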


\begin{proof}
  We show the lemma by structural induction.

\textsl{Base Cases.} Let $\mathcal I$ be an interpretation and $a\in\Delta^{\mathcal I}$. Concepts in the base case have the form $\top,\bot$ or $A$ for a concept name $A\in\Sigma\cap\NC$. In the case of $\bot$ or $\top$, $a\in\bot^{\mathcal I}\Leftrightarrow a\notin\top^{\overline\Imc_\Sigma}$ holds because of $\bot^{\mathcal J} = \emptyset$ and $\top^{\mathcal J} = \Delta^{\mathcal J}$ for any interpretation $\mathcal J$. In the case of a concept name $A$, $a\in A^{\mathcal I}\Leftrightarrow a\notin A^{\overline\Imc_\Sigma}$ follows from the definition of $\overline{\,\cdot\,}$. 

\textsl{Induction Step.} We distinguish cases: 

    \begin{enumerate}
    \item $C= \neg D$:
        \begin{eqnarray*}
            &&a\in (\neg D)^{\mathcal I}\\
            &\Leftrightarrow&a\notin D^{\mathcal I}\\
            &\stackrel{\text{(I.H)}}{\Leftrightarrow}&a\in \overline{D}^{\overline\Imc_\Sigma}\\
            &\Leftrightarrow& a\notin (\neg \overline D)^{\overline \Imc_\Sigma} =  (\overline{\neg D})^{\overline\Imc_\Sigma}
            \end{eqnarray*} 
        \item $C = (C_1\sqcap C_2)$:
        \begin{eqnarray*}
            &&a\in (C_1\sqcap C_2)^{\mathcal I}\\
            &\Leftrightarrow&a\in C_1^{\mathcal I}\cap C_2^{\mathcal I}\\
            &\stackrel{\text{(I.H)}}{\Leftrightarrow}&a\notin \overline{C_1}^{\overline\Imc_\Sigma}\text{ and }a\notin \overline{C_2}^{\overline\Imc_\Sigma}\\
            &\Leftrightarrow&a\notin (\overline{C_1}^{\overline\Imc_\Sigma}\cup \overline{C_2}^{\overline\Imc_\Sigma})\\
            &\Leftrightarrow&a\notin (\overline{C_1}\sqcup \overline{C_2})^{\overline\Imc_\Sigma} = \overline{(C_1\sqcap C_2)}^{\overline\Imc_\Sigma}
        \end{eqnarray*}
        \item $C = (C_1\sqcup C_2)$:
        \begin{eqnarray*}
            &&a\in (C_1\sqcup C_2)^{\mathcal I}\\
            &\Leftrightarrow&a\in C_1^{\mathcal I}\cup C_2^{\mathcal I}\\
            &\stackrel{\text{(I.H)}}{\Leftrightarrow}&a\notin \overline{C_1}^{\overline\Imc_\Sigma}\text{ or }a\notin \overline{C_2}^{\overline\Imc_\Sigma}\\
            &\Leftrightarrow&a\notin (\overline{C_1}^{\overline\Imc_\Sigma}\cap \overline{C_2}^{\overline\Imc_\Sigma})\\
            &\Leftrightarrow&a\notin (\overline{C_1}\sqcap \overline{C_2})^{\overline\Imc_\Sigma} = \overline{(C_1\sqcup C_2)}^{\overline\Imc_\Sigma}
        \end{eqnarray*}
        \item $C= \exists r.C_1$:
        \begin{eqnarray*}
            &&a\in(\exists r.C_1)^{\mathcal I}\\ 
            &\Leftrightarrow&\text{ there is } (a,b)\in r^{\mathcal I}\text{ with }b\in C_1^{\mathcal I}\\
            &\stackrel{\text{(I.H)}}{\Leftrightarrow}&\text{ there is } (a,b)\in r^{\overline\Imc_\Sigma}\text{ and }b\notin \overline{C_1}^{\overline\Imc_\Sigma}\\
            &\Leftrightarrow&a\notin (\forall r.\overline{C_1})^{\overline\Imc_\Sigma} = \overline{(\exists r. C_1)}^{\overline\Imc_\Sigma}
        \end{eqnarray*}
        \item $C=\forall r.C_1$:
            \begin{eqnarray*}
                &&a\in(\forall r. C_1)^{\mathcal I}\\
                &\Leftrightarrow& \text{for all }(a,b)\in r^\Imc:b\in C_1^\Imc \\
                &\stackrel{\text{(I.H)}}{\Leftrightarrow}&\text{for all }(a,b)\in r^{\mathcal I}: b\notin \overline{C_1}^{\overline\Imc_\Sigma}\\
                &\Leftrightarrow& \text{for all }(a,b)\in r^\Imc:b\in (\neg \overline C_1)^{\overline\Imc_\Sigma} \\
                 &\Leftrightarrow& a\in (\forall r.\neg \overline{C_1})^{\overline\Imc_\Sigma}= (\neg \exists r. \overline C_1)^{\overline\Imc_\Sigma} = (\neg \overline{\forall r.C_1})^{\overline\Imc_\Sigma} \\
                &\Leftrightarrow& a\notin \overline{\forall r. C_1}^{\overline\Imc_\Sigma}
            \end{eqnarray*}
    \end{enumerate}
\end{proof}

\fittingdual*

\begin{proof}
We show only the "only if"-direction; the "if"-direction is symmetric. Suppose $C$ fits $P,N$. By definition, $a\in C^{\mathcal I}$ for all $(\mathcal I,a)\in P$ and $b\notin C^{\mathcal J}$ for all $(\mathcal J,b)\in N$. By Lemma~\ref{lem_dual}, $a\notin \overline C^{\overline\Imc_\Sigma}$ and $b\in \overline C^{\overline\Jmc_\Sigma}$ for all $(\mathcal I,a)\in P$ and $(\mathcal J,b)\in N$. Thus, $\overline C$ fits $\overline N_\Sigma,\overline P_\Sigma$. 
\end{proof}

\section{Proofs for Section~\ref{sec:complexity}}

\proplower*

We give here the remaining details of the proof for Proposition~\ref{prop:technplower} sketched in the main part. We start with giving a precise definition of interpretations~$\Imc,\Jmc$.

Recall that $(\{S_1,\ldots,S_m\},k)$ is the hitting set instance we consider, and that we assume that $\bigcup_i S_i=\{1,\ldots,n\}$. We first define an interpretation $\mathcal J'$ that encodes only the sets $S_1,\ldots,S_m$ and then extend it to obtain \Imc and \Jmc. As described in the main part, $\mathcal J'$ contains an $r$-path of length $n$ for each set $S_j$, with suitable detours via $s$. The elements on the $r$-paths are called $b_{j,0},\ldots,b_{j,n}$ for each set $S_j$, and the extra elements are called $b_{j,i}'$. Recall that there is an additional sink element $c$. The interpretation $\mathcal J'$ is given as follows. 
\begin{align*}
  \Delta^{\mathcal J'} ={}& \{b_{j,i}\mid 1\le j \le m, 0\le i \le n\}\cup{}\\
  & \{b_{j,i}'\mid 1\le j \le m, 0\le i \le n, i \notin S_j\}\cup \{c\} \\
A^{\mathcal J'} ={}& \{b_{j,n}\mid 1\le j\le m\}\\
           r^{\mathcal J'} ={}&\{(b_{j,i-1}, b_{j,i})\mid 1\le j\le m, 1\le i \le n\}\cup\{(c,c)\}\cup{}\\
           & \{(b_{j,i}', c)\mid 1\leq j\leq m,1\leq i\leq n, i\notin S_j\}\cup{}\\
           & \{(b_{j,n}, c)\mid 1\leq j\leq m\}\\
           s^{\mathcal J'} ={}& \{(b_{j,i-1}, b'_{j,i}),(b'_{j,i},b_{j,i})\mid 1\le j\le m, 1\le i \le n, i\notin S_j\}\cup{}\\
           & \{(b_{j,i-1}, c)\mid 1\le j\le m, 1\le i \le n, i\in S_j\} \cup\{(c,c)\}\cup{}\\
           & \{(b_{j,n}, c)\mid 1\leq j\leq m\}
\end{align*}
We construct \Imc by extending $\mathcal J'$ as follows. 
\begin{align*}
\Delta^{\mathcal I} ={}& \{a\}\cup \{a_{i},a_i'\mid 0\le i\le n\} \cup \Delta^{\mathcal J'}\\
            A^{\mathcal I} ={}& \{a_{n}\}\cup A^{\mathcal J'}\\
            r^{\mathcal I} ={}& \{(a,a_0)\}\cup\{(a_{i-1},a_{i})\mid 1\le i\le n\}\cup r^{\mathcal J'}\cup{} \\
            & \{ (a, b_{j, 0}) \mid 1 \leq j \leq m\} \cup {} \\
            & \{(a_i',c)\mid 1\leq i\leq n\}\cup \{(a_n,c)\}\\
            s^{\mathcal I} ={}& \{(a_{i-1},a'_{i}), (a_i',a_i)\mid 1\le i\le n\}\cup \{(a_n,c)\}\cup s^{\mathcal J'} 
            \end{align*}
The interpretation $\mathcal J$ is the extension of $\mathcal J'$ with an element $b$ that connects to the starting points of the components representing the sets in $S$. It is formally defined as follows.
\begin{align*}
\Delta^{\mathcal J} &= \{b\}\cup \Delta^{\mathcal J'}\\            
A^{\mathcal J} &= A^{\mathcal J'}\\
r^{\mathcal J} &=\{(b,b_{j,0})\mid 1\leq j\leq m\}\cup r^{\mathcal J'}\\
s^{\mathcal J} &= s^{\mathcal J'}
\end{align*}

We define $P=\{(\Imc,a)\}$, $N=\{(\Jmc,b)\}$, and $k'=n+k+2$. To show Equivalence~\eqref{eq:correctness} from the main part, it suffices to show that the following are equivalent. 
\begin{enumerate}[label=(\roman*)]

  \item $S$ has a hitting set of size at most $k$; 
  
  \item there is an $\Lmc(\{\exists\})$ concept fitting $P,N$ of size at most $k'$; 
  
  \item there is an $\ALC$ concept fitting $P,N$ of size at most $k'$.
\end{enumerate}

 For implication (i) $\Rightarrow$ (ii), consider a hitting set $H$ and the concept $D=\exists r.C_n$ defined based on $H$. Based on the construction of $C_n$, it is easy to verify that $a\in D^\Imc$. To see that $b\notin D^\Jmc$, suppose, with the aim of showing a contradiction, that $b_{j,0}\in C_n^\Jmc$, for some $j$ with $1\leq j\leq m$. Based on the definition of \Jmc and the fact that each $C_i$ needs a reachable point satisfying $A$, we can verify using induction that:
 \begin{equation}\label{eq:np-hardness-ind}
  b_{j,\ell}\in C_{n-\ell}^\Jmc \text{ for all } \ell\in\{0,\ldots,n\}.
 \end{equation}
Consider any element $i\in S_j\cap H$ and consider $b_{j,i-1}$, which by Equation~\eqref{eq:np-hardness-ind} satisfies $b_{j,i-1}\in C_{n-i+1}^\Jmc$. By definition of $C_{n-i+1}$ and $i\in H$, $C_{n-i+1}$ is of shape $\exists s.\exists s.C_{n-i}$. By definition of $\Jmc$, the only $s$-successor of $b_{j,i-1}$ is $c$ which does not satisfy $C_{n-i}$, a contradiction.
   
Implication (ii) $\Rightarrow$ (iii) is trivial. 

To prove (iii) $\Rightarrow$ (i), let $C_0$ be an $\alc$ concept of size $\lVert C_0\rVert\leq k'$ that fits~$P,N$. For now, we assume $C_0$ to be in negation normal form, that is, negation occurs only in front of concept names. We argue that this is without loss of generality later. We will show that $C_0$ can be gradually transformed into an $\Lmc(\{\exists\})$ concept $C$
of size $\lVert C\rVert\le k'$ that fits $P,N$.

\bigskip
We start restricting the overall shape. 

\begin{claim}
There is an \ALC concept $C$ of the form $\exists r. D$ of size at most $k'$ that fits $P,N$.
\end{claim}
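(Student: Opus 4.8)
The plan is to show that the minimal-size fitting concept $C_0$ in negation normal form can be rewritten, without increasing size, into a concept of the shape $\exists r.D$. First I would observe that $C_0$ must be satisfied at the root $a$ of the positive example $\Imc$. The root $a$ has only $r$-successors (namely $a_0$ and all the $b_{j,0}$), and no $s$-successor, no concept-name membership. I would therefore analyse the top-level Boolean structure of $C_0$. Since $a \notin A^\Imc$, a top-level conjunct or disjunct equal to $A$ or $\neg A$ is either vacuous (remove it, decreasing size) or makes $C_0$ unsatisfiable at $a$ (impossible). A top-level $\forall s.E$ is vacuously true at $a$ and can be dropped. A top-level $\exists s.E$ is false at $a$, so if it sits under a conjunction $C_0$ is false at $a$ — contradiction — and if it sits under a disjunction it can be dropped. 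Similarly $\forall r.E$ at the top level forces $E$ at every $r$-successor, in particular at some $b_{j,0}$ in $\Jmc$ as well (after we fold the negative example in), so I expect to replace it by a single $\exists r.E$ without affecting fitting; this is where a little care is needed. Repeating these simplifications, and using that the examples are single (so a top-level disjunction can be collapsed to the one disjunct witnessing the positive example, and a top-level conjunction analysed one conjunct at a time), drives the top of $C_0$ down to the form $\exists r.D$, or to $\top$, which does not fit, or to a concept that is already smaller.

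More precisely, I would set this up as a normalization lemma: among all fitting concepts of size $\le k'$ in NNF, pick one, call it $C_0$; show that if its root is not labelled $\exists r$, then one of the rewrite steps above produces another fitting concept of size $\le k'$ whose syntax tree is strictly smaller in some well-founded measure (e.g. number of nodes, or number of nodes above the first $\exists r$). Since $k'\ge 3$ and no fitting concept can have size $\le 2$ (a concept of size $\le 2$ is $\top,\bot,A,\neg A,\exists r.\top,\exists s.\top,\dots$, none of which separates $a$ from $b$ — one checks this directly from the construction, since $a$ and $b$ agree on all concept names and both have the same set of available role successors up to the sink), the normalization must terminate at a concept of the form $\exists r.D$.

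The step I expect to be the main obstacle is the removal of universal restrictions $\forall r.E$ and $\forall s.E$ occurring at the top level of the positive side, because one has to be sure that replacing them does not break fitting of the \emph{negative} example $(\Jmc,b)$. The clean way to handle this is to invoke the reduction-to-single-interpretation convention already used in the SAT encoding: combine $\Imc$ and $\Jmc$ into one interpretation (disjoint union), so that $C_0$ simultaneously holds at $a$ and fails at $b$. Then a top-level $\forall r.E$, being true at $a$, is in particular true at $a_0$ and at every $b_{j,0}$, so $b \in (\forall r.E)^\Jmc \Leftrightarrow b\in(\exists r.E)^\Jmc$ given that $b$ has at least one $r$-successor; hence swapping $\forall r$ for $\exists r$ preserves both the positive and the negative condition while not increasing size. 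The reason this swap is safe globally — not just at the top — is the structural property highlighted in the proof sketch: off the roots $a,b$, the roles $r$ and $s$ are total functions, so $\forall t.E$ and $\exists t.E$ are equivalent everywhere except possibly at $a$ and $b$, and at those two points the argument just given applies. With that observation in hand, the claim follows: apply the Boolean simplifications to peel the top down to a single root constructor, which must be $\exists r$ since every other option has been eliminated, yielding a fitting $C$ of the form $\exists r.D$ with $\lVert C\rVert \le \lVert C_0\rVert \le k'$.
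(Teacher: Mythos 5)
Your proposal is correct and takes essentially the same approach as the paper: reduce the top-level constructor of a minimal-size NNF fitting concept to $\exists r$ by exploiting (a) that positive and negative examples are single, so a top-level $\sqcup$/$\sqcap$ collapses to a single disjunct/conjunct, and (b) that $a,b$ have only $r$-successors with the $r$-successors of $b$ contained among those of $a$. One small note: you frame the top-level $\forall r.E$ case as a ``safe swap'' to $\exists r.E$, but that case is in fact impossible --- if $a\in(\forall r.E)^\Imc$ then $E$ holds at every $r$-successor of $a$, hence at every $b_{j,0}$, hence $b\in(\forall r.E)^\Jmc$, contradicting the negative example --- so your swap is sound only vacuously; the paper observes the impossibility directly, which is the cleaner conclusion.
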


\noindent\textit{Proof of the claim.} We first observe that $C_0$ is not the concept name $A$ as $a\notin A^{\mathcal I}$. If $C_0$ is a disjunction or conjunction, we can choose one of the disjuncts or conjuncts, respectively, as a smaller fitting concept $C$. (This is due to the fact that there are only single positive and negative examples, respectively.) Because both $a$ and $b$ have only $r$-successors, $C$ must have the form $\exists r. D$ or $\forall r.D$. We can rule out the latter as $a\in (\forall r.D)^{\mathcal I}$ would imply $b\in(\forall r.D)^{\mathcal J}$. \qed

\medskip
Next, we will eliminate universal restrictions. 
\begin{claim}
There is an $\mathcal L(\{ \exists,\sqcap,\sqcup,\neg \})$ concept of shape $\exists r.D$ and size at most $k'$ that fits $P,N$.
\end{claim}

\noindent\textit{Proof of the claim.} Take the concept $C$ fitting $P,N$ resulting from the previous claim. Since $C$ is of the form $\exists r.D$, the definitions of $\Imc,\Jmc$ implies that: 
\begin{itemize}
    \item $a_0\in D^\Imc$, and 
    \item $b_{j,0}\notin D^\Jmc$ for all $j\in \{1,\ldots,m\}$.
\end{itemize}
Consider the sub-interpretations of $\Imc$ rooted in $a_0$ and the sub-interpretations of \Jmc rooted at $b_{j,0}$ for all $j$. It is readily seen that in all these sub-interpretations, the roles $r$ and $s$ are interpreted as total and functional relations. It is well known that in such interpretations the extensions of $\exists r.E$ and $\forall r.E$ coincide for all \ALC concepts $E$. Hence, if we replace in $D$ every sub-concept $\forall r.E$ by $\exists r.E$, the resulting concept $D'$ still satisfies $a_0\in (D')^\Imc$ and 
$b_{j,0}\notin (D')^\Imc$ for all $j\in \{1,\ldots,m\}$. The concept $\exists r.D'$ is then as required by the claim.\qed
\medskip
Next, we will eliminate disjunctions. 
   \begin{claim}
       There is a $\mathcal L(\{ \exists,\sqcap,\neg \})$ concept of shape $\exists r.D$ and of size at most $k'$ that fits $P,N$.
   \end{claim}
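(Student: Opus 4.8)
\textit{Proof plan.}
The goal is to eliminate disjunctions from the $\mathcal L(\{\exists,\sqcap,\sqcup,\neg\})$ concept $\exists r.D$ obtained in the previous claim. The plan is to argue that, because $N$ consists of a single negative example $(\Jmc,b)$, we may ``pick a disjunct'' at every disjunction in $D$ without losing the fitting property, exactly mirroring the argument used in the first claim to remove a top-level disjunction. Concretely, I would proceed by a structural induction / innermost-first rewriting on $D$: whenever $D$ contains an occurrence of a subconcept $E_1\sqcup E_2$ such that $E_1,E_2$ are $\sqcup$-free, I want to replace $E_1\sqcup E_2$ by one of $E_1$ or $E_2$ and show that for an appropriate choice the resulting concept still fits $P,N$, while not increasing the size.

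The key observation to make this work is the contrast between the positive and the negative side. On the negative side there is only the single element $b$: the concept $\exists r.D$ fails at $b$ iff $b_{j,0}\notin D^\Jmc$ for every $j$, and this condition is \emph{monotone} under replacing $D$ by a logically stronger (more specific) concept — and in particular replacing a disjunction by one of its disjuncts only makes the concept more specific, so it still fails at all the $b_{j,0}$ and hence at $b$. The subtlety is the positive side: we must choose, at each disjunction, a disjunct that is still satisfied along the relevant witnessing element(s) of $\Imc$. Here I would fix the (unique, by totality and functionality of $r,s$ below $a_0$) element $e$ of $\Imc$ at which the relevant occurrence of $E_1\sqcup E_2$ is evaluated when verifying $a_0\in D^\Imc$; since $e\in (E_1\sqcup E_2)^\Imc$, we have $e\in E_1^\Imc$ or $e\in E_2^\Imc$, and we keep the corresponding disjunct. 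Because the sub-interpretation of $\Imc$ rooted at $a_0$ is a deterministic tree (each element has exactly one $r$- and one $s$-successor), the point at which any fixed syntactic occurrence of a subconcept gets evaluated is uniquely determined, so this choice is well-defined; one then checks by a routine induction on concept structure that fixing the disjunct at $e$ preserves $a_0\in D^\Imc$. Iterating innermost-first eliminates all disjunctions, and since replacing $E_1\sqcup E_2$ by $E_1$ or by $E_2$ strictly decreases size, the bound $\lVert \exists r.D\rVert\le k'$ is preserved (indeed improved).

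I expect the main obstacle to be making precise the claim that ``each syntactic occurrence of a subconcept is evaluated at a unique element,'' and handling negation: under negation the monotonicity direction flips, so replacing a disjunction nested under a $\neg$ by a disjunct is \emph{weaker}, not stronger. To deal with this cleanly, I would not track ``more specific / more general'' globally but instead carry out the induction on the parse tree of $D$ with a two-sided invariant: for the occurrence being rewritten, I record both the element of $\Imc$ at which it must hold (or fail) to keep $a_0\in D^\Imc$ and — thanks to determinism — there is exactly one such element, and the truth value required there is determined by the parity of negations above it. Picking the disjunct that has the correct truth value at that element then preserves the evaluation of $D$ at $a_0$ in all cases, and simultaneously, because strengthening or weakening a subconcept under an even or odd number of negations respectively can only make $D$ itself stronger at $a_0$ — wait, more carefully: what we actually need is only that the specific run of the evaluation at $a_0$ is preserved and that the evaluation at every $b_{j,0}$ remains false; the latter follows because making a subconcept true at fewer points (after fixing a disjunct in a positive position) or at more points (in a negative position) still cannot create a satisfying run at $b_{j,0}$, using again that below $b_{j,0}$ the interpretation is deterministic so there is at most one candidate run. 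I would phrase this as a single lemma: in a deterministic interpretation, if $\exists r.D$ distinguishes $a$ from $b$ with $D\in\mathcal L(\{\exists,\sqcap,\sqcup,\neg\})$, then there is a $\sqcup$-free $D'$ with $\lVert D'\rVert\le\lVert D\rVert$ that also distinguishes them. The details are routine but the bookkeeping around negation parity is where care is needed; everything else follows the template already established for the $\forall$-elimination claim.
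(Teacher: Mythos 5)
Your plan diverges from the paper's in a way that matters: the paper's proof of this chain of claims has, from the outset, assumed that the concept under consideration is in negation normal form (NNF), and justifies that assumption separately at the very end. Under NNF, negation occurs only directly in front of concept names, so \emph{every} occurrence of $\sqcup$ is in positive position. The paper then simply distributes $\sqcup$ outward (over $\sqcap$ and over $\exists r$) to get an equivalent concept $C_1 \sqcup \cdots \sqcup C_l$ with each $C_i$ being $\sqcup$-free and $\lVert C_i\rVert\le\lVert D\rVert$; picking the disjunct witnessed at the relevant $r$-successor of $a$ fits the positive side, and fits the negative side because $C_i\sqsubseteq D$. That last entailment is the whole point, and it is immediate precisely because no $\sqcup$ sits under a negation.

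Your proposal does not invoke NNF, and the attempt to cope with that by tracking negation parity has a genuine gap. You claim that replacing a disjunction under an odd number of negations ``still cannot create a satisfying run at $b_{j,0}$,'' but that is false in general. If the occurrence of $E_1\sqcup E_2$ lies under an odd number of negations, then $E_i\sqsubseteq E_1\sqcup E_2$ makes the rewritten $D'$ \emph{more general} than $D$, so $b_{j,0}\notin D^\Jmc$ does not transfer to $D'$. Concretely, with $D=\neg(E_1\sqcup E_2)$ and some $j$ for which the relevant element $e_j$ below $b_{j,0}$ satisfies $E_2$ but not $E_1$, replacing $E_1\sqcup E_2$ by $E_1$ yields $D'=\neg E_1$ with $b_{j,0}\in D'^\Jmc$, breaking the fit. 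Worse, there are $m$ distinct negative elements $b_{1,0},\ldots,b_{m,0}$, so even if for each $j$ some disjunct works, there may be no single disjunct that works for all $j$ simultaneously; determinism of the subtrees does not help with this. The repair is exactly what the paper does: carry NNF through the whole chain of claims (and handle the size bookkeeping of NNF-conversion at the end, as the paper does in the final paragraph of the proof). Once you have NNF, your iterative disjunct-picking is essentially a sequential description of the paper's DNF distribution and goes through cleanly, as $D'\sqsubseteq D$ then holds at every step.
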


   \noindent\textit{Proof of the claim.} Let $C=\exists r.D$ be a fitting $\mathcal L(\{\exists,\sqcap,\sqcup,\neg\})$ concept that exists due to the previous claim. Using distributivity laws, we can transform the concept $D$ into an equivalent concept $C_1\sqcup\dots\sqcup C_l$ such that each $C_i$ is an $\mathcal L(\{\exists,\sqcap,\neg\})$ concept. One can easily verify that $\lVert C_i\rVert\le \lVert D\rVert$ for each $i\in \{1,\dots,l\}$ after the transformation. For at least one $i\in\{1\dots,l\}$ we must have $a'\in C_i^{\mathcal I}$ for some $r$-successor $a'\in\Delta^{\mathcal I}$ of $a$. We choose the concept $\exists r.C_i$ as a fitting $\mathcal L(\{\exists,\sqcap,\neg \})$ concept which is at most as large as $C$.\qed

   \medskip
   
   Finally, we proceed by eliminating conjunctions and negations. 

   \begin{claim}
        There is a $\mathcal L(\{\exists \})$ concept $C$ of shape $C=\exists r.D$ of size at most $k'$ that fits $P,N$.
   \end{claim}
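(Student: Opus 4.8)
I would prove this claim by a size‑minimality argument. The plan is to fix a fitting concept $C=\exists r.D$ with $D\in\mathcal{L}(\{\exists,\sqcap,\neg\})$ in negation normal form, chosen of \emph{minimal size} among all such concepts (one exists by the previous claim, and has size at most $k'$), and to show that a minimal such $D$ contains no $\neg$, no occurrence of $\top$ or $\bot$, and no $\sqcap$, so that $D\in\mathcal{L}(\{\exists\})$ as required. Three features of the construction would be used throughout. (a) In the sub‑interpretation of $\Imc$ rooted at $a_0$ and in the sub‑interpretation of $\Jmc$ rooted at any $b_{j,0}$, both $r$ and $s$ are total functions; hence in such a $\mathcal X$ every word $p\in\{r,s\}^*$ reaches from a given element $d$ a unique element $p(d)$, and $d\in E^{\mathcal X}$ for an NNF concept $E$ amounts to checking, for each leaf $\ell$ of the syntax tree of $E$, that its label (one of $A,\neg A,\top,\bot$) holds at the element reached from $d$ along the word spelled by the existentials above $\ell$. (b) The sink $c$ cannot reach $A$: in each such $\mathcal X$ every word keeps $c$ at $c\notin A$, so $c\notin(\exists w.A)^{\mathcal X}$ for all $w$. (c) Once $\neg$ is eliminated, $D$ is built from $\exists$ and $\sqcap$ only, so its truth at a root is monotone in the truth values of its subconcepts.

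First I would eliminate $\neg$ together with the truth constants. The key observation is that a $\neg A$‑leaf is never \emph{violated} at a $b_{j,0}$: if the word $p$ of such a leaf reached an $A$‑element from $b_{j,0}$, that element would have to be $b_{j,n}$, which forces $p$ to be a valid detour word for the path of $S_j$, hence also a valid detour word for the empty‑set component, so $p$ reaches $a_n\in A$ from $a_0$ — contradicting that the leaf is satisfied at $a_0$ (it is, since $a_0\in D^{\Imc}$). A $\bot$‑leaf cannot occur, as $\exists$ and $\sqcap$ would propagate it and contradict $a_0\in D^{\Imc}$. Consequently, whenever $D$ fails at some $b_{j,0}$ the violated leaf is an $A$‑leaf, so replacing any $\neg A$‑leaf by $\top$ preserves fitting (it only weakens $D$, keeping the positive example, and the violated $A$‑leaves at the negative examples are untouched). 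Every remaining $\top$ is then removed, using that $\exists t.\top\equiv\top$ over these serial sub‑interpretations and $E\sqcap\top\equiv E$; iterating either strictly shrinks $D$, contradicting minimality, or turns $D$ into $\top$, which is impossible since $\exists r.\top$ holds at $b$. Hence minimal $D$ is a $\{\exists,\sqcap\}$‑concept all of whose leaves are $A$.

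The heart of the argument is showing that minimal such $D$ contains no $\sqcap$. Suppose it does, and pick a $\sqcap$‑node $\nu$ with no $\sqcap$ below it, so $\nu=F_1\sqcap F_2$ with each $F_i$ of the form $\exists v_i.A$. Let $p$ be the word from the root of $D$ to $\nu$; since $a_0\in D^{\Imc}$, the element $e:=p(a_0)$ of $\Imc$ satisfies $\nu$. If some $F_i=A$ then $e=a_n$, and by (b) the other conjunct holds at $a_n$ only if its word is empty — i.e.\ both $F_i=A$, but then $A\sqcap A\equiv A$ yields a smaller fitting concept, contradicting minimality. If $F_1,F_2$ started with the same $\exists t$, then $\nu\equiv\exists t.(H_1\sqcap H_2)$ over every functional interpretation (in particular all relevant sub‑interpretations), again a smaller fitting concept, contradiction. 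So $\nu=\exists r.G_1\sqcap\exists s.G_2$ up to swapping $r,s$; since $G_1$ must reach $A$ from $r(e)$, (b) gives $r(e)\ne c$, so $e=a_i$ for some $i<n$. Then $r(e)=a_{i+1}\in G_1^{\Imc}$ and $s(e)=a_{i+1}'$, from which only the $s$‑edge can reach $A$, so $G_2=\exists s.G_2'$ with $a_{i+1}\in{G_2'}^{\Imc}$; thus $\nu=\exists r.G_1\sqcap\exists s.\exists s.G_2'$ with $G_1,G_2'\in\mathcal{L}(\{\exists\})$ both satisfied at $a_{i+1}$.

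I would then replace $\nu$ by the strictly smaller $\nu':=\exists s.\exists s.(G_1\sqcap G_2')$. It holds at $e=a_i$ because $s(s(a_i))=a_{i+1}$ satisfies $G_1$ and $G_2'$, so by (c) the positive example is preserved. For the negative examples, by (c) it suffices that $\nu'$ implies $\nu$ at $e_j:=p(b_{j,0})$ in $\Jmc$ for every $j$. If $p$ leaves the path of $S_j$ then $e_j=c$, where $\nu'$ is false (as $c\notin G_1$), so the implication is vacuous. Otherwise $e_j=b_{j,i}$: if level $i{+}1\notin S_j$ then $s(s(b_{j,i}))=b_{j,i+1}=r(b_{j,i})$, so $\nu'$ true at $b_{j,i}$ gives $b_{j,i+1}\in G_1\cap G_2'$ and hence both $\exists r.G_1$ and $\exists s.\exists s.G_2'$ hold there; if $i{+}1\in S_j$ then $s(s(b_{j,i}))=c\notin G_1$, so $\nu'$ is again false there. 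In all cases the truth of $\nu'$ at $e_j$ is at most that of $\nu$ at $e_j$, so $D[\nu:=\nu']$ still fails at every $b_{j,0}$, making $\exists r.D[\nu:=\nu']$ a strictly smaller fitting concept of the required form — contradicting minimality. Hence $D$ has no $\sqcap$, so $D\in\mathcal{L}(\{\exists\})$ and $C=\exists r.D$ is as desired. The main obstacle is exactly this replacement step: naively merging the two branches of $\nu$ into a single $\exists r$ would fail at negative examples where the level‑$(i{+}1)$ detour is blocked ($i{+}1\in S_j$); routing through the $s$‑detour instead is what makes $\nu'$ soundly under‑approximate $\nu$ on every $(\Jmc,b_{j,0})$.
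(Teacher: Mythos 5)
Your proof is correct, but it takes a genuinely different route from the paper's. The paper transforms the fitting concept top--down, maintaining an explicit invariant on pairs $(w_i, D_i)$ (namely: $w_i$ reaches a unique element of the $a$-component satisfying $D_i$, while no element of $R_{w_i}(\Jmc,b)$ satisfies $D_i$), and resolves a conjunction $\exists r.F_1\sqcap\exists s.F_2$ by \emph{discarding} one of the two conjuncts outright depending on whether the reached element is an $a_\ell$ or an $a_\ell'$. You instead argue by size minimality among all fitting concepts of the form $\exists r.D$ with $D$ in $\mathcal{L}(\{\exists,\sqcap,\neg\})$ in NNF, first disposing of $\neg A$-leaves via the nice word-reachability observation that a detour word which reaches $b_{j,n}$ also reaches $a_n$ (so a $\neg A$-leaf true at $a_0$ can never be the leaf that fails at a $b_{j,0}$), and then removing a \emph{lowest} $\sqcap$-node $\nu=\exists r.G_1\sqcap\exists s.\exists s.G_2'$ by routing both conjuncts through the $s$-detour, i.e.\ replacing it by the strictly smaller $\exists s.\exists s.(G_1\sqcap G_2')$, which under-approximates $\nu$ at every relevant $b_{j,i}$ or $c$. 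The leaf-based characterization of satisfaction in functional interpretations that you use throughout is also a pleasant simplification compared to the paper's case analysis. One small gap in your exposition (not in the substance): when handling negative examples you assert that $e_j:=p(b_{j,0})$ is either $c$ or $b_{j,i}$ for \emph{the same} index $i$ with $p(a_0)=a_i$. This is true, but needs a short induction on prefixes of $p$ showing that $q(a_0)=a_\ell$ forces $q(b_{j,0})\in\{b_{j,\ell},c\}$ and $q(a_0)=a'_\ell$ forces $q(b_{j,0})\in\{b'_{j,\ell},c\}$; you should state and prove that coupling lemma explicitly, since otherwise $e_j$ could a priori be some detour node $b'_{j,k}$ that your case split does not cover. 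Similarly, you should note explicitly that $a_0\in D^\Imc$ (rather than some $b_{j,0}\in D^\Imc$), which follows because the $b_{j,0}$-rooted parts of $\Imc$ and $\Jmc$ are identical, so $D$ holding at some $b_{j,0}$ would make $C=\exists r.D$ hold at the negative root $b$.
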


\noindent\textit{Proof of the claim.} Let $C=\exists r.D$ be the $\Lmc(\{\exists,\sqcap,\neg\})$ concept that exists due to the previous claim.
We need some auxiliary notation. For a finite sequence $w=r_1\cdots r_m\in\{r,s\}^*$, we abbreviate with $\exists w.C$ the concept $\exists r_1.\ldots.\exists r_m.C$. Moreover, for some interpretation $\Imc_0$ and $a_0\in \Delta^{\Imc_0}$, we let $R_w(\Imc_0,a_0)$ denote the set of all $b$ such that $(a_0,b)\in r_1^{\Imc_0}\circ\ldots\circ r_m^{\Imc_0}$, where $\circ$ denotes the composition of binary relations.

We will transform $C$ inductively into concepts of the shape $C_1=\exists w_1.D_1,C_2=\exists w_2.D_2,\ldots$ such that each $C_i$ fits $P,N$, is of size at most $k'$, and 
\begin{enumerate}

    \item $R_{w_i}(\Imc,a)\cap \{a_0,a_0',\ldots,a_n,a_n'\}$ contains a single element and this element satisfies $D_i$;

    \item $R_{w_i}(\Jmc,b) \neq \emptyset$ and no element from $R_{w_i}(\Jmc,b)$ satisfies $D_i$.
    
\end{enumerate}

In the inductive base, we set $w_1=r$ and $D_1=D$.

Clearly, Properties~1 and~2 are satisfied by this choice.

\smallskip For the inductive step, take any $C_i=\exists w_i.D_i$. We distinguish cases on the shape of $D_i$. The easy cases are the following: 
\begin{itemize}

    \item If $D_i=A$, we are done.
    
    \item The case $D_i=\top$ is impossible due to Property~2 above.
    
    \item The case $D_i=\neg A$ is also impossible due to Property~2 above.
    
    \item If $D_i=\exists t.D_i'$ for $t\in \{r,s\}$, we can just extend $w_i$ with $t$, that is, $w_{i+1}=w_it$ and $D_{i+1}=D_i'$.

\end{itemize}

The final case requires a bit more care. Let $D_i=E_1\sqcap \ldots\sqcap E_m$ be a non-empty conjunction.  If some conjunct is $\top$, we can simply remove it. If some conjunct is $\neg A$, we can remove it due to Property~2. If there are two conjuncts $\exists r.F$, $\exists r.G$, then we can replace them with $\exists r.(F\sqcap G)$, due to functionality, and the same with $s$. 

Suppose now that some conjunct in $D_i$ is $A$. Then the element witnessing Property~1 has to be $a_n$. This means that for every other conjunct $\exists t.F$ in $D_i$, $t\in\{r,s\}$, $F$ is satisfied in $c$. However, every $\Lmc(\{\exists,\sqcap,\neg\})$ concept in negation normal form satisfied in $c$ is satisfied in every element in $R_{w_i}(\Jmc,b)$. Hence, we can simply set $D_i=A$ and are done. 

Hence, we can assume that $D_i$ is actually of shape $\exists r.F_1\sqcap \exists s.F_2$. 

We distinguish cases on the shape of the element $d$ from Property~1.
\begin{itemize}
    \item If $d=a_\ell$, for some $\ell$, then we can replace $D_i$ with $\exists s.\exists s.F_1$. This is clearly satisfied in $d$. Moreover, in this case, elements in $R_{w_i}(\mathcal J,b)$ are either $c$ or $b_{j,k}$, and all of them do not satisfy $\exists s.\exists s.F_1$, by the assumption that they do not satisfy $\exists r.F_1\sqcap \exists s.F_2$. Thus Properties~1 and~2 are preserved. 
    
    \item If $d=a_\ell'$, for some $\ell$, then we can drop $\exists r.F_1$ since $F_1$ has to be satisfied in $c$ and every $\Lmc(\exists,\sqcap,\neg)$ concept in negation normal form satisfied in $c$ is satisfied in every element in $R_{w_i}(\Jmc,b)$. This preserves Properties~1 and~2.
    
\end{itemize}
This finishes the proof of the Claim.\qed

\bigskip
   
Let now be $C$ a fitting $\Lmc(\{\exists\})$ concept that fits $P,N$ and is of size at most $k'$. We make two observations on the shape of $C$, using the notation of the proof of the previous claim.
\begin{itemize}

    \item $C$ is of shape $\exists r.\exists w.A$ for some $w$: the only other option for the innermost concept would be $\top$, but $\exists r.\exists w.\top$ is satisfied in $b$. 

    \item $w\in (r+ss)^+$ since this is the only way to "reach" $a_n$ in the positive example. Moreover, due to the size bound $\lVert C \rVert \leq k'=n+k+2$, the pattern $ss$ occurs at most $k$ times in $w$. 
    
\end{itemize}
Due to the second item above we can write $w=w_1\ldots w_n$ with $w_i\in\{r,ss\}$ for all $i$ and such that $w_i=ss$ at most $k$ times. We read off a hitting set $H$ of size at most $k$ from $C$ by taking
\[H = \{i\in\{1,\ldots,n\}\mid w_i=ss\}\]
We have to show $H\cap S_i\neq \emptyset$, for every $j$. Suppose, with the goal of deriving a contradiction, that $H\cap S_j=\emptyset$. Then, by the construction of $\Jmc$, we have $b_{j,0}\in (\exists w.A)^\Jmc$. Hence, $b\in C^\Jmc$ and $(\Jmc,b)$ is not a negative example, contradiction.

\bigskip
We have yet to justify our initial assumption that a fitting concept is given in negation normal form. For that, consider any smallest fitting concept $C_0$ (not necessarily in negation normal form), that is $\lVert C\rVert \ge \lVert C_0\rVert$ for all fitting concepts~$C$. Let $C_0'$ be the negation normal form of $C_0$. We show that we either can obtain a fitting $\mathcal L(\{\exists\})$ concept as outlined above, or that we can construct a smaller fitting concept than $C_0$ thus reaching a contradiction. We cannot have $\lVert C_0'\rVert <\lVert C_0\rVert$ as $C_0$ is a smallest fitting concept. In the case of $\lVert C_0'\rVert=\lVert C_0\rVert$ we can obtain a fitting $\mathcal L(\{\exists\})$ concept from $C_0'$ as argued above. Now, assume $\lVert C_0'\rVert > \lVert C_0\rVert$. In this case, $C_0$ must contain at least one negation and $C_0'$ must contain at least two negations. A closer look at the above proof yields that from such a concept $C_0'$ we can obtain a fitting concept $C'$ that is smaller than $C_0$, i.e. $\lVert C'\rVert< \lVert C_0\rVert$. In all transformations on the original fitting concept we either directly choose a subconcept of the original fitting concept or we remove negated concept names to obtain a fitting concept of the desired form. Thus, when an $\mathcal L(\{\exists,\sqcap,\neg \})$ concept is reached, negated concept names may have been removed but none have been added. The remaining negated concept names may then be removed as argued above. In conclusion, we construct a fitting concept $C$ by eliminating all negated concept names while possibly eliminating other subconcepts as well. Since $C_0$ contains at least one negation and $C$ contains none, we have obtained a fitting concept smaller than $C_0$ contradicting our assumption that $C_0$ is a smallest fitting concept. It follows that the negation normal form of any smallest fitting concept for these examples is of the same size as the smallest fitting concept, and thus we obtain a fitting concept $\mathcal L(\{\exists\})$ by assuming a smallest fitting concept to be given in NNF. \qed

\bigskip
Corollary~\ref{cor:reduction} implies that Proposition~\ref{prop:technplower} also holds for all fragments $\Lmc(O)$ with $\{\forall\}\subseteq O \subseteq \oall$. Thus, it also completes the proof of Theorem~\ref{thm:main1}.

\section{Proofs for Section~\ref{sec:generalization}}

\begin{lemma}\label{lem:fittingforall}
The fitting problem for $\Lmc(\{\forall\})$, $\Lmc(\{\forall,\sqcap\})$, and $\Lmc(\{\forall,\exists,\sqcap\})$ is \NPclass-hard. 
\end{lemma}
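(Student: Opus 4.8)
The plan is to reduce \textsc{SAT} to the fitting problem. Given a CNF formula $\varphi$ over variables $x_1,\dots,x_n$ with clauses $c_1,\dots,c_m$, I would build — over a small fixed signature, say $\{r,s,A\}$ — one (or very few) positive and negative examples such that a fitting $\Lmc(\{\forall\})$ concept exists iff $\varphi$ is satisfiable, and moreover such that \emph{no} fitting $\Lmc(\{\forall,\exists,\sqcap\})$ concept exists when $\varphi$ is unsatisfiable. Since $\Lmc(\{\forall\})\subseteq\Lmc(\{\forall,\sqcap\})\subseteq\Lmc(\{\forall,\exists,\sqcap\})$, the first property establishes \NPclass-hardness in the ``yes'' direction for all three fragments at once, and the second closes the ``no'' direction uniformly. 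The encoding idea: represent a truth assignment $\alpha$ by the word $w_\alpha\in\{r,s\}^n$ whose $i$-th letter is $r$ if $\alpha(x_i)$ is true and $s$ otherwise, and let the target concept be $\forall w_\alpha.A$. A positive example given by the complete binary $\{r,s\}$-tree of depth $n$ whose depth-$n$ nodes (and only those) satisfy $A$ forces a fitting $\Lmc(\{\forall\})$ concept to be essentially of the form $\forall w.A$ with $w\in\{r,s\}^{\ge n}$; a negative example acting as a ``clause checker'' — reading the first $n$ letters of $w$ and landing in an $A$-node exactly when the encoded assignment makes every clause true — forces the assignment of $w$ to satisfy $\varphi$.

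For the easy direction, given a satisfying assignment $\alpha$ I would exhibit $\forall w_\alpha.A$ and check directly that it holds at the positive example and fails at the negative example. This is routine, and since $\forall w_\alpha.A\in\Lmc(\{\forall\})$ it simultaneously witnesses fittability in $\Lmc(\{\forall\})$, $\Lmc(\{\forall,\sqcap\})$, and $\Lmc(\{\forall,\exists,\sqcap\})$.

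The hard direction is to show that when $\varphi$ is unsatisfiable, no $\Lmc(\{\forall,\exists,\sqcap\})$ concept fits; this is the main obstacle. Two features of the richer fragments must be neutralised. Conjunction is tamed by restricting to a single positive and a single negative example: any fitting conjunction then has a top-level conjunct that already fails at the unique negative example while still holding at the unique positive one, so $\sqcap$ buys nothing. Existential restriction is more delicate: using as negative example a sink gadget (a single self-looping element not in $A$) makes $\exists t.D$ and $\forall t.D$ logically equivalent there, so one can push every existential up to a universal at the top of the syntax tree and then peel the concept down to a plain $\forall w.A$; the subtle requirement is that this rewriting must also preserve satisfaction at the positive example, which in turn forces the clause-checker to be designed so that an existential cannot be abused to ``verify one clause'' rather than all of them (e.g.\ by making the relevant paths of the gadget deterministic, possibly at the cost of a polynomial blow-up of the word length). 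Once a fitting $\forall w.A$ with $w\in\{r,s\}^{\ge n}$ is obtained, the clause-checker forces the length-$n$ prefix of $w$ to encode a satisfying assignment of $\varphi$ — a contradiction. I expect this normalisation step (removing $\exists$ and $\sqcap$ while keeping satisfaction at every example) to be the technically heaviest part.

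It then remains to observe that all constructed interpretations have size polynomial in $|\varphi|$ and use only the fixed signature, so the reduction is polynomial. Together with Lemma~\ref{lem:fitting_dual}, the lemma also transfers to the dual fragments $\Lmc(\{\exists\})$, $\Lmc(\{\exists,\sqcup\})$, and $\Lmc(\{\exists,\forall,\sqcup\})$, which is precisely the way this lemma feeds into the proof of Theorem~\ref{thm:efficient}.
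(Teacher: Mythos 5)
Your high-level plan (reduce from SAT, encode assignments as words over a two-role alphabet, read the assignment off a $\forall$-quantifier prefix, normalise $\exists/\sqcap$ away using functional roles and a single negative example) is the same as the paper's, but the specific gadget decomposition you chose does not work, for two reasons.

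\emph{Polarity / where the clause constraint can live.} You place the clause checker in the \emph{negative} example and use a single, essentially ``trivial'' positive example. But at a negative example the fitting condition for a concept of the form $\forall w.A$ is that \emph{some} $w$-path reaches a non-$A$ node --- an existential condition. This cannot enforce the universal constraint ``$\alpha_w$ satisfies \emph{every} clause''; if you build the checker with one branch per clause, $\forall w.A$ already fails as soon as \emph{one} branch lands outside $A$, so a single violated clause would suffice. (Your sentence ``landing in an $A$-node exactly when the encoded assignment makes every clause true'' also has the polarity reversed, but even with that fixed the shape of the gadget is wrong.) The paper places the clause constraints on the \emph{positive} side, precisely because at a positive example the condition $\forall w.F$ holds iff \emph{all} $w$-paths land in $F$ --- this is the universal you need, and taking one positive example $(\Imc_i, q_0^i)$ per clause $C_i$ (each a small DFA accepting $\{w \in \{0,1\}^n : w \models C_i\}$, with $F$ marking accepting states) realises the conjunction over clauses for free. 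The negative example is then the trivial self-loop with empty $F$, whose only job is to make any $\forall$-prefix concept fail.

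\emph{Size.} Both of your gadgets are exponential. The complete binary $\{r,s\}$-tree of depth $n$ has $2^{n+1}-1$ nodes, and a \emph{deterministic} gadget recognising the set of satisfying assignments of $\varphi$ is a DFA for $\bigcap_i L(\Amc_i)$, whose minimal size can be exponential in $m$. So even after fixing the polarity, this is not a polynomial-time reduction. The paper sidesteps both blow-ups: each positive example is the polynomial-size DFA of a \emph{single} clause, the conjunction over clauses is delegated to ``must hold at all positive examples'', and no tree of depth $n$ is ever materialised.

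Your normalisation sketch for the hard direction ($\exists \leadsto \forall$ via total functional roles, $\sqcap$ removal via a single negative example) is in line with the paper and would go through once the gadgets are right. To salvage the proposal, move the clause checkers to the positive side (one polynomial-size example per clause, or a single positive example branching into the $m$ clause DFAs from a common root), drop the binary tree entirely, and keep the trivial self-loop as the unique negative example.
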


\begin{proof}
The proof is inspired by the proof of Theorem~3 in~\cite{DBLP:journals/ml/CohenH94}.

    We show this by reduction from SAT. Let $\varphi$ be a propositional formula in conjunctive normal form with $m$ clauses $C_1,\ldots,C_m$ over $n$ variables $x_1,\ldots,x_n$. For $w=w_1\cdots w_n\in\{0,1\}^n$ and some formula $\psi$, we write $w\models \psi$ if the assignment $\alpha$ defined by setting $\alpha(x_i)=w_i$, for all $i$, satisfies $\psi$.
    
    From each clause $C_i$, we can construct in time polynomial in the size of $\varphi$ a deterministic finite automaton $\Amc_i$ satisfying:
    \begin{itemize}
        \item $L(\Amc_i)\subseteq \{0,1\}^n$, and
        \item $w\in L(\Amc_i)$ if and only if $w\models C_i$, for all $w\in \{0,1\}^n$.
    \end{itemize}
    We transform each $\Amc_i=(Q_i,\delta_i,q_0^i,F_i)$ into an interpretation $\Imc_i$ using two role names $r_0,r_1$ and one concept name $F$ as follows: 
    \begin{align*}
        \Delta^{\Imc_i} & = Q_i \\
        F^{\Imc_i} & = F \\
        r_0^{\Imc_i} & = \{(q,q')\mid \delta(q,0)=q'\} \\
        r_1^{\Imc_i} & = \{(q,q')\mid \delta(q,1)=q'\}
    \end{align*}
    Let $\Jmc$ be the interpretation with domain $\Delta^\Jmc = \{a\}$ and $F^\Jmc = \emptyset$ and $r_0^\Jmc=r_1^\Jmc =\{(a,a)\}$. 

    \begin{claim} The following are equivalent for $P=\{(\Imc_i,q_0^i)\mid 1\leq i\leq m\}$, $N=\{(\Jmc,a)\}$: 
    \begin{enumerate}[label=(\roman*)]
    
        \item $\varphi$ is satisfiable;
        
        \item there is an $\Lmc(\{\forall\})$ concept fitting $P,N$;

         \item there is an $\Lmc(\{\forall,\exists,\sqcap\})$ concept fitting $P,N$.
        
    \end{enumerate}
    \end{claim}
    
    \noindent\textit{Proof of the claim.} To show, (i)$\Rightarrow$(ii), let $\alpha$ be a satisfying assignment of $\varphi$. Based on the definition of the $\Imc_i$ and $\Jmc$, it is routine to verify that the concept
    \[\forall r_{\alpha(x_1)}\ldots\forall r_{\alpha(x_n)}.F\]
    fits $P,N$.

    (ii)$\Rightarrow$(iii) is trivial. 

    For (iii)$\Rightarrow$(i), suppose $D$ is an $\Lmc(\{\forall,\exists,\sqcap\})$ concept that fits $P,N$. Let $D'\in\Lmc(\{\forall,\sqcap\})$ be obtained from $D$ by replacing every existential restriction in $D$ by a universal one (of course over the same role name). Since in all involved examples, $r_0$ and $r_1$ are interpreted as total functions, $D'$ also fits $P,N$. 
   
    Based on the facts that (a) there is only one negative example and~(b) roles are interpreted as functions, we can show that conjunction is not necessary in the sense that we 
    can obtain from $D'$ an $\Lmc(\{\forall\})$ concept $D''$ that fits $P,N$. By construction of the $\Imc_i$ and \Jmc, this concept has to take the shape
    \[\forall r_{i_1}\ldots\forall r_{i_n}.F\]
    It is now routine to verify that $w\models\varphi$ for $w=i_1\cdots i_n$.

This finishes the proof of the claim and, in fact, of the lemma.
    \qed
\end{proof}

\begin{corollary} There is no efficient PAC learning algorithm for $\Lmc(\{\forall\})$, $\Lmc(\{\forall,\sqcap\})$, and $\Lmc(\{\forall,\exists,\sqcap\})$, unless $\RPclass=\NPclass$. 
\end{corollary}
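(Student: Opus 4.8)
The plan is to obtain the corollary by combining Lemma~\ref{lem:fittingforall} with the classical observation of Pitt and Valiant~\cite{PittValiant88}: proper efficient PAC learnability of a polynomially evaluable representation class makes the corresponding sample-consistency problem solvable in randomized polynomial time with one-sided error. The relevant hypotheses hold here — every $\Lmc(O)$ under consideration is a fragment of \ALC, and deciding $a\in C^\Imc$ over a finite interpretation $\Imc$ is polynomial (this is exactly what underlies the \NPclass upper bound in Theorem~\ref{thm:main1}) — and, by the definition of PAC learning used in this paper, a PAC learning algorithm for $\Lmc(O)$ always returns a concept in $\Lmc(O)$, so learnability here is automatically proper.

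The key step is to turn an efficient PAC learner $\mathcal A$ for $\Lmc(O)$, with polynomial sample-size function $m$, into a randomized polynomial-time algorithm that correctly decides the instances produced by the reduction of Lemma~\ref{lem:fittingforall}. Recall that this reduction maps a satisfiable formula to a fitting instance admitting a fitting concept of the shape $\forall r_{i_1}\cdots\forall r_{i_n}.F$ — hence of size bounded by a fixed polynomial $p$ applied to the input size — and an unsatisfiable formula to an instance with no fitting concept at all. On input $P,N$, let $\mathit{Ex}$ be the set of distinct examples occurring in $P\cup N$, let $\Sigma$ and $s$ be the signature and the maximum example size occurring there, and let $\mathbb P$ be the uniform distribution on $\mathit{Ex}$. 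Run $\mathcal A$ with $\delta=\tfrac12$, with $\varepsilon$ strictly smaller than $1/|\mathit{Ex}|$, and with the target-size parameter set to $p$ applied to the input size; feed $\mathcal A$ the prescribed number of examples drawn from $\mathbb P$ and labeled according to $P,N$, and finally verify in polynomial time whether the returned concept fits $P,N$, accepting iff it does. This never accepts an instance with no fitting concept. When a fitting concept of size at most $p$ applied to the input size exists, take it as the target $C_T$: the PAC guarantee then yields, with probability at least $1-\delta=\tfrac12$, an output $C_H$ with $\mathrm{error}_{\mathbb P}(C_T,C_H)<1/|\mathit{Ex}|$, and since every example carries mass $1/|\mathit{Ex}|$ under $\mathbb P$ this forces $C_H$ to be correct on all of $\mathit{Ex}$, i.e.\ $C_H$ fits $P,N$ and is accepted. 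Composing this procedure with the reduction of Lemma~\ref{lem:fittingforall} gives an \RPclass algorithm for SAT, and the same works for $\Lmc(\{\forall,\sqcap\})$ and $\Lmc(\{\forall,\exists,\sqcap\})$.

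Since $\RPclass$ is closed under polynomial-time many-one reductions and $\RPclass\subseteq\NPclass$, an \NPclass-hard problem lying in \RPclass forces $\NPclass=\RPclass$, which completes the argument. The step I expect to require the most care is the treatment of the target-size parameter: because $\mathcal A$'s sample complexity may grow with the size of the target concept, Pitt--Valiant does not place the \emph{unrestricted} fitting problem into \RPclass for free, and one has to use the specific feature of Lemma~\ref{lem:fittingforall} that yes-instances admit polynomial-size fittings (alternatively, one could invoke the size-restricted form of the Pitt--Valiant theorem together with the \NPclass-completeness of size-restricted fitting from Theorem~\ref{thm:main1} and skip the analysis of the reduction). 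The remaining points — the polynomial-time consistency check on $\mathcal A$'s output, checking that $\mathbb P$ meets the restrictions in the PAC definition, and that a fitting concept may be assumed to use only names from $P,N$ — are routine.
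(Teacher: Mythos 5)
Your proposal is correct and follows essentially the same route as the paper: combine the \NPclass-hardness of the fitting problem from Lemma~\ref{lem:fittingforall} with the Pitt--Valiant connection between efficient PAC learning and the consistency/fitting problem, taking care to restrict attention to the class of examples produced by the reduction so that yes-instances admit polynomial-size fittings (the paper calls this class $\Cmf_0$ and speaks of the ``polynomial size fitting property over $\Cmf_0$''). The only difference is one of presentation — you unfold the Pitt--Valiant construction explicitly rather than citing it as a black box — but the decomposition, the key lemma, and the point requiring care are all the same.
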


\begin{proof} Let $\Cmf$ be a class of examples. We say that \Lmc has the \emph{polynomial size fitting property over \Cmf} if any for every sample $P,N$ from this class \Cmf: if there is an \Lmc concept fitting $P,N$, there is one of polynomial size in the size of $P, N$. We also say that $\Lmc$ is \emph{polynomial time evaluable} if given an example $(\Imc,a)$ and $C\in \Lmc$, we can determine in polynomial time whether $a\in C^\Imc$. It is well-known that all fragments \Lmc of $\ALC$ are polynomial time evaluable. 

A classic theorem by Pitt and Valiant states that if \Lmc has the polynomial size fitting property (possibly over a class \Cmf) and is polynomial time evaluable (over \Cmf), then efficient PAC learnability implies that the fitting problem (over \Cmf) is in RP~\cite{PittValiant88}. 

Let us denote with $\Cmf_0$ the class of all interpretations obtained from the
construction in the hardness proof in Lemma~\ref{lem:fittingforall}. It is 
not difficult to see that $\Lmc(\{\forall\})$, $\Lmc(\{\forall,\sqcap\})$, and
$\Lmc(\{\forall,\exists,\sqcap\})$ have the polynomial fitting property over the
class $\Cmf_0$. 

To show the statement from the corollary, let us assume that there is an efficient PAC learning algorithm for 
\Lmc from $\Lmc(\{\forall\})$, $\Lmc(\{\forall,\sqcap\})$, and
$\Lmc(\{\forall,\exists,\sqcap\})$.
By Pitt's and Valiant's theorem, the fitting problem for $\Lmc$ over $\Cmf_0$ is in $\RPclass$. The proof of Lemma~\ref{lem:fittingforall} implies that the fitting problem for \Lmc is $\NPclass$-hard already over that class $\Cmf_0$. Altogether, this implies $\RPclass=\NPclass$.
\qed
\end{proof}

\lemnotsampleefficientdepth* 

\begin{proof}
  It remains to argue that we can choose $n$ large enough such that the probability that a sample of size $m(1/\delta,1/\varepsilon,3,2n+1,n+2)$ does not contain both $(\Imc_w,a_w)$ and $(\Jmc_w,a_w)$ for some $w\in \{r,s\}^n$ is at least~$1-\delta$. Since we know that $m$ is a polynomial, there is a polynomial $p(n)$ such that $p(n)=m(1/\delta,1/\varepsilon,3,2n+1,n+2)$ for all $n$.
 
Let $\mathbb P_n'$ be the uniform distribution over a set of
$2^n$ elements. We observe that the probability of
seeing in a sample of $\mathbb P$ of size $p(n)$ not both $(\Imc_w,a_w)$ and $(\Jmc_w,a_w)$ for some $w\in \{r,s\}^n$ is
greater than the probability of seeing in a sample of size
$p(n)$ over $\mathbb P_n'$ no element twice. 

The probability of sampling $\ell$ different elements from a uniform distribution over an $N$-element set is computed as the following fraction: 
\begin{equation}\label{eq:differentsampling}
\frac{\prod_{i=0}^{\ell-1}(N-i)}{N^\ell}=\frac{N!}{N^\ell\cdot(N-\ell)!}
\end{equation}
Hence, it suffices to show that the limit of the fraction in the right of Equation~\eqref{eq:differentsampling} when we put $N=2^n$ and $\ell=p(n)$ and let $n$ go towards $\infty$ is $1$. This is precisely the statement of~\cite[Lemma~6]{IJCAI23arxiv}.
    \qed
\end{proof}

\lemmostgeneralexistscup*

\begin{proof}
We give the missing argument that $C_0=A\sqcup\bigsqcup_{(\Imc_w,a_w)\notin N} C_w$ is the unique most general $\Lmc(\{\exists,\sqcup\})$ concept fitting $P,N$. Clearly, $C_0$ fits $P,N$. So let $D$ be any $\Lmc(\{\exists,\sqcup\})$ that fits $P,N$. We have to show that $D\sqsubseteq C_0$. 

Consider the syntax trees of some $\Lmc(\{\exists,\sqcup\})$ concept. A \emph{path in the syntax tree} is any sequence $r_1\ldots r_kA$ that can be read by starting at the root of the syntax tree, reading only the role name for $\exists r$-nodes, skipping $\sqcup$-nodes, and finally reading the label of the leaf (which is either a concept name or $\top$). 

It can be verified that $D\sqsubseteq C_0$ holds if and only if all paths in the syntax tree of $D$ are paths in the syntax tree of $C_0$. So consider a path $r_1\ldots r_kA$ in the syntax tree of~$D$. We call $k$ the length of the path. Since $(\Jmc,b_1)\in N$ and $D$ fits $P,N$, the path cannot have length $1,\ldots,n-1,n+1$ (otherwise, it would incorrectly label $(\Jmc,b_1)$). Hence, it can only have length $0$ or length $n$. If it is of length $0$, then it has to have the shape $A$ and this path is also contained in $C_0$, by definition. Otherwise, it is of shape $wA$ with $w\in\{r,s\}^n$. But since $D$ fits $P,N$, we know that $(\Imc_w,a_w)\notin N$. By definition, $C_0$ contains the path $wA$ in this case.
\qed
\end{proof}

It remains to cover the cases $\Lmc(\{\exists,\forall,\sqcap\})$ and 
$\Lmc(\{\exists,\sqcup,\sqcap\})$ as announced in the body of the paper. Let us state the lemmas formally. 

\begin{lemma}\label{lem:mostgeneralelu}
    Let $\Amc$ be a fitting algorithm for $\Lmc(\{\exists,\sqcup,\sqcap\})$ that always returns always a most general fitting if one exists. Then $\Amc$ is not a sample-efficient PAC learning algorithm.
\end{lemma}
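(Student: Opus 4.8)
The plan is to lift the proof of Lemma~\ref{lem:mostgeneral-exists-cup} from $\Lmc(\{\exists,\sqcup\})$ to $\Lmc(\{\exists,\sqcup,\sqcap\})$, the only extra work being to show that the constructor $\sqcap$ neither helps the algorithm avoid a negative example nor destroys the existence of a most general fitting. I would assume for contradiction that $\Amc$ is a sample-efficient PAC learning algorithm for $\Lmc(\{\exists,\sqcup,\sqcap\})$ that returns a most general fitting, with associated polynomial $m$, and fix $\varepsilon=\frac17$, $\delta=\frac89$, $\Sigma=\{r,s,A\}$, target concept $C_T=A$, and essentially the interpretations from the proof of Lemma~\ref{lem:mostgeneral-exists-cup} (up to re-parametrising the path length): an acyclic \emph{dense} positive example $(\Imc,a_1)$, which is a path of length $\ell$ with doubled $r$- and $s$-edges and $A$ at both endpoints; a \emph{blocking} negative example $(\Jmc,b_1)$, engineered so that a chain concept $\exists w.A$ is unsatisfied at $b_1$ exactly for $\lvert w\rvert\in\{0,\ell\}$; and, for each $w\in\{r,s\}^\ell$, a pure-path negative example $(\Imc_w,a_w)$ whose $i$-th edge is labelled by the $i$-th letter of $w$ and whose last element satisfies $A$. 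The distribution $\mathbb P$ puts weight $\frac13$ on $(\Imc,a_1)$ and on $(\Jmc,b_1)$ and weight $\frac1{3\cdot2^\ell}$ on each $(\Imc_w,a_w)$. Since every example has size $O(\ell)$, $\lvert\Sigma\rvert=3$ and $\lVert C_T\rVert=1$, the sample size $m(\frac1\delta,\frac1\varepsilon,3,O(\ell),1)$ is polynomial in $\ell$, so I choose $\ell$ so large that with probability at least $1-\delta$ a sample of that size contains both $(\Imc,a_1)$ and $(\Jmc,b_1)$ but misses more than $2^{\ell-1}$ of the $(\Imc_w,a_w)$.

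Fixing such a sample $P=\{(\Imc,a_1)\}$, $N=\{(\Jmc,b_1)\}\cup\{(\Imc_w,a_w)\mid w\in W\}$ with $\lvert W\rvert<2^{\ell-1}$, I call an $\el$-concept a \emph{legal disjunct} if it is satisfied neither at $b_1$ nor at any $a_w$, $w\in W$. Two facts then drive the argument: (a) a most general $\Lmc(\{\exists,\sqcup,\sqcap\})$ fitting for $P,N$ exists and is unique up to $\equiv$; and (b) every such fitting $D$ satisfies $\exists w.A\sqsubseteq D$ for all $w\in\{r,s\}^\ell\setminus W$. Fact (b) is the easy one: for such $w$ the chain $\exists w.A$ is a legal disjunct — it fails at $b_1$ since $\lvert w\rvert=\ell$, and fails at each $a_{w'}$, $w'\in W$, since $w'\neq w$ and the edge labels on the path $\Imc_{w'}$ block following $w$ — hence $D\sqcup\exists w.A$ is again a fitting (adding a disjunct preserves coverage of the one positive example and adds no negative), and maximality of $D$ forces $\exists w.A\sqsubseteq D$, so $D$ labels $a_w$ positively.

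For fact (a) I would put every $\Lmc(\{\exists,\sqcup,\sqcap\})$ concept into disjunctive normal form and note that the disjuncts of any fitting are legal disjuncts while, conversely (as in (b)), every legal disjunct lies below any maximal fitting; so a maximal fitting equals, up to $\equiv$, the disjunction of all legal disjuncts, and the task reduces to bounding the number of legal disjuncts up to $\equiv$. The hard part here will be the bound on quantifier depth: since $\Imc$ is acyclic of depth $\ell$ its unravelling at $a_1$ is a finite $\el$-concept, and the blocking structure of $(\Jmc,b_1)$ ensures that any $\el$-concept whose pattern tries to go deeper than the critical $b$-node is automatically satisfied at $b_1$ (everything beyond that node in $\Jmc$ satisfies every $\el$-concept), so a legal disjunct has quantifier depth at most $\ell$; and over the finite signature $\Sigma$ there are only finitely many $\el$-concepts of quantifier depth at most $\ell$ up to $\equiv$ (a routine induction on depth, reduced conjunctions being antichains in a finite poset). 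Conjunction is precisely what must be controlled for this depth bound: in the pure paths $\Imc_w$ every node has a unique successor and no concept name except the final $A$, so a branching disjunct cannot place a branch there, and being unsatisfied at $a_w$ reduces to a condition on the chains running through the disjunct. Putting it together: $\Amc$ on $P,N$ returns a most general fitting $D$, which by (b) labels positively every $a_w$ with $w\in\{r,s\}^\ell\setminus W$, although all of these are negative examples for $C_T=A$; there are more than $2^{\ell-1}$ such $w$, so $\mathrm{error}_{\mathbb P}(C_T,D)>\frac{2^{\ell-1}}{3\cdot2^\ell}=\frac16>\varepsilon$, contradicting the PAC guarantee.

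I expect fact (a), the existence and uniqueness of a most general fitting, to be the main obstacle: for $\Lmc(\{\exists,\sqcup\})$ it is immediate because disjuncts are chains and only short chains are legal, but with $\sqcap$ a disjunct is a branching $\el$-tree, so one must genuinely argue both that branching cannot help a disjunct dodge a negative example (which is why the $\Imc_w$ are forced to be pure paths) and that, by acyclicity of the positive example together with the blocking gadget $(\Jmc,b_1)$, the legal disjuncts have bounded depth and therefore form a finite set up to $\equiv$. A secondary nuisance is the exact wiring of $(\Jmc,b_1)$ so that precisely the chain lengths $0$ and $\ell$ remain legal; an off-by-one there (the statement in the proof of Lemma~\ref{lem:mostgeneral-exists-cup} reads as if it might be off by one) is harmless but must be stated consistently with the path length used in the $\Imc_w$.
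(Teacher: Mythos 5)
Your approach is genuinely different from the paper's. You try to lift the proof of Lemma~\ref{lem:mostgeneral-exists-cup} (reusing the examples $\Imc$, $\Jmc$, $\Imc_w$ over $\{r,s,A\}$ with target $C_T=A$), whereas the paper instead extends the $\EL$ argument from~\cite{DBLP:conf/ijcai/CateFJL23}: it takes target $C_n = \exists r.(A\sqcap B\sqcap C_{n-1})$ over $\{A,B,r\}$, uses no positive examples and $2^n$ negative examples obtained as singleton \emph{simulation duals} of the tree-shaped concepts in $S_n$. Simulation duals are exactly the tool that makes the most general fitting exist and have a transparent description ($C_H = \bigsqcap_{(\Imc_C',a_C')\in N}C$), which is what your route struggles with.

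That struggle is a genuine gap, and it occurs precisely where you flag the ``main obstacle'': your argument that every legal disjunct has quantifier depth at most~$n$ is false. Your reasoning is that any $\EL$-pattern going deeper than the self-looping node of $\Jmc$ is automatically satisfied at $b_1$; but an $\EL$ concept with a deep conjunct is not forced to be satisfied at $b_1$, because a \emph{shallow} conjunct can still fail there. Concretely, for every $k\geq 1$ the $\EL$ concept
\[
F_k \;=\; \exists r^{\,n-2}.\bigl(\exists r.A \sqcap \exists s.A\bigr)\;\sqcap\;\exists r^{\,n+k}.\top
\]
is a legal disjunct: the first conjunct fails at $b_1$ because the evaluation reaches $b_n$, which does not satisfy $A$, while the second conjunct fails at every $a_w$ because the paths $\Imc_w$ have length only $n$. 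These $F_k$ have quantifier depth $n+k$ and satisfy $F_{k+1}\sqsubsetneq F_k$, so there are infinitely many pairwise non-equivalent legal disjuncts. Your claim that legal disjuncts ``form a finite set up to $\equiv$'' therefore fails, and with it the proposed argument that the disjunction of all legal disjuncts is a well-defined $\Lmc(\{\exists,\sqcup,\sqcap\})$ concept and is the most general fitting. Without a most general fitting for $P,N$, the hypothesis on $\Amc$ imposes no constraint on its output (it could, e.g., return $A$ itself), so the error bound does not follow. You would need to either prove existence of a most general fitting for your examples by a different argument, or redesign the examples; the paper's construction via simulation duals does the latter, which is why it avoids the problem altogether.
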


\begin{lemma}\label{lem:mostgeneralflzero}
    Let $\Amc$ be a fitting algorithm for $\Lmc(\{\exists,\forall,\sqcap\})$ that always returns always a most general fitting if one exists. Then $\Amc$ is not a sample-efficient PAC learning algorithm. 
\end{lemma}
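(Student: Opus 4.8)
The plan is to assume, towards a contradiction, that $\Amc$ is a sample-efficient PAC learner for $\Lmc(\{\exists,\forall,\sqcap\})$ with associated polynomial $m$ that always returns a most general fitting, and to derive a contradiction by adapting the $\EL$ construction behind Theorem~\ref{thm:generalization} for most general fittings. Note first that this cannot simply be inherited from the $\EL$ result: in $\Lmc(\{\exists,\forall,\sqcap\})$ there can be fittings that are strictly more general than every $\EL$ fitting (using $\forall$), so a more general fitting can only make the error worse, and the algorithm is \emph{forced} to return such a concept. Concretely, I would fix small constants $\varepsilon,\delta\in(0,1)$, a signature $\Sigma=\{r,s,A\}$, a target $C_T$ (say $C_T=A$), and a family of interpretations parameterised by $n$: a single positive example $(\Imc,a)$, a ``blocking'' negative example $(\Jmc,b)$ whose deep vertices carry $r$- and $s$-self-loops so that the shape of any short fitting concept is tightly constrained — exactly as $\Jmc$ in the proof of Lemma~\ref{lem:mostgeneral-exists-cup} — and an exponential-sized family of negative examples $(\Imc_w,a_w)$ for $w\in\{r,s\}^n$, each engineered to ``look positive'' to whichever $\Lmc(\{\exists,\forall,\sqcap\})$ concept turns out to be the most general fitting of a sample. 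The distribution $\mathbb P$ would put constant mass on $(\Imc,a)$ and $(\Jmc,b)$ and weight $\Theta(2^{-n})$ on each $(\Imc_w,a_w)$; since $m$ is a polynomial, $n$ can be chosen large enough that a sample of the size dictated by $m$ misses at least $2^{n-1}$ of the $(\Imc_w,a_w)$ with probability at least $1-\delta$, by the same counting as in Lemma~\ref{lem:not_sample_efficient_depth} (appealing to \cite[Lemma~6]{IJCAI23arxiv}).

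The core of the argument is to show that \emph{every} most general $\Lmc(\{\exists,\forall,\sqcap\})$ concept $C_0$ fitting such a sample $P,N$ labels as positive all $(\Imc_w,a_w)$ with $(\Imc_w,a_w)\notin N$; since those are negative for $C_T$, this forces $\mathrm{error}_{\mathbb P}(C_T,C_0)\ge\tfrac16>\varepsilon$, contradicting the PAC guarantee. Because $\{\exists,\forall,\sqcap\}$ has no disjunction, the clean uniqueness argument from Lemma~\ref{lem:mostgeneral-exists-cup} is unavailable; instead I would constrain the maximal fittings using the geometry of the examples in two steps. First, away from a few designated ``branching'' vertices the roles $r,s$ are total functions (or self-loops), and there the extensions of $\forall t.D$ and $\exists t.D$ coincide, so a maximal fitting can be normalised — as in the proof of Proposition~\ref{prop:technplower} — so that the only informative occurrences of $\forall$ are ``blocking'' subconcepts $\forall t.\bot$ sitting at vertices with no $t$-successor. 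Second, since conjunction only shrinks extensions, a \emph{most general} fitting carries no conjunct that is not strictly needed to exclude some sampled negative, and $(\Imc,a)$ and $(\Jmc,b)$ are set up so that separating them requires a fixed ``skeleton'' that is present in every $(\Imc_w,a_w)$; combining the two steps then shows that whatever $\Amc$ returns agrees with $C_0$ on all unseen $(\Imc_w,a_w)$, where it over-generalises. This is the $\Lmc(\{\exists,\forall,\sqcap\})$-analogue of the corresponding fact for $\EL$ from~\cite{DBLP:conf/ijcai/CateFJL23}.

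The main obstacle is the second step. Without disjunction, a most general fitting need not be unique, so the examples must be engineered carefully enough that \emph{all} maximal fittings over-cover the unseen negatives; in particular one has to rule out ``clever'' maximal fittings that splice in a blocking subconcept $\forall t.\bot$, or an $\exists$-conjunct, tailored to exclude one individual unseen $(\Imc_w,a_w)$. Getting the interaction of blocking $\forall$-subconcepts with $\sqcap$ under control — and showing that no such targeted exclusion can be part of a maximal fitting — is where the real work lies; once the maximal fittings are understood the probabilistic part is routine and identical to Lemma~\ref{lem:not_sample_efficient_depth}.
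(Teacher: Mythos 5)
Your proposal correctly identifies the crux — without disjunction, most general fittings need not be unique, and you must control \emph{all} maximal fittings rather than exhibit one — but it then explicitly leaves this step open ("this is where the real work lies"), so as written the proof has a genuine gap. The plan to normalise $\forall$ into $\exists$ via functionality and then reason about blocking subconcepts $\forall t.\bot$ is also not quite right: in the data-instance setting $\bot$ can occur in a fitting even when all roles are total (as $\forall t.\bot$ at a node with no $t$-successor), and conversely a node \emph{with} a $t$-successor can still validate a nontrivial $\forall t.D$; so controlling the interaction of $\forall$ with $\sqcap$ really does require a dedicated construction, not just the "roles are total functions" observation from Proposition~\ref{prop:technplower}.

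The paper resolves the problem by a different construction that restores uniqueness of the most general fitting. It reuses the $\EL$ machinery from~\cite{DBLP:conf/ijcai/CateFJL23}: the target is $C_n=\exists r.(A\sqcap B\sqcap C_{n-1})$, the negatives are $\Sigma$-simulation duals of the $2^n$ concepts in a family $S_n$, and crucially a \emph{single} positive example $(\Imc,a_0)$ is added whose path has, at every level, both a dead-end $r$-successor and an $r$-self-loop successor. This makes $a_\ell$ satisfy no nontrivial $\forall r.E$, which "switches off" $\forall$ along the spine of the fitting. An inductive argument (Claim 2 in the appendix) then shows that any fitting $\Lmc(\{\exists,\forall,\sqcap\})$ concept $D$ is subsumed by an $\EL$ concept $\widehat D$ that also fits, so the $\EL$ uniqueness result (Claim 1) applies and the most general $\Lmc(\{\exists,\forall,\sqcap\})$ fitting is exactly the conjunction $\bigsqcap_{(\Imc_C'',a_C')\in N} C$. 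Your idea of using a positive example to constrain $\forall$ points in the right direction, but the specific $(\Imc,a)$, $(\Jmc,b)$, $(\Imc_w,a_w)$ family you import from Lemma~\ref{lem:mostgeneral-exists-cup} does not obviously have this "universal-killing" property at every depth, and you have not shown that maximal fittings are unique or even that they all behave the same on unseen negatives. To complete your argument you would need either (a) to prove uniqueness of the most general fitting for your example family, or (b) to prove that \emph{every} most general fitting over-covers — and for (b) you would in particular have to rule out a maximal fitting that trades an $\exists$-conjunct for a $\forall t.\bot$ targeted at one unseen $(\Imc_w,a_w)$, which your sketch acknowledges but does not do.
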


The proofs of both Lemmas~\ref{lem:mostgeneralelu} and and~\ref{lem:mostgeneralflzero} are (non-trivial) variations of the proof for the same fact for $\EL=\Lmc(\{\exists,\sqcap\})$~\cite[Theorem~3/Theorem~7]{IJCAI23arxiv}. We require some additional notation and facts which we introduce next. 

Every \EL concept $C$ may be viewed as tree-shaped interpretation $\Imc_C$ in an obvious way,
e.g.\ $C=\exists r . \exists s . A$ results in a path of length $2$ which ends in a node satisfying $A$ and whose edges are labeled with $r$ and $s$ in this order. We denote with $a_C$ the root node of this tree. Let $\Imc_1,\Imc_2$
be interpretations and $\Sigma$ a signature. A \emph{$\Sigma$-simulation}
from $\Imc_1$ to $\Imc_2$ is a relation
$S \subseteq \Delta^{\Imc_1} \times \Delta^{\Imc_2}$ such that for all $(d_1,d_2)\in S$:
  \begin{enumerate}

  \item if $d_1\in A^{\Imc_1}$ with $A \in \Sigma$, then
    $d_2 \in A^{\Imc_2}$;

  \item if $(d_1, e_1) \in r^{\Imc_1}$ with $r \in \Sigma$, there is
    $(d_2,e_2) \in r^{\Imc_2}$ such that $(e_1, e_2) \in S$.

  \end{enumerate}
  
  For $d_1\in\Delta^{\Imc_1}$ and $d_2\in \Delta^{\Imc_2}$, we
  write $(\Imc_1,d_1)\preceq_\Sigma (\Imc_2,d_2)$ if there is a
  $\Sigma$-simulation $S$ from $\Imc_1$ to $\Imc_2$ with
  $(d_1,d_2)\in S$. 

Simulations are important since on the one hand they conveniently characterize evaluation and subsumption in \EL, as defined in the following two well-known statements. 

\begin{lemma}\label{lem:eval-simulation}
For all $\EL$-concepts $C$, interpretations $\Imc$, and $d\in C^{\Imc}$, we have $d\in C^{\Imc}$ iff $(\Imc_C,a_C)\preceq (\Imc,d)$.
\end{lemma}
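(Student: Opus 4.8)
The plan is to prove the lemma by structural induction on the $\EL$ concept $C$ (reading the statement in the natural way: for all interpretations $\Imc$ and all $d\in\Delta^\Imc$, we have $d\in C^\Imc$ iff $(\Imc_C,a_C)\preceq(\Imc,d)$), after fixing precisely what the tree interpretation $\Imc_C$ is. I would take $\Imc_C$ to be the tree obtained by unravelling the syntax of $C$: a concept name $A$ gives a single node in $A^{\Imc_C}$ with no edges; $\top$ gives a single unlabelled node; $\exists r.C'$ gives a fresh root $a_C$ with an $r$-edge into the root $a_{C'}$ of $\Imc_{C'}$; and $C_1\sqcap C_2$ gives the disjoint union of $\Imc_{C_1}$ and $\Imc_{C_2}$ with their roots identified. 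The structural feature I will use throughout is that the subtree of $\Imc_C$ hanging below any node is itself (isomorphic to) $\Imc_{C'}$ for some subconcept $C'$. Here $\preceq$ is simulation over the full signature, which is all that is needed for $\EL$.

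For the base cases: if $C=\top$, then $\{(a_C,d)\}$ is trivially a simulation for every $d$ and $d\in\top^\Imc$ always holds; if $C=A$, then a relation containing $(a_C,d)$ is a simulation exactly when $d\in A^\Imc$ by condition~(1), which matches $d\in A^\Imc\Leftrightarrow d\in C^\Imc$. For $C=\exists r.C'$: if $d\in C^\Imc$ there is $(d,e)\in r^\Imc$ with $e\in C'^\Imc$, so by the induction hypothesis there is a simulation $S'$ from $\Imc_{C'}$ with $(a_{C'},e)\in S'$, and then $S'\cup\{(a_C,d)\}$ is a simulation from $\Imc_C$ (the new root carries no concept-name label, and its unique $r$-successor $a_{C'}$ is matched by $e$); conversely, a simulation $S$ from $\Imc_C$ with $(a_C,d)\in S$ must, by condition~(2) at $a_C$, contain $(a_{C'},e)$ with $(d,e)\in r^\Imc$, and restricting $S$ to the subtree below $a_{C'}$ witnesses $(\Imc_{C'},a_{C'})\preceq(\Imc,e)$, whence $e\in C'^\Imc$ and $d\in C^\Imc$. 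For $C=C_1\sqcap C_2$: by the semantics and the induction hypothesis, $d\in C^\Imc$ iff there are simulations $S_1,S_2$ from $\Imc_{C_1},\Imc_{C_2}$ with the respective roots mapped to $d$; since those roots are glued into $a_C$ in $\Imc_C$, the union $S_1\cup S_2$ is a simulation from $\Imc_C$ containing $(a_C,d)$, and conversely restricting a simulation on $\Imc_C$ to the two subtrees below $a_C$ recovers $S_1$ and $S_2$.

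I expect the only genuine bookkeeping, and hence the main (but routine) obstacle, to be in the conjunction case: one has to check that restricting a simulation to a downward-closed subtree of $\Imc_C$ is again a simulation (immediate, since both simulation conditions are local), and that the union of two simulations sharing a common root pair preserves conditions~(1) and~(2). Everything else follows directly from the semantics of the $\EL$ constructors together with the definition of $\preceq$.
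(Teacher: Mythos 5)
The paper does not prove Lemma~\ref{lem:eval-simulation} at all---it introduces it (together with Lemma~\ref{lem:subsumption-simulation}) as a ``well-known statement'' and simply uses it. There is therefore no proof in the paper against which to compare your argument. That said, your structural induction is the standard textbook proof of this characterization of $\EL$ evaluation by simulations, and it is correct. The two small points worth being explicit about, which you handle adequately, are exactly the ones you flag: (a) restricting a simulation $S$ from $\Imc_C$ to the pairs whose first component lies in a downward-closed part of $\Imc_C$ (e.g.\ the copy of $\Imc_{C_1}$ inside $\Imc_{C_1\sqcap C_2}$, or the subtree under $a_{C'}$ inside $\Imc_{\exists r.C'}$) remains a simulation, because both simulation conditions only look from a node downward, and the glued root $a_C$ satisfies at least the concept names that $a_{C_1}$ satisfies; and (b) the union of two simulations is again a simulation, since condition~(1) is per pair and condition~(2) is existential in the target, so adding pairs never breaks it. With those two observations the conjunction case goes through exactly as you describe, and the rest is immediate from the semantics.
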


\begin{lemma}\label{lem:subsumption-simulation}
For all $\EL$-concepts $C,D$, we have $C\sqsubseteq D$ iff $(\Imc_D,a_D)\preceq (\Imc_C,a_C)$.
\end{lemma}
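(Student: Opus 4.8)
The plan is to reduce the statement to two elementary facts about $\Sigma$-simulations together with the evaluation characterization of Lemma~\ref{lem:eval-simulation}. The two facts are: \textbf{(a)} $\Sigma$-simulations compose, i.e.\ if $S$ is a $\Sigma$-simulation from $\Imc_1$ to $\Imc_2$ and $S'$ a $\Sigma$-simulation from $\Imc_2$ to $\Imc_3$, then $S\circ S'$ (relating $d_1$ to $d_3$ whenever some $d_2$ satisfies $(d_1,d_2)\in S$ and $(d_2,d_3)\in S'$) is a $\Sigma$-simulation from $\Imc_1$ to $\Imc_3$; and \textbf{(b)} for every $\EL$ concept $C$ we have $a_C\in C^{\Imc_C}$. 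Fact (a) is immediate by chasing the two simulation conditions through the composition. Fact (b) follows because the identity relation on $\Delta^{\Imc_C}$ is trivially a $\Sigma$-simulation, so $(\Imc_C,a_C)\preceq(\Imc_C,a_C)$ and Lemma~\ref{lem:eval-simulation} applies; alternatively a one-line structural induction on $C$ works ($C=\exists r.E$ uses the $r$-edge from $a_C$ to $a_E$ and the inductive hypothesis at $a_E$).

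Fix $\Sigma$ to be a signature containing all concept and role names occurring in $C$ or $D$ and read $\preceq$ as $\preceq_\Sigma$ throughout; Lemma~\ref{lem:eval-simulation} remains valid for this possibly larger $\Sigma$, since every node of $\Imc_C$ only carries symbols from the signature of $C$, so that condition~1 and condition~2 of a simulation never refer to the extra symbols. For the ``if'' direction, assume $(\Imc_D,a_D)\preceq_\Sigma(\Imc_C,a_C)$ and let $\Imc$ be arbitrary with $d\in C^{\Imc}$. By Lemma~\ref{lem:eval-simulation} there is a $\Sigma$-simulation from $\Imc_C$ to $\Imc$ relating $a_C$ and $d$; composing the given simulation with it via~(a) produces a $\Sigma$-simulation from $\Imc_D$ to $\Imc$ relating $a_D$ and $d$, and a second appeal to Lemma~\ref{lem:eval-simulation} gives $d\in D^{\Imc}$. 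Hence $C^{\Imc}\subseteq D^{\Imc}$ for every $\Imc$, i.e.\ $C\sqsubseteq D$.

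For the ``only if'' direction, assume $C\sqsubseteq D$ and instantiate the subsumption at the interpretation $\Imc_C$: by~(b) we have $a_C\in C^{\Imc_C}$, hence $a_C\in D^{\Imc_C}$. Applying Lemma~\ref{lem:eval-simulation} to $D$, $\Imc_C$ and $a_C$ yields a $\Sigma$-simulation from $\Imc_D$ to $\Imc_C$ relating $a_D$ and $a_C$, which is exactly $(\Imc_D,a_D)\preceq_\Sigma(\Imc_C,a_C)$.

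The step I expect to require the most care is the signature bookkeeping: one must fix a single $\Sigma$ large enough to contain the symbols of both $C$ and $D$, so that the two invocations of Lemma~\ref{lem:eval-simulation} and the composition in~(a) all refer to the same relation $\preceq_\Sigma$; with that choice there is nothing further to verify, and the proofs of~(a) and~(b) are entirely routine.
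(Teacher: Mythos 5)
Your proof is correct: it is the standard argument, using the evaluation characterization (Lemma~\ref{lem:eval-simulation}) together with the closure of simulations under composition and the reflexivity fact $a_C\in C^{\Imc_C}$, with sensible bookkeeping of the signature. The paper states this lemma as a well-known fact without providing its own proof, so there is nothing to diverge from; your argument is the one a textbook would give.
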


We next introduce the notion of \emph{simulation duals.}
\begin{definition}
Let $(\Imc,a)$ be an example and $\Sigma$ a signature.  A set
$M$ of examples is a \emph{$\Sigma$-simulation dual} of
$(\Imc,a)$ if for all examples $(\Imc', a')$, the following
holds:
\[(\Imc, a) \preceq_\Sigma (\Imc', a') 
  \quad\text{ iff }\quad (\Imc', a') \not\preceq_\Sigma (\Imc'', a'') 
  \text{ for all } (\Imc'',a'') \in M.  \]
\end{definition}
We will rely on the following theorem from~\cite{DBLP:conf/ijcai/CateFJL23}.

\begin{theorem}[{\cite[Theorem~3]{DBLP:conf/ijcai/CateFJL23}}]\label{thm:duals} Given an \EL concept
  $C$ and a signature
  $\Sigma$, a $\Sigma$-simulation dual $M$ of $(\Imc_C,a_C)$
  of size
 $\lVert M \rVert \leq 3\cdot |\Sigma| \cdot \lVert C \rVert^2$
  can be
  computed in polynomial time.
  Moreover, if $\Imc_C$, when viewed as a set, contains only a single $\Sigma$-assertion 
  that mentions~$a_C$, then $M$ is a singleton.
\end{theorem}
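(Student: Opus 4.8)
The plan is to construct $M$ by induction on the tree $\Imc_C$, processed bottom-up: for each node $d$ of $\Imc_C$ write $C_d$ for the $\EL$ subconcept rooted at $d$, and build a set $M_d$ of examples that is a $\Sigma$-simulation dual of $(\Imc_{C_d},a_{C_d})$ (here $\Imc_{C_d}$ is the subtree of $\Imc_C$ rooted at $d$); finally take $M:=M_{a_C}$. As a preprocessing step, project $C$ onto $\Sigma$ by deleting every conjunct that is a concept name outside $\Sigma$ and every conjunct $\exists r.D$ with $r\notin\Sigma$; such conjuncts are invisible to $\Sigma$-simulations, so this changes neither $\preceq_\Sigma$ nor increases $\lVert C\rVert$, and afterwards $\mathrm{sig}(C)\subseteq\Sigma$. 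The engine of the argument is the following ``local'' reformulation, immediate from Lemma~\ref{lem:eval-simulation} and the semantics of $\sqcap$ and $\exists$: if $C_d=A_1\sqcap\cdots\sqcap A_p\sqcap\exists r_1.C_{d_1}\sqcap\cdots\sqcap\exists r_q.C_{d_q}$, then $(\Imc_{C_d},a_{C_d})\not\preceq_\Sigma(\Imc',e)$ holds exactly when either $e\notin A_i^{\Imc'}$ for some $i$, or there is a $j$ with $e'\notin C_{d_j}^{\Imc'}$ for every $r_j$-successor $e'$ of $e$.

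Two gadgets realize these failure modes. A \emph{sink} is a single element carrying every $\Sigma$-concept name and a self-loop for every $\Sigma$-role; every example $\Sigma$-simulates into a sink. For a concept name $A$, the \emph{complement gadget} $N_A$ has a root $b_0$ carrying all $\Sigma$-concept names except $A$ together with an $r$-edge from $b_0$ to a sink for every $\Sigma$-role $r$; one checks directly that $(\Imc',e)\preceq_\Sigma(N_A,b_0)$ iff $e\notin A^{\Imc'}$, and that $(\Imc_{C_d},a_{C_d})\not\preceq_\Sigma(N_A,b_0)$ whenever $A$ holds at $d$. For an existential conjunct, assume inductively that $M_{d_j}$ is a dual of $(\Imc_{C_{d_j}},a_{C_{d_j}})$; the \emph{edge gadget} $\Imc^{*}_j$ has a fresh root $a^{*}_j$ carrying every $\Sigma$-concept name, an $r_j$-edge from $a^{*}_j$ to the root of each example of $M_{d_j}$ (spliced in directly, not recopied), and an $r$-edge from $a^{*}_j$ to a sink for every $\Sigma$-role $r\neq r_j$. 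Using transitivity of $\preceq_\Sigma$ and the inductive hypothesis one shows $(\Imc',e)\preceq_\Sigma(\Imc^{*}_j,a^{*}_j)$ iff every $r_j$-successor of $e$ fails $C_{d_j}$, and that $(\Imc_{C_d},a_{C_d})\not\preceq_\Sigma(\Imc^{*}_j,a^{*}_j)$. I then set $M_d:=\{N_{A_1},\dots,N_{A_p}\}\cup\{\Imc^{*}_1,\dots,\Imc^{*}_q\}$, with $M_d=\emptyset$ when $C_d=\top$.

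That $M_d$ is a dual of $(\Imc_{C_d},a_{C_d})$ now follows from the local reformulation: for ``$\Leftarrow$'', if $(\Imc',e)\preceq_\Sigma(N,b)$ for some $(N,b)\in M_d$ then, since $(\Imc_{C_d},a_{C_d})\not\preceq_\Sigma(N,b)$, transitivity gives $(\Imc_{C_d},a_{C_d})\not\preceq_\Sigma(\Imc',e)$; for ``$\Rightarrow$'', a failing concept name yields the matching complement gadget and a failing existential conjunct yields the matching edge gadget, invoking the inductive hypothesis in the latter case. For the size bound and efficiency, note that the recursion tree is $\Imc_C$ itself, that each example of $M_{d_j}$ sits inside exactly one edge gadget (as $\Imc_C$ is a tree), and that the overhead introduced at a node $d$ --- the complement gadgets, the edge-gadget root, the sinks, and the $r\neq r_j$-edges --- amounts to $O(|\Sigma|(p_d+q_d+1))$ facts; since $\sum_d(p_d+q_d)\le\lVert C\rVert$, the total is $O(|\Sigma|\cdot\lVert C\rVert)$ facts, comfortably within $3|\Sigma|\lVert C\rVert^2$, and everything is computable in polynomial time. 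The ``Moreover'' clause is then immediate: if $a_C$ carries exactly one $\Sigma$-assertion then $p+q=1$ at the top level, so $M_{a_C}$ is a single complement gadget or a single edge gadget.

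The delicate part is not the bookkeeping but designing the gadgets so that simulation \emph{into} them captures \emph{exactly} the intended failure mode while keeping the construction small. For the edge gadget this means, on the one hand, retaining a sink successor for every role other than $r_j$ --- otherwise $\Imc^{*}_j$ is too restrictive and its simulation-down-set is a strict subset of the set it should describe --- and, on the other hand, splicing the already-built dual $M_{d_j}$ into $\Imc^{*}_j$ rather than recopying it: a naive recursion that duplicates child duals multiplies sizes along a root-to-leaf path and blows up exponentially, whereas the in-place splice makes the recursion tree coincide with $\Imc_C$ and keeps the size essentially linear. A minor but necessary preliminary is the projection onto $\Sigma$ together with the separate treatment of $\top$-subconcepts, whose dual is empty.
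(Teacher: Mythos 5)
This theorem is not proved in the present paper; it is imported from \cite{DBLP:conf/ijcai/CateFJL23} and used as a black box, so there is no in-paper proof to compare against. Your argument is nonetheless correct. The local reformulation is precisely what Lemma~\ref{lem:eval-simulation} gives once $C$ is projected onto $\Sigma$, and both gadgets are analyzed correctly. The two points that need (and receive) care are that restricting a simulation into $\Imc^{*}_j$ to the spliced-in copy of some $(\Imc'',a'')\in M_{d_j}$ really is a simulation into $(\Imc'',a'')$, which uses that the splice introduces no edges leaving $\Imc''$, and that $M_{d_j}=\emptyset$ corresponds exactly to $C_{d_j}=\top$, for which the edge gadget has no $r_j$-successor, matching the fact that no element fails $\top$. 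Your size accounting gives $O(|\Sigma|\cdot\lVert C\rVert)$, comfortably within and in fact strictly tighter than the quadratic $3|\Sigma|\lVert C\rVert^2$ that is stated. The quadratic form of the cited bound suggests that the construction in \cite{DBLP:conf/ijcai/CateFJL23} differs from your recursive splicing: there, roughly one example is produced per position of $\Imc_C$, each carrying a copy of the root-to-position path padded with sinks, so path lengths multiply with the number of positions. Your in-place splicing avoids that duplication and hence adds sizes along the tree instead of multiplying them; both routes establish the theorem, yours with a cleaner inductive invariant and a tighter bound.
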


We are now ready to move on to the proofs of Lemmas~\ref{lem:mostgeneralelu} and~\ref{lem:mostgeneralflzero}. Both proofs use  signature $\Sigma=\{A,B,r\}$ and a family of concept names $(C_n)_{n\geq 0}$ inductively defined as follows:
\begin{align*}
C_0 & = \top \\
C_{i+1} & = \exists r.(A\sqcap B\sqcap C_i), \text{ for all $i\geq 0$.}
\end{align*}
We give now first the (easier) proof of Lemma~\ref{lem:mostgeneralelu}, which is rather close to the proof of~\cite[Theorem~3/7]{IJCAI23arxiv}.
\begin{proof}[of Lemma~\ref{lem:mostgeneralelu}]
Let \Amc be a fitting algorithm for $\Lmc(\{\exists,\sqcup,\sqcap\})$ that always returns a most general fitting if one exists, and additionally is a sample-efficient PAC learning algorithm with associated polynomial $m$. Set $\delta=\varepsilon=\frac 12$ and $n$ large enough so that 
\[2^n>4m(1/\delta,1/\varepsilon,3,p(n),3n)\]
for some polynomial $p$ to be determined below. 

As signature we use $\Sigma=\{A,B,r\}$.
As target concept $C_T$ we use $C_T:=C_n$. 
We use only negative examples. To construct them, let us first define a set of \EL concepts $S_n$, inductively as follows:
\begin{align*}
S_0 & = \{\top\} \\
S_{i+1} & = \{\exists r.(\alpha\sqcap C)\mid C\in S_i,\alpha\in \{A,B\}\}, \text{ for $i\geq 0$}
\end{align*}
Clearly, $S_n$ contains $2^n$ \EL concepts, and $C_T\sqsubseteq C$ and $\lVert C \rVert \leq 3n$, for every $C\in S_n$. Consider the example $(\Imc_C,a_C)$ with $C\in S_n$. Note that $\Imc_C$ contains a single assertion that mentions $a_C$. By Theorem~\ref{thm:duals}, $(\Imc_C,a_c)$ has a singleton $\Sigma$-simulation dual $\{(\Imc_C'',a_c')\}$ with $\lVert \Imc_C''\rVert \leq 3\cdot|\Sigma|\cdot 3n$. These duals are negative examples for $C_T$ since $C_T\sqsubseteq C$, using Lemma~\ref{lem:subsumption-simulation} and the definition of duals. 

Let the probability distribution $\mathbb P$ assign probability probability $1/{2^{n}}$ to each $(\Imc_C',a_C')$, $C\in S_n$.

\medskip
\noindent\textit{Claim~1.}
For each $C\in S_n$, $C$ is the most general $\Lmc(\{\exists,\sqcup,\sqcap\})$ concept that fits the negative example $(\Imc_C',a'_C)$.

\medskip
\noindent\textit{Proof Claim~1.} The very same statement with $\EL$ in place of $\Lmc(\{\exists,\sqcup,\sqcap\}$ is stated in the original proof, and we rely on this fact. Let $D$ be an $\Lmc(\{\exists,\sqcup,\sqcap\}$ concept that fits the negative example $(\Imc'_C,a_C')$. We show that $D\sqsubseteq C$, hence $C$ is also the most general $\Lmc(\{\exists,\sqcup,\sqcap\})$ concept that fits the negative example $(\Imc'_C,a_C')$. We can equivalently write $D$ as a disjunction of \EL-concepts $D\equiv D_1\sqcup\ldots\sqcup D_n$. Since $D$ fits negative example $(\Imc'_C,d_C')$, Lemma~\ref{lem:eval-simulation} yields $(\Imc_{D_i},a_{D_i})\not\preceq (\Imc'_C,d_C')$, for all $i$. By definition of duals, we get $(\Imc_C,a_C)\preceq (\Imc_{D_i},a_{D_i})$, for all $i$. Lemma~\ref{lem:subsumption-simulation} yields $ D_i\sqsubseteq C$, for all $i$, and hence $D\sqsubseteq C$.\hfill$\dashv$

\medskip Let now $\emptyset,N$ be a sample of $m(1/\delta,1/\varepsilon,3,p(n),3n)$ examples from $\mathbb P$ (recall that there are no positive examples).

\medskip\noindent\textit{Claim 2.} $C_H=\bigsqcap_{(\Imc_C',a_C')\in N} C$ is the most general $\Lmc(\{\exists,\forall,\sqcap\})$ concept fitting $\emptyset,N$.

\medskip\noindent\textit{Proof of Claim~2.} We first observe that $C_H=\bigsqcap_{(\Imc_C',a_C')\in N} C$ fits $\emptyset,N$, since, by the definition of duals, $(\Imc_C',a_C')$ is a negative example for each such $C$.

We now verify that it is the most general one. Let $D$ be any $\Lmc(\{\exists,\forall,\sqcap\})$ concept that fits $\emptyset,N$. By Claim~1, $D\sqsubseteq C$, for each $(\Imc_{C}',a_C')\in N$. Hence, $D\sqsubseteq C_H$.
\hfill$\dashv$

\medskip Hence, (a concept equivalent to) $C_H$ is output by the algorithm. We claim that $C_H$ labels all negative examples not in the sample wrong. Indeed, $C_H\not\sqsubseteq C'$ for any $C'$ with $(\Imc''_{C'},a_C')\notin N$ (not difficult to see and shown in the original proof). By definition of duals, $(\Imc_{C'}'',a_C')$ is labeled positively by~$C_H$. Since there are $2^{n}$ negative examples, each with probability $1/{2^n}$, the choice of $n$ implies that the error is greater than $\varepsilon=1/2$.
\qed 
\end{proof}

We move to the proof of Lemma~\ref{lem:mostgeneralflzero}. It is a variant of the above proof, but needs much more involved constructions.

\begin{proof}[of Lemma~\ref{lem:mostgeneralflzero}]
Let \Amc be a fitting algorithm for $\Lmc(\{\exists,\forall,\sqcap\})$ that always returns a most general fitting if one exists, and additionally is a sample-efficient PAC learning algorithm with associated polynomial $m$. Set $\delta=\varepsilon=\frac 12$ and $n$ large enough so that 
\[2^n>4m(\frac{1}{\delta},\frac{1}{\varepsilon},3,p(n),3n)\]
for some polynomial $p$ to be determined below. 

As signature we use $\Sigma=\{A,B,r\}$.
As target concept $C_T$ we use $C_T:=C_n$ which is defined inductively as follows:
\begin{align*}
C_0 & = \top \\
C_{i+1} & = \exists r.(A\sqcap B\sqcap C_i), \text{ for all $i\geq 0$.}
\end{align*}
We use a single positive example $(\Imc,a_0)$ and $2^n$ negative examples. The mentioned proof of~\cite[Theorem~3/7]{IJCAI23arxiv} uses the same negative examples, but no positive example. Intuitively, we use the additional positive example to ``switch off'' universal quantification $\forall$.

The positive example $(\Imc, a_0)$ is defined as follows:
\begin{align*}
    \Delta^\Imc & =\{a_i,b_i,c_i\mid 0\leq i\leq n\}\\
    A^\Imc & =B^\Imc = \{a_1,\ldots,a_n\} \\
    r^\Imc &= \{(a_i,a_{i+1}),(a_i,b_i),(a_i,c_i),(c_i,c_i)\mid 0\leq i< n\}
\end{align*}
Intuitively, $\Imc$ is an $r$-path of length $n$ in which each element (except the last $a_n$) has one outgoing $r$-edge to a "dead end" and one outgoing $r$-edge to an $r$-self-loop. It is readily verified that there is no $\Lmc(\{\exists,\forall,\sqcup\})$ concept $D$ such that $a_i\in (\forall r.D)^\Imc$, for $1\leq i< n$. We will rely on this fact below. 

To construct the negative examples, let us first define a set of \EL concepts $S_n$, inductively as follows:
\begin{align*}
S_0 & = \{\top\} \\
S_{i+1} & = \{\exists r.(\alpha\sqcap C)\mid C\in S_i,\alpha\in \{A,B\}\}, \text{ for $i\geq 0$}
\end{align*}
Clearly, $S_n$ contains $2^n$ \EL concepts, and $C_T\sqsubseteq C$ and $\lVert C \rVert \leq 3n$, for every $C\in S_n$. Consider the example $(\Imc_C,a_C)$ with $C\in S_n$. Note that $\Imc_C$ contains a single assertion that mentions $a_C$. By Theorem~\ref{thm:duals}, $(\Imc_C,a_C)$ has a singleton $\Sigma$-simulation dual $\{(\Imc_C'',a_C')\}$ with $\lVert \Imc_C'' \rVert \leq 3\cdot|\Sigma|\cdot 3n$. These duals are negative examples for $C_T$ since $C_T\sqsubseteq C$, using Lemma~\ref{lem:subsumption-simulation} and the definition of duals. Clearly, $(\Imc,a_0)$ is a positive example for $C_T$. We slightly modify $\Imc_C''$ by adding a fresh $r$-successor to every element, calling the resulting interpretation $\Imc'_C$. Note that $(\Imc_C',a_C')$ and $(\Imc_C'',a_C')$ are simulation-equivalent, due to totality of $r^{\Imc_C''}$. So they satisfy precisely the same $\EL$ concepts, and hence all $(\Imc_C',a_C')$  are negative examples.

Let the probability distribution $\mathbb P$ assign probability $1/2$ to (positive example) $(\Imc,a_0)$ and probability $1/{2^{n+1}}$ to each $(\Imc_C',a_C')$, $C\in S_n$.

\medskip
\noindent\textit{Claim~1.}
For each $C\in S_n$, $C$ is the most general $\EL$ concept that fits the negative example $(\Imc_C',a'_C)$.

\smallskip Claim~1 is from the original proof, so we do not prove it here. Instead, we prove another claim that allows us to reuse Claim~1. Its proof relies on the positive example. 

\medskip
\noindent\textit{Claim~2.} 
Let $C\in S_n$ and consider $P=\{(\Imc,a_0)\}$ and $N=\{(\Imc_C',a_C')\}$.
If $D\in\Lmc(\{\exists,\forall,\sqcap\})$ fits $P,N$, then $D\sqsubseteq C$.

\medskip\noindent\textit{Proof of Claim~2.} We inductively show that, for $\ell=0 \ldots n$: 
\begin{itemize}
    \item[$(\ast)$]  if there is some $\Lmc(\{\exists,\forall,\sqcap\})$
    concept $D$ that distinguishes $(\Imc,a_\ell)$ from some $(\Imc_C',d)$, $d\in
    \Delta^{\Imc_C''}$, then there is some \EL concept $\widehat D$ with
    $D\sqsubseteq \widehat D$ that distinguishes $(\Imc,a_\ell)$ from
    $(\Imc_C',d)$.

\end{itemize}

\smallskip
For the inductive start, consider $a_n$. By construction, $a_n$ does not satisfy any $\Lmc(\{\exists,\forall,\sqcap\})$ concept of shape $\forall r.E$. Let now have $D$ a top-level conjunct of shape $\exists r.E$. Clearly, $E$ cannot have top-level conjunct $A/B$. Moreover, $E$ cannot be of shape $\forall r.C$ since any such $E$ is satisfied in $d$, due to the fresh $r$-successors (added when transitioning from $\Imc''_C$ to $\Imc_C'$). Hence, $E$ has top-level conjunct $\exists r.F$, then only $c_n$ can be the witness of $E$. But the only $\Lmc(\{\exists,\forall,\sqcap\})$ concepts satisfied by $c_n$ are equivalent to $\exists r^u.\top$ for some $u \geq 0$. But these are satisfied by $d$ as well, so do not help in distinguishing. It follows that $D$ has a top-level conjunct $A/B$ and this is the distinguishing \EL concept that we claimed to exist. 

\smallskip For the inductive step, consider some $a_\ell$, and let $D\in \Lmc(\{\exists,\forall,\sqcap\})$ be the concept distinguishing $a_\ell$ from $d$ of the form 
\[D=A_1\sqcap \ldots \sqcap A_\ell\sqcap \forall r.D'\sqcap \exists r.D_1\sqcap\ldots\sqcap \exists r.D_k.\]
We analyze the shape of $D$. Clearly, by construction of $\Imc$ and since $(\Imc,a_\ell)$ is a positive example for $D$, we crucially have $D'=\top$ as $a_\ell$ does not satisfy any non-trivial concept of shape $\forall r.E$. Hence, there is some $D_i$ that is satisfied in a successor of $a_\ell$, but in none of the successors of $d$. We distinguish cases:
\begin{itemize}
    \item $D_i$ cannot be of shape $D_i=\forall r.D''$, since any such concept is satisfied at the fresh $r$-successor of $d$ (added when transitioning from $\Imc''_C$ to $\Imc_C'$).
    
    \item If $D_i$ contains some top-level conjunct $A/B$, then the mentioned successor of $a_\ell$ satisfying $D_i$, is $a_{\ell+1}$ and we can take $\widehat D=\exists r. \widehat {D_i}$ where $\widehat {D_i}$ is the \EL concept obtained from the induction hypothesis applied to $a_{\ell+1}$ and any $r$-successor of $d$.
   
    \item If $D_i$ contains no top-level conjunct of shape $A/B$, but a top-level $\exists r.C$, then the mentioned successor of $a_\ell$ cannot be $b_\ell$, since $b_\ell$ does not have any $r$-successor. It can also not be $c_\ell$ since the only $\Lmc(\{\exists,\forall,\sqcap\})$ concepts that are satisfied in $c_\ell$ are equivalent to $\exists r^u.\top$ for some $u \geq 0$ and each such concept is satisfied in some successor of $d$, due to totality of $r^{\Imc''_C}$. Hence, the mentioned successor of $a_\ell$ satisfying $D_i$, is $a_{\ell+1}$, and it remains to apply induction as in the previous case. 

\end{itemize}

This finishes the proof of Statement~$(\ast)$. To see the statement from the claim, let $D\in\Lmc(\{\exists,\forall,\sqcap\})$ fit $P,N$. By~$(\ast)$, there is an \EL concept $\widehat D$ that fits $P,N$ and such that $D\sqsubseteq \widehat D$. By Claim~1, $\widehat D\sqsubseteq C$, hence also  $D\sqsubseteq C$ as required.
\hfill$\dashv$

\medskip Let now $P,N$ be a sample of $m(1/\delta,1/\varepsilon,3,p(n),3n)$ examples sampled from $\mathbb P$. With probability at least $1-\delta=1/2$, $P$ contains the (only) positive example $(\Imc,a_0)$. 

\medskip\noindent\textit{Claim 3.} $C_H=\bigsqcap_{(\Imc_C'',a_C')\in N} C$ is the most general $\Lmc(\{\exists,\forall,\sqcap\})$ concept fitting $P,N$.

\medskip\noindent\textit{Proof of Claim~3.} We first verify that $C_H=\bigsqcap_{(\Imc_C'',a_C')\in N} C$ fits $P,N$. By construction of \Imc, $(\Imc,a_0)$ is a positive example for each $C\in S_n$. Moreover, by the definition of duals, $(\Imc_C'',a_C')$ is a negative example for each such $C$.

We now verify that it is the most general one. Let $D$ be any $\Lmc(\{\exists,\forall,\sqcap\})$ concept that fits $P,N$. By Claim~1 and Claim~2 together, $D\sqsubseteq C$, for each $(\Imc_{C}'',a_C')\in N$. Hence, $D\sqsubseteq C_H$.
\hfill$\dashv$

\medskip Hence, (a concept equivalent to) $C_H$ is output by the algorithm. We claim that $C_H$ labels all negative examples not in the sample wrong. Indeed, $C_H\not\sqsubseteq C'$ for any $C'$ with $(\Imc''_{C'},a_C')\notin N$ (not difficult to see and shown in the original proof). By definition of duals, $(\Imc_{C'}'',a_C')$ is labeled positively by~$C_H$. Since there are $2^{n}$ negative examples, each with probability $1/{2^{n+1}}$, the choice of $n$ implies that the error is greater than $\varepsilon=1/2$.
\qed 
\end{proof}

\lemboundedfittingexistsor*

\begin{proof}
We prove the statement for $\Lmc(\{\exists,\sqcup\})$, the other part is dual, due to Lemma~\ref{lem:fitting_dual}. Let $P,N$ be sets of positive and negative examples and let $C$ be an $\Lmc(\{\exists,\sqcup\})$ concept of minimal size fitting $P,N$. Such concept is returned by bounded fitting. Due to minimality of $C$, we can assume that
\begin{itemize}
    \item[$(\ast)$] $C$ does not have a subconcept of shape $\exists r.C_1\sqcup \exists r.C_2$,
\end{itemize}
as it could be equivalently replaced by the smaller concept $\exists r.(C_1\sqcup C_2)$.

Let $D$ be any concept at least as specific than $C$, that is, $D\sqsubseteq C$. Without loss of generality, we can assume that also $D$ satisfies~$(\ast)$. Consider the syntax trees of $C$ and $D$. A \emph{path in a syntax tree} is any sequence $r_1\ldots r_kA$ that can be read by starting at the root of the syntax tree, reading only the role name for $\exists r$-nodes, skipping $\sqcup$-nodes, and finally reading the label $A/\top$ of the leaf. 

It can be verified that $D\sqsubseteq C$ holds if and only if all paths in the syntax tree of $D$ are paths in the syntax tree of $C$. As $D\sqsubseteq C$ by assumption, all paths in $D$ are paths in $C$. Based on the fact that $C$ and $D$ satisfy Property~$(\ast)$ above, one can then
show that there is an injective homomorphism (which is just a mapping that preserves node labels) from the syntax tree of $D$ into the syntax tree of $C$. But this means that the syntax tree of $D$ is at most as large as the syntax tree of $C$. Equality is only the case if $C\equiv D$.

Hence, if $D$ would also fit $P,N$, then bounded fitting would have returned $D$ instead of $C$, a contradiction.\qed

\end{proof}

\end{document}